\documentclass{article}




     \usepackage[preprint]{neurips_2024}

\usepackage[utf8]{inputenc} 
\usepackage[T1]{fontenc}    
\usepackage{url}            
\usepackage{booktabs}       
\usepackage{nicefrac}       
\usepackage{microtype}      
\usepackage{amsfonts,amsmath,amssymb}
\usepackage{physics,mathtools,bm,mathrsfs,mleftright}
\usepackage{xcolor,enumerate,autobreak}
\allowdisplaybreaks[4]
\usepackage{wrapfig,subfigure,multirow}
\usepackage{hyperref,prettyref}
\usepackage{natbib}

\usepackage{tikz}

\usepackage{amsthm}
\theoremstyle{plain}
\newtheorem{theorem}{Theorem}[section]
\newtheorem{lemma}[theorem]{Lemma}
\newtheorem{corollary}[theorem]{Corollary}

\theoremstyle{definition}
\newtheorem{proposition}[theorem]{Proposition}

\newtheorem{example}{Example}[section]
\newtheorem{assumption}{Assumption}

\newrefformat{eq}{\textup{(\ref{#1})}}
\newrefformat{lem}{Lemma~\textup{\ref{#1}}}
\newrefformat{thm}{Theorem~\textup{\ref{#1}}}
\newrefformat{cor}{Corollary~\textup{\ref{#1}}}
\newrefformat{prop}{Proposition~\textup{\ref{#1}}}
\newrefformat{assu}{Assumption~\textup{\ref{#1}}}
\newrefformat{remark}{Remark~\textup{\ref{#1}}}
\newrefformat{example}{Example~\textup{\ref{#1}}}
\newrefformat{def}{Definition~\textup{\ref{#1}}}
\newrefformat{sec}{Section~\textup{\ref{#1}}}
\newrefformat{subsec}{Subsection~\textup{\ref{#1}}}
\newrefformat{subsubsec}{Subsection~\textup{\ref{#1}}}

\newcommand{\ep}{\varepsilon}
\newcommand{\R}{\mathbb{R}}
\newcommand{\E}{\mathbb{E}}

\newcommand{\caH}{\mathcal{H}}

\newcommand{\caL}{\mathcal{L}}

\newcommand{\caR}{\mathcal{R}}

\newcommand{\caX}{\mathcal{X}}

\newcommand{\bbZ}{\mathbb{Z}}

\newcommand{\bbP}{\mathbb{P}}

\newcommand{\bbT}{\mathbb{T}}

\newcommand{\T}{\top}

\newcommand{\hf}{\frac{1}{2}}

\newcommand{\mr}{\mathrm}
\providecommand{\ang}[1]{\left\langle{#1}\right\rangle}

\newcommand{\xk}[1]{\left(#1\right)}
\newcommand{\zk}[1]{\left[#1\right]}
\newcommand{\dk}[1]{\left\{#1\right\}}

\providecommand{\cref}{\prettyref}

\title{
  Improving Adaptivity via Over-Parameterization in Sequence Models
}

%

\author{Yicheng Li \\
Department of Statistics and Data Science \\ Tsinghua University, Beijing, China \\
\texttt{liyc22@mails.tsinghua.edu.cn} \\
\And
Qian Lin \thanks{Corresponding author Qian Lin also affiliates with Beijing Academy of Artificial Intelligence, Beijing, China} \\
Department of Statistics and Data Science \\ Tsinghua University, Beijing, China \\
\texttt{qianlin@tsinghua.edu.cn}
}

\begin{document}

  \maketitle

  \begin{abstract}
    It is well known that eigenfunctions of a kernel play a crucial role in kernel regression.
    Through several examples, we demonstrate that even with the same set of eigenfunctions, the order of these functions significantly impacts regression outcomes.
    Simplifying the model by diagonalizing the kernel, we introduce an over-parameterized gradient descent in the realm of sequence model to capture the effects of various orders of a fixed set of eigen-functions.
    This method is designed to explore the impact of varying eigenfunction orders.
    Our theoretical results show that the over-parameterization gradient flow can adapt to the underlying structure of the signal and significantly outperform the vanilla gradient flow method.
    Moreover, we also demonstrate that deeper over-parameterization can further enhance the generalization capability of the model.
    These results not only provide a new perspective on the benefits of over-parameterization and but also offer insights into the adaptivity and generalization potential of neural networks beyond the kernel regime.
  \end{abstract}

  \section{Introduction}\label{sec:introduction}

In recent years, the remarkable success of neural networks in a wide array of machine learning applications has spurred a search for theoretical frameworks capable of explaining their efficacy and efficiency.
One such framework is the Neural Tangent Kernel (NTK) theory~(see, e.g., \citet{jacot2018_NeuralTangent,allen-zhu2019_ConvergenceTheory}),
which has emerged as a pivotal tool for understanding the dynamics of neural network training in the infinite-width limit.
The NTK theory posits that the training dynamics of wide neural networks can be closely approximated by a kernel gradient descent method with the corresponding NTK,
elucidating their convergence behaviors during gradient descent and shedding light on their generalization capabilities.
Parallel to this, an extensive literature on kernel regression~(see, e.g., \citet{bauer2007_RegularizationAlgorithms,yao2007_EarlyStopping}) has studied its generalization properties, showing its minimax optimality under certain conditions and providing insights into the bias-variance trade-off.
Thus, one can almost fully understand the generalization properties of neural networks in the NTK regime by analyzing the kernel regression method.

However, the application of NTK theory to analyze neural networks, while invaluable, essentially frames the problem within a traditional statistical method by a fixed kernel.
The NTK analysis, by its reliance on the fixed kernel approximation, can not entirely account for the adaptability and flexibility exhibited by neural networks, particularly those of finite width that deviate from the theoretical infinite-width limit~\citep{woodworth2020_KernelRich}.
Moreover, empirical evidence~\citep{wenger2023_DisconnectTheory,seleznova2022_AnalyzingFinite} also suggests that the assumption of a constant kernel during training, a cornerstone of NTK analysis, may not hold in practical scenarios where the network architecture or initialization conditions foster a dynamic evolution of the kernel.
Also, \citet{gatmiry2021_OptimizationAdaptive} showed the benefits brought by the adaptivity of the kernel on a three-layer neural network.
These results underscore the need for a more nuanced understanding of neural network training dynamics, one that considers the intricate interplay between network architecture, initialization, and the optimization process beyond the simplifications of NTK theory.

Recently, another branch of research has focused on the over-parameterization nature of neural networks beyond the NTK regime,
exploring how over-parameterization can lead to implicit regularization and even improve the generalization.
In terms of training dynamics, studies~(\citet{hoff2017_LassoFractional,gunasekar2017_ImplicitRegularization,arora2019_ImplicitRegularization,kolb2023_SmoothingEdges}, etc.)
in this domain have revealed that over-parameterized models, particularly those trained with gradient descent and its variants,
exhibit biases towards simpler, more generalizable functions, even in the absence of explicit regularization terms.
Moreover, in terms of generalization, recent works~\citep{vaskevicius2019_ImplicitRegularization,zhao2022_HighdimensionalLinear,li2021_ImplicitSparse}
have shown that in the setting of high dimensional linear regression,
over-parameterized models with proper initialization and early stopping can achieve minimax optimal recovery under certain conditions.
These results underscore the potential and benefits of over-parameterized models that go beyond the traditional statistical paradigms.

%

In this work, we will incorporate the insights from the kernel regression and the over-parameterization theory to investigate how over-parameterization
can improve generalization and also adaptivity under the non-parametric regression framework.
As a first step towards this direction,
we will focus on the sequence model, which is an approximation of a wide spectrum of non-parametric models including kernel regression.
We will show that, by dynamically adapting to the underlying structure of the signal during the training process,
over-parameterization method with gradient descent can significantly improve the generalization properties compared with the fixed-eigenvalues method.
We believe that our results provide a new perspective on the benefits of over-parameterization and offer insights into the adaptivity and generalization properties of neural networks beyond the NTK regime.


\subsection{Our contributions}

\paragraph{Limitations of the (fixed) kernel regression.} In this work, we first investigate the limitations of the (fixed) kernel regression method by specific examples,
illustrating that the traditional kernel regression method suffers from the misalignment between the kernel and the truth function.
We show that even when the eigen-basis of the kernel is fixed, the associated eigenvalues, particularly their alignment with the truth function's coefficients in the eigen-basis, can significantly affect the generalization properties of the method.

\paragraph{Advantages of over-parameterized gradient descent.}
Focusing on the alignment between the kernel's eigenvalues and the truth signal (the truth function's coefficients),
we consider the sequence model and introduce an over-parameterization method \cref{eq:GradientFlow2}  that can dynamically adjust the eigenvalues during the learning process.
We show that with proper early-stopping, the over-parameterization method can achieve nearly the oracle convergence rate regardless of the underlying structure of the signal, significantly outperforming the vanilla fixed-eigenvalues method when the misalignment is severe.
In addition, the over-parameterization method is also adaptive by its universal choice of the stopping time, which is independent of the signal's structure.

\paragraph{Benefits of deeper parameterization.}
Moreover, we also consider deeper over-parameterization \cref{eq:GradientFlowD} and explore how depth affects the generalization properties of the over-parameterization method.
Our results show that adding depth can further ease the impact of the initial choice of the eigenvalues, thus improving the generalization capability of the model.
We also provide numerical experiments to validate our theoretical results in \cref{sec:numerical-experiments}.

%
%
%

\subsection{Notations}
We denote by $\ell^2 = \dk{(a_j)_{j\geq 1} \mid \sum_{j\geq 1}a_j^2 < \infty}$ the space of square summable sequences.
We write $a \lesssim b$ if there exists a constant $C>0$ such that $a \leq Cb$ and
$a \asymp b$ if $a \lesssim b$ and $b \lesssim a$,
where the dependence of the constant $C$ on other parameters is determined by the context.

  \section{Limitations of Fixed Kernel Regression}\label{sec:kernel-regression}

Let us consider the non-parametric regression problem given by $y = f^*(x) + \ep$,
where $\ep$ is the noise with mean zero and variance $\sigma^2$, $x \in \caX$ and $\caX$ is the input space with $\mu$ being a probability measure supported on $\caX$.
The function $f^*(x)$ represents the unknown regression function we aim to learn.
Suppose we are given samples $\{(x_i,y_i)\}_{i=1}^n$, drawn i.i.d.\ from the model.
We denote $X = (x_1,\dots,x_n)^\T$ and $Y = (y_1,\dots,y_n)^\T$.

Let $k: \caX \times \caX \to \R$ be a continuous positive definite kernel and $\caH_k$ be its associated reproducing kernel Hilbert space (RKHS).
The well-known Mercer's decomposition~\citep{steinwart2012_MercerTheorem} of the kernel function $k$ gives
\begin{align}
  \label{eq:Mercer}
  k(x,y) = \sum_{j=1}^\infty \lambda_j e_j(x) e_j(y),
\end{align}
where $(e_j)_{j \geq 1}$ is an orthonormal basis of $L^2(\caX,\dd \mu)$, and $(\lambda_j)_{j \geq 1}$ are the eigenvalues of $k$ in descending order.
Moreover, we can introduce the feature map $\Phi(x) = (\lambda_j^{\hf} e_j(x))_{j \geq 1} : \caX \to \ell^2$ (as a column vector)
such that $k(x,x') = \ang{\Phi(x),\Phi(x')}$.
With the feature map, a function $f \in \caH_k$ can be represented as $f(x) = \ang{\Phi(x),\beta}_{\ell^2}$ for some $\beta \in \ell^2$.

Defining the empirical loss as $L = \frac{1}{2n}\sum_{i=1}^n (y_i - f(x_i))^2$,
we can consider an estimator $f_t = \ang{\Phi(x),\beta_t}_{\ell^2}$ governed by the following gradient flow on the feature space
\begin{align}
  \label{eq:KernelGD_RKHS}
  \dot{\beta}_t = - \nabla_{\beta} L =  \frac{1}{n} \sum_{i=1}^n (y_i - \ang{\Phi(x_i),\beta_t}_{\ell^2}) \Phi(x_i),
  \qq{where} \beta_0 = \bm{0}.
\end{align}
This kernel gradient descent (flow) estimator corresponds to neural networks at infinite width limit by the celebrated neural tangent kernel (NTK) theory~\citep{jacot2018_NeuralTangent,allen-zhu2019_ConvergenceTheory}.

An extensive literature~\citep{yao2007_EarlyStopping,lin2018_OptimalRatesa,li2024_GeneralizationError}
has studied the generalization performance of such kernel gradient descent estimator.
From the Mercer's decomposition, we can further introduce interpolation spaces for $s \geq 0$ as
\begin{align}
  \label{eq:RKHS_Expansion}
  \zk{\caH_k}^s \coloneqq \Big\{ \sum_{j =1}^\infty \beta_j \lambda_j^{\frac{s}{2}} e_j \;\big|\; (\beta_j)_{j \geq 1} \in \ell^2 \Big\},
\end{align}
which is equipped with the norm $\norm{f}_{[\caH_k]^s} = \norm{\bm{\beta}}_{\ell^2}$ for $f = \sum_{j =1}^\infty \beta_j \lambda_j^{\frac{s}{2}} e_j$.
Particularly, the interpolation space $[\caH_k]^1$ corresponds to the RKHS $\caH_k$ itself.
Then, assuming the eigenvalue decay rate $\lambda_j \asymp j^{-\gamma}$,
the standard results~(see, e.g., \citet{yao2007_EarlyStopping,li2024_GeneralizationError}) in kernel regression state that
the optimal rate of convergence under the source condition $f^* \in [\caH_k]^s$ with $\norm{f^*}_{[\caH_k]^s} \leq 1$ is
$n^{-\frac{s\gamma}{s\gamma+1}}$.
However, since the interpolation space $[\caH_k]^s$ is defined via the eigen-decomposition of the kernel,
the generalization performance of kernel regression methods is ultimately related to the eigen-decomposition of the kernel and the decomposition of the target function under the basis,
so the performance is intrinsically limited by the relation between the target function and the kernel itself.
In other words, the choice of the kernel could affect the performance of the method.
To demonstrate this quantitatively, let us consider the following examples.

\begin{example}[Eigenfunctions in common order]
  \label{example:common-eigenfunctions}
  It is well known that kernels possessing certain symmetries, such as dot-product kernels on the sphere or translation-invariant periodic kernels on the torus,
  share the same set of eigenfunctions (such as the spherical harmonics or the Fourier basis).
  If we consider a fixed set of eigenfunctions $\{e_j\}_{j\geq 1}$ and a given truth function $f^*$,
  for two kernels $k_1$ and $k_2$ with eigenvalue decay rates $\lambda_{j,1} \asymp j^{-\gamma_1}$ and $\lambda_{j,2} \asymp j^{-\gamma_2}$ respectively,
  it follows that:
  \begin{align*}
    f^* \in [\caH_{k_1}]^{s_1} \Longleftrightarrow f^* \in [\caH_{k_2}]^{s_2} \qq{for} \gamma_1 s_1 = \gamma_2 s_2.
  \end{align*}
  Given that the convergence rate is dependent solely on the product $s \gamma$, the convergence rates relative to the two kernels will be identical.
\end{example}

\cref{example:common-eigenfunctions} seems to show that when the eigenfunctions are fixed,
kernel regression methods yield similar performance across different kernels.
However, it's important to note that this similarity is due to both kernels having \textit{the same eigenvalue decay order},
which aligns with the predetermined order of the basis.
In fact, if the eigenvalue decay order of a kernel deviates from that of the true function,
even if the eigenfunction basis remain the same, it can lead to significantly different convergence rates.
Let us consider the following example to illustrate this point.

\begin{example}[Low-dimensional structure]
  \label{example:low-dim-structure}
  Consider translation-invariant periodic kernels on the torus $\mathbb{T}^d = [-1,1)^d$ with the uniform distribution.
  Then, their eigenfunctions are given by the Fourier basis $\phi_{\bm{m}}(x) = \exp(i\pi \ang{\bm{m},x})$, $\bm{m} \in \mathbb{Z}^d$.
  Within this basis, a target function $f^*(x)$ can be represented as:
  \begin{align*}
    f^* = \sum_{\bm{m} \in \mathbb{Z}^d} f_{\bm{m}} \phi_{\bm{m}}(x).
  \end{align*}
  Assuming $f^*$ exhibits a low-dimensional structure, specifically $f^*(x) = g(x_1,\ldots,x_{d_0})$ for some $d_0 < d$,
  and considering $g$ belongs to the Sobolev space $H^t(\mathbb{T}^{d_0})$ of order $t$, the coefficients $f_{\bm{m}}$ can be shown to simplify to:
  \begin{align*}
    f_{\bm{m}} =
    \begin{cases}
      g_{\bm{m}_1}, & \bm{m} = (\bm{m}_1,\bm{0}),~ \bm{m}_1 \in \mathbb{Z}^{d_0}, \\
      0, & \text{otherwise}.
    \end{cases}
  \end{align*}
  Let us now consider two translation-invariant periodic kernels $k_1$ and $k_2$ given in terms of their eigenvalues:
  $k_1$ is given by $\lambda_{\bm{m},1} = (1 + \norm{\bm{m}}^2)^{-r}$ for some $r > d/2$,
  whose RKHS is the full-dimensional Sobolev space $H^r(\bbT^d)$;
  $k_2$ is given by $\lambda_{\bm{m},2} = (1 + \norm{\bm{m}}^2)^{-r}$ for $\bm{m} = (\bm{m}_1,\bm{0})$ and $\lambda_{\bm{m},2} = 0$ otherwise.
  Then, the function $f^*$ belongs to both $[\mathcal{H}_{k_1}]^{s}$ and $[\mathcal{H}_{k_2}]^{s}$ for $s= t / r$.
  After reordering the eigenvalues in descending order,
  the decay rates for the two kernels are identified as $\gamma_1 = 2r/d$ and $\gamma_2 = 2r/d_0$.
  Thus, the convergence rates with respect to the two kernels are respectively:
  \begin{align*}
    \frac{2t}{2t + d} \qand \frac{2t}{2t + d_0}.
  \end{align*}
  Therefore, we see that when $d$ is significantly larger than $d_0$, the convergence rate for the second kernel notably surpasses that of the first.


\end{example}

This example illustrates that the eigenvalues can significantly impact the learning rate, even when the eigenfunctions are the same.
In the scenario presented, the second kernel benefits from the low-dimensional structure of the target function by focusing only on the relevant dimensions,
whereas the first one suffers from the curse of dimensionality since it considers all dimensions.
The key point to take away from this example is the \textit{alignment between the kernel and the target function}.
To generalize this example, we can consider the following example where the order of the eigenvalues does not align with the order of the target function's coefficients.

\begin{example}[Misalignment]
  \label{example:misalignment}

  Let us fix a set of the eigenfunctions $\xk{e_j}_{j\geq 1}$
  and expand the truth function as $f^* = \sum_{j \geq 1} \theta_j^* e_j$.
  Note that by giving $\xk{e_j}_{j\geq 1}$, we already defined an order of the basis in $j$,
  but coefficients $\theta_j^*$ of the truth function are not necessarily ordered by $j$.
  Suppose that an index sequence $\ell(j)$ gives the descending order of $\abs{\theta_{\ell(j)}^*}$.
  Then we can characterize the misalignment by the difference between $\ell(j)$ and $j$.
  Specifically, we assume that
  \begin{align}
    \label{eq:GappedDecay}
    \abs{\theta_{\ell(j)}^*} \asymp j^{-(p+1)/2} \qq{and} \ell(j) \asymp j^q \qq{for} p > 0,~q \geq 1,
  \end{align}
  where larger $q$ indicates a more severe misalignment.
  In terms of eigenvalues, let us consider $\lambda_{j,1} \asymp j^{-\gamma}$, which is in the order of $j$,
  while $\lambda_{\ell(j),2} \asymp j^{-\gamma}$, which is in the order of $\ell(j)$.
  Then, the convergence rates with the two sequences of coefficients are respectively
  \begin{align*}
    \frac{p}{p + q} \qand \frac{p}{p + 1}.
  \end{align*}
  Therefore, the convergence rates can differ greatly if the misalignment is significant, namely when $q$ is large.
\end{example}

From \cref{example:low-dim-structure} and \cref{example:misalignment},
we find that it is beneficial that \textit{the eigenvalues of the kernel align with the structure of the target function}.
However, one can hardly choose the proper kernel a priori, especially when the structure of the target function is unknown,
so the fixed kernel regression can be limited by the kernel itself and be unsatisfactory.
Motivated by these examples, we would like to explore the idea of an ``adaptive kernel approach,''
where the kernel can be learned from the data.

\section{Adapting the Eigenvalues by Over-parameterization in the Sequence Model}\label{sec:sequence-model}

Motivated by the examples in the last section,
as a first step toward the adaptive kernel approach, we consider \textit{adapting the eigenvalues of the kernel with eigenfunctions fixed}.
To simplify the analysis, we would like to the following sequence model,
which captures the essences of many statistical models~\citep{brown2002_AsymptoticEquivalence,johnstone2017_GaussianEstimation}.

\paragraph{The sequence model}
Let us consider the sequence model~\citep{johnstone2017_GaussianEstimation}
\begin{align}
  \label{eq:SeqModel}
  z_j = \theta_j^* + \xi_j,\quad j \geq 1
\end{align}
where $\xk{z_j}_{j\geq 1}$ is the observation, $\bm{\theta}^* = (\theta_j^*)_{j \geq 1} \in \ell^2$ is a sequence of unknown truth parameters
and $\xi_j,~j\geq 1$ are (not necessarily independent) $\epsilon^2$-sub-Gaussian random variables with mean zero and variance at most $\epsilon^2$.
For any estimator $\hat{\bm{\theta}} = (\hat\theta_j)_{j \geq 1}$, the generalization error is measured by
$\caR(\hat{\bm{\theta}};\bm{\theta}^*) = \sum_{j = 1}^\infty  (\hat\theta_j - \theta_j^*)^2$.
Under the asymptotic framework, we are often interested in the behavior of the generalization error as $\epsilon \to 0$.
Here, we note that the connection between non-parametric regression and the sequence model yields $\epsilon^2 \asymp n^{-1}$.

To see the connection between the sequence model and the non-parametric regression model,
we first write the gradient flow \cref{eq:KernelGD_RKHS} in the RKHS in the matrix form as
\begin{align*}
  \dot{\beta}_t = - \nabla_{\beta}\caL = - \frac{1}{n} \Phi(X)\Phi(X)^{\T} \beta_t + \frac{1}{n}\Phi(X) \bm{y},
\end{align*}
where the feature matrix $\Phi(X) = (\Phi(x_1),\dots,\Phi(x_n))_{\infty \times n}$ and $\bm{y} = (y_1,\dots,y_n)^\T$.
Now, since the eigenfunctions $(e_j)_{j \geq 1}$ are fixed, intuitively,
the gradient flow can be diagonalized in the eigen-basis since $\frac{1}{n} \Phi(X)\Phi(X)^{\T} \approx \Lambda = \mr{diag}(\lambda_1,\lambda_2,\dots)$
and the noise components are approximately normal with variance $\sigma^2/n$ by the central limit theorem.
Thus, we reach the sequence model.
We refer to \cref{subsec:Connection_Sequence} for a more detailed explanation of the connection between the sequence model and the kernel regression model.



Regarding the power series expansion \cref{eq:RKHS_Expansion} in RKHS,
for a sequence $(\lambda_j)_{j \geq 1}$ of descending positive numbers (e.g., $\lambda_j = j^{-\gamma}$),
we can consider similarly the parameterization $\theta_j = \lambda_j^{\hf}\beta_j$, $j\geq 1$ in $\ell^2$.
Since $(\lambda_j)_{j \geq 1}$ corresponds to the eigenvalues of the kernel in the kernel regression,
here we also refer to $(\lambda_j)_{j \geq 1}$ as the \textit{``eigenvalues''} with a little abuse of terminology.

With the component-wise loss function $L_j(\theta_j) = \frac{1}{2}(\theta_j - z_j)^2,$
we can apply a gradient descent (gradient flow) with early stopping to derive a component-wise estimator $\hat{\theta}_j$.
If we directly parameterize $\theta_j = \lambda_j^{\hf}\beta_j$ with only $\beta_j$ trainable, we obtain the vanilla gradient descent method,
which is just the diagonalized version of the kernel gradient descent.
The estimator is simply given by $\hat{\theta}_j = (1-e^{-\lambda_j t})z_j$, where $t$ is the stopping time,
and its generalization error is easily computed as
\begin{align}
  \E \caR(\hat{\bm{\theta}}^{\mr{GF}};\bm{\theta}^*) = B_{\mr{GF}}^2(t;\bm{\theta}^*) + \epsilon^2 V_{\mr{GF}}(t)
  = \sum_{j=1}^\infty \xk{e^{-\lambda_j t} \theta_j^*}^2 + \epsilon^2 \sum_{j=1}^\infty \xk{1-e^{-\lambda_j t}}^2.
\end{align}
We note here that these quantities also correspond to generalization error in the (fixed) kernel regression setting~\citep{li2024_GeneralizationError}.
In particular, under the setting of \cref{eq:GappedDecay} and $\lambda_j \asymp j^{-\gamma}$, by choosing $t \asymp \epsilon^{-\frac{2q\gamma}{p+q}}$,
we obtain the convergence rate $\epsilon^{\frac{2p}{p+q}}$,
which is far from optimal if $q$ is large.

\subsection{Over-parameterized gradient descent}

By the discussion in the previous section, we find it essential to adjust the eigenvalues beyond the fixed ones $(\lambda_j)_{j \geq 1}$.
Inspired by the over-parameterization nature of neural networks,
we can also consider over-parameterization with gradient descent in our sequence model to train the eigenvalues:
Replacing $\lambda_j^{1/2}$ with trainable parameter $a_j$, let us parameterize
\begin{align}
  \label{eq:OpModel}
  \theta_j = a_j \beta_j,
\end{align}
where $a_j$ aims to learn the eigenvalues and $\beta_j$ aims to learn the signal.
We consider the following gradient flow (simultaneously for each component $j$):
\begin{align}
  \label{eq:GradientFlow2}
  \begin{aligned}
    \dot{a}_j &= -\nabla_{a_j} L_j, \quad   \dot{\beta}_j = -\nabla_{\beta_j} L_j,\\
    & a_j(0) = \lambda_j^{1/2}, \quad \beta_j(0) = 0.
  \end{aligned}
\end{align}
Here, $(\lambda_j)_{j \geq 1}$ serves as the initial eigenvalues,
while the trainable parameters $\xk{a_j}_{j\geq 1}$ are updated to adjust the eigenvalues during the training process.

To state our results with the most generality, let us introduce the following quantities on the target parameter sequence $\bm{\theta}^*$:
\begin{align}
  \label{eq:PhiPsi}
  \begin{aligned}
    J_{\mr{sig}}(\delta) &\coloneqq \left\{ j : \abs{\theta_{j}^*} \geq \delta \right\}, \quad
    \Phi(\delta) \coloneqq \abs{J_{\mr{sig}}(\delta)},\quad
    \Psi(\delta) = \sum_{j \notin J_{\mr{sig}}(\delta)} (\theta_j^*)^2.
  \end{aligned}
\end{align}
The quantity $\Phi(\delta)$ measures the number of significant components in the target parameter sequence $\bm{\theta}$,
while $\Psi(\delta)$ measures the contribution of the insignificant components,
which are commonly considered in the literature on the sequence model~\citep{johnstone2017_GaussianEstimation}.
For the concrete setting of \cref{eq:GappedDecay},
it is easy to show that
\begin{align}
  \label{eq:GappedDecay_PhiPsi}
  \Phi(\delta) \asymp \delta^{-\frac{2}{p+1}}, \quad \Psi(\delta) \asymp \delta^{\frac{2p}{p+1}}.
\end{align}

Moreover, we make the following assumption on the span of the significant components.

\begin{assumption}
  \label{assu:SignificantSpan}
  There exists constants $\kappa \geq 0$ and $C_{\mr{sig}} > 0$ such that
  \begin{align}
    \label{eq:SignificantSpan}
    \max J_{\mr{sig}}(\delta) \leq C_{\mr{sig}} \delta^{-\kappa}, \quad \forall \delta > 0.
  \end{align}
\end{assumption}

\cref{assu:SignificantSpan} says that the span of the significant components, namely those with $\abs{\theta_j^*} \geq \delta$,
grows at most polynomially in $1/\delta$.
This assumption is mild and holds for many practical settings, such as cases considered in \cref{example:misalignment} ($\kappa = \frac{2q}{p+1}$ for the first kernel).
In other perspective, it imposes a mild condition on the misalignment between the ordering of the truth signal and the ordering of the eigenvalues,
where $\kappa$ measures the misalignment between the ordering of $\theta_j$ and the ordering of $j$ itself.
Then, the following theorem characterizes the generalization error of the resulting estimator.

%

\begin{theorem}
  \label{thm:Op}
  Consider the sequence model \cref{eq:SeqModel} under \cref{assu:SignificantSpan}.
  Fix $\lambda_j \asymp j^{-\gamma}$ for some $\gamma > 1$
  and let $\hat{\bm{\theta}}^{\mr{Op}}$ be the estimator given by the gradient flow \cref{eq:GradientFlow2} stopped at time $t$.
  Then, there exists some constants $B_1,B_2 > 0$ such that when
  $B_1 \epsilon^{-1} \leq t \leq B_2 \epsilon^{-1}$, we have
  \begin{align}
    \E \caR(\hat{\bm{\theta}}^{\mr{Op}},\bm{\theta}^*)
    \lesssim \epsilon^2 \left[ \Phi(\epsilon) + \epsilon^{-1/\gamma} \right] + \Psi\left( \epsilon \ln(1/\epsilon)\right)
    \qq{as} \epsilon \to 0.
  \end{align}
\end{theorem}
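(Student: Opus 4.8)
The plan is to analyze the coupled gradient flow \cref{eq:GradientFlow2} component-by-component, since the flow decouples across $j$. First I would solve (or closely control) the two-dimensional ODE system for $(a_j, \beta_j)$ with $L_j(\theta_j) = \frac12(\theta_j - z_j)^2$, so that $\dot a_j = (z_j - a_j\beta_j)\beta_j$ and $\dot\beta_j = (z_j - a_j\beta_j)a_j$. The key algebraic observation is the conserved quantity $a_j^2 - \beta_j^2 = a_j(0)^2 - \beta_j(0)^2 = \lambda_j$, which reduces the system to a single scalar ODE for $\theta_j = a_j\beta_j$ with $\dot\theta_j = (z_j - \theta_j)(a_j^2 + \beta_j^2) = (z_j - \theta_j)\sqrt{\lambda_j^2 + 4\theta_j^2}$. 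This is a separable (autonomous, since $z_j$ is frozen) ODE, and the point is that when $\lambda_j$ is tiny, the effective ``rate'' $\sqrt{\lambda_j^2 + 4\theta_j^2}$ is governed by $\theta_j$ itself: the flow is slow while $\theta_j$ is near zero but accelerates once $\theta_j$ grows, giving the characteristic ``escape'' behavior of over-parameterized models. I would derive from this (i) an explicit implicit solution, and (ii) clean two-sided bounds: for a stopping time $t \asymp \epsilon^{-1}$, one shows that components with $|z_j| \gtrsim \epsilon$ are essentially fit ($\hat\theta_j \approx z_j$), while components with $|z_j| \ll \epsilon$ remain frozen near $0$ (the escape time scales like $(1/|z_j|)\ln(|z_j|/\lambda_j)$ or so, which exceeds $t$ when $|z_j|$ is small).

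Next I would convert these per-component bounds into a bias--variance decomposition of $\caR(\hat{\bm\theta}^{\mr{Op}}, \bm\theta^*) = \sum_j (\hat\theta_j - \theta_j^*)^2$. Split the index set at threshold $\delta \asymp \epsilon$ (up to logs) into $J_{\mr{sig}}(\delta)$ and its complement. On $J_{\mr{sig}}(\delta)$: these are ``large'' signal coordinates; the flow fits them up to the noise level, contributing roughly $\epsilon^2 \Phi(\epsilon)$ (one error of size $\epsilon^2$ per significant coordinate). On the complement: here $\theta_j^*$ is small, and I would argue $\hat\theta_j$ stays close to $0$, so the error is dominated by $(\theta_j^*)^2$, summing to $\Psi(\epsilon\ln(1/\epsilon))$ — the logarithmic inflation of the threshold comes from the $\ln$ factor in the escape-time estimate, since a coordinate with $|z_j|$ slightly above $\epsilon$ may still fail to escape within time $t$. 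The term $\epsilon^2 \epsilon^{-1/\gamma}$ is the residual ``tail variance'': even coordinates that should stay at $0$ have been nudged a little, and because $a_j(0) = \lambda_j^{1/2} \asymp j^{-\gamma/2}$ the flow for index $j \lesssim \epsilon^{-1/\gamma}$ behaves roughly like the vanilla kernel flow (rate $\lambda_j t \gtrsim 1$), injecting $O(\epsilon^2)$ of variance per such coordinate; \cref{assu:SignificantSpan} is what guarantees the ``large'' coordinates all sit inside a polynomially-bounded range so this counting is valid and the frozen coordinates really are frozen.

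The main obstacle I expect is the careful treatment of the intermediate regime — coordinates where $|z_j|$ is comparable to $\epsilon$, or where $\lambda_j$ and $\theta_j$ are of comparable size — because there the scalar ODE $\dot\theta_j = (z_j-\theta_j)\sqrt{\lambda_j^2+4\theta_j^2}$ is genuinely nonlinear and neither the ``linearized'' ($\lambda_j$-dominated) nor the ``pure escape'' ($\theta_j$-dominated) approximation is uniformly valid. I would handle this by proving monotone comparison bounds: sandwich $\sqrt{\lambda_j^2 + 4\theta_j^2}$ between $\max(\lambda_j, 2|\theta_j|)$ and $\lambda_j + 2|\theta_j|$, solve the resulting upper/lower comparison ODEs in closed form, and track the constants to confirm that a single universal window $[B_1\epsilon^{-1}, B_2\epsilon^{-1}]$ for the stopping time simultaneously (a) fits every significant coordinate and (b) fails to excite every insignificant one, with only the claimed $\ln(1/\epsilon)$ slack. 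A secondary technical point is passing from the deterministic-$z_j$ analysis to the expectation $\E\caR$: here I would use the $\epsilon^2$-sub-Gaussianity of $\xi_j$ to control $\E(z_j - \theta_j^*)^2$ and the probability that $|z_j|$ crosses the threshold $\delta$ when $|\theta_j^*|$ is on the wrong side of it, which contributes lower-order terms absorbed into the stated bound.
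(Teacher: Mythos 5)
Your proposal follows essentially the same route as the paper's proof: reduce the coupled system via the conserved quantity $a_j^2 - \beta_j^2 \equiv \lambda_j$ to the scalar ODE $\dot\theta_j = (z_j - \theta_j)\sqrt{\lambda_j^2 + 4\theta_j^2}$, sandwich the rate factor between linear-in-$\theta$ bounds so the comparison ODEs solve in closed form (the paper's choice is $\tfrac{1}{\sqrt 2}(\lambda + 2|\theta|) \leq \sqrt{\lambda^2 + 4\theta^2} \leq \lambda + 2|\theta|$, yielding the same solvable equation on both sides up to time rescaling; your $\max(\lambda, 2|\theta|)$ lower bound works equally well), and then do the signal/noise accounting with a cutoff at $j \asymp \epsilon^{-1/\gamma}$ and a $\ln(1/\epsilon)$-inflated signal threshold. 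Your attributions of the error terms are exactly right — $\epsilon^2\Phi(\epsilon)$ from the fitted significant coordinates, $\Psi(\epsilon\ln(1/\epsilon))$ from the unescaped ones, and $\epsilon^{2-1/\gamma}$ from the $j \lesssim \epsilon^{-1/\gamma}$ coordinates where $\lambda_j t \gtrsim 1$ — and your escape-time scaling $\asymp |z_j|^{-1}\ln(|z_j|/\lambda_j)$ matches the explicit solution.

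The one bookkeeping difference worth noting: rather than splitting deterministically by a $|\theta_j^*|$ threshold, the paper conditions on the \emph{random} events $S_j = \{|\xi_j| < \tfrac12|\theta_j^*|\}$ and decomposes $(\hat\theta_j - \theta_j^*)^2$ into an ``absolute'' term, a noise term on $S_j^\complement$, and a signal term on $S_j$. This makes the crossing probabilities you flag at the end essentially free: Proposition~\ref{prop:AbsError} bounds the absolute term by $4[\epsilon^2\Phi(\epsilon) + \Psi(\epsilon)]$ in one line, and on $S_j^\complement$ one has $|z_j| \leq 3|\xi_j|$, so the comparison-ODE bound combined with the sub-Gaussian moment $\E\exp(C|\xi_j|/\epsilon) \lesssim 1$ controls the exponential blow-up uniformly in $j > J$. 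Your plan to handle the crossing probabilities ``absorbed into the stated bound'' is correct in spirit but would require re-deriving what the $S_j$-split gives automatically.
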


\subsection{Towards deeper over-parameterization}

Let us further introduce deeper over-parameterization by adding extra $D$-layers:
\begin{align}
  \label{eq:OpModelDeep}
  \theta_j = a_j b_j^D \beta_j
\end{align}
and consider the gradient flow
\begin{align}
  \label{eq:GradientFlowD}
  \begin{aligned}
    \dot{a}_j &= -\nabla_{a_j} L_j, \quad   \dot{b}_j = -\nabla_{b_j} L_j, \quad \dot{\beta}_j = -\nabla_{\beta_j} L_j,\\
    & a_j(0) = \lambda_j^{1/2}, \quad b_j(0) = b_0 > 0, \quad \beta_j(0) = 0,
  \end{aligned}
\end{align}
where $b_0$ is the common initialization of all $b_j$.
We remark here if one considers the over-parameterization $\theta_j = a_j b_{j,1}\cdots b_{j,D} \beta_j$ with the same initialization
$b_{j,k} = b_0$, $k = 1,\dots,D$,
then $b_{j,k}$'s remain to be the same by symmetry,
so this is equivalent to our parameterization $\theta_j = a_j b_j^D \beta_j$.
The following theorem presents an upper bound for the generalization error by this deeper over-parameterized gradient flow.

\begin{theorem}
  \label{thm:OpDeep}
  Consider the sequence model \cref{eq:SeqModel} under \cref{assu:SignificantSpan}.
  Fix $\lambda_j \asymp j^{-\gamma}$ for some $\gamma > 1$
  and let $\hat{\bm{\theta}}^{\mr{Op},D}$ be the estimator given by the gradient flow \cref{eq:GradientFlowD} stopped at time $t$.
  Then, by choosing $b_0 \asymp \epsilon^{\frac{1}{D+2}}$, there exists some constants $B_1,B_2 > 0$ such that when
  $B_1 \epsilon^{-1} \leq b_0^D t \leq B_2 \epsilon^{-1}$, we have
  \begin{align}
    \E \caR(\hat{\bm{\theta}}^{\mr{Op},D},\bm{\theta}^*)
    \lesssim \epsilon^2 \left[ \Phi(\epsilon) + \epsilon^{-\frac{2}{D+2}\frac{1}{\gamma}} \right] + \Psi\left( \epsilon \ln(1/\epsilon)\right)
    \qq{as} \epsilon \to 0.
  \end{align}
\end{theorem}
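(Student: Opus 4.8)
I would follow the strategy used for \cref{thm:Op}, which is the case $D=0$, adapted to the extra factor $b_j^D$. Since the flow \cref{eq:GradientFlowD} decouples over the index $j$, fix one coordinate, write $r=z_j-\theta_j$ and $\theta_j=a_jb_j^D\beta_j$, and read off from $\dot a_j=rb_j^D\beta_j$, $\dot b_j=rDa_jb_j^{D-1}\beta_j$, $\dot\beta_j=ra_jb_j^D$ the two conserved quantities $a_j^2-\beta_j^2$ and $b_j^2-Da_j^2$; with the stated initialisation this gives, for all $t$,
\begin{align*}
  a_j(t)^2=\lambda_j+\beta_j(t)^2,\qquad b_j(t)^2=b_0^2+D\,\beta_j(t)^2 .
\end{align*}
In particular $a_j(t)\ge\lambda_j^{1/2}>0$ and $b_j(t)\ge b_0>0$, so substituting these back turns each coordinate into a single autonomous scalar ODE driven by $z_j$, and $\dot\theta_j=r\cdot g(\beta_j)$ with $g>0$; hence $\theta_j(t)$ moves monotonically from $0$ towards $z_j$ and never overshoots, $|\theta_j(t)|\le|z_j|$ and $\theta_j(t)z_j\ge0$. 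The task then reduces to understanding the \emph{escape time} $T_j$, past which $\theta_j$ has come within a constant factor — and then, at the extra cost of a $\ln(1/\epsilon)$ factor, within $\epsilon$ — of $z_j$, versus the complementary \emph{non-escape} regime in which $\theta_j$ has barely left $0$.

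To locate $T_j$ I would dissect the trajectory into three phases (say $z_j>0$): (i) a \emph{linearised} phase $\beta_j\lesssim\lambda_j^{1/2}$, where $a_j\approx\lambda_j^{1/2}$, $b_j\approx b_0$ and $\theta_j$ relaxes towards $z_j$ at effective rate $\lambda_jb_0^{2D}$, i.e.\ $\theta_j(t)\approx z_j(1-e^{-\lambda_jb_0^{2D}t})$; (ii) a \emph{rich} phase $\lambda_j^{1/2}\lesssim\beta_j\lesssim b_0$, where $a_j\approx\beta_j$, $b_j\approx b_0$ and $\beta_j$ grows exponentially at rate $\asymp z_jb_0^D$; (iii) a final phase $\beta_j\gtrsim b_0$, where $b_j\asymp D^{1/2}\beta_j$, $\theta_j\asymp D^{D/2}\beta_j^{D+2}$ and $\theta_j\to z_j$ at rate $\asymp z_j^{(2D+2)/(D+2)}$. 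Summing the three durations should give $T_j\lesssim\bigl(b_0^{-D}z_j^{-1}+z_j^{-(2D+2)/(D+2)}\bigr)\ln(1/\epsilon)$ on the range of $j$ for which $\lambda_j$ is bounded below by a fixed power of $\epsilon$ (all we ever need, by \cref{assu:SignificantSpan}), together with the complementary bound that if $t$ falls below a small constant multiple of the analogous lower bound on $T_j$, then $\theta_j(t)\lesssim z_j\lambda_jb_0^{2D}t$. Turning these heuristics into rigorous two-sided bounds — in particular justifying the linearisations of $a_j$ and $b_j$ in each phase and propagating the non-escape bound $\theta_j(t)\lesssim z_j\lambda_jb_0^{2D}t$ uniformly over the relevant coordinates, both via bootstrap/continuity arguments, while tracking the logarithmic factors — is the main obstacle I expect. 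The calibration $b_0\asymp\epsilon^{1/(D+2)}$ under the window $b_0^Dt\asymp\epsilon^{-1}$ is exactly the one making $\lambda_jb_0^{2D}t\asymp\lambda_j\epsilon^{-2/(D+2)}$, so that $\asymp\epsilon^{-2/((D+2)\gamma)}$ coordinates escape already in phase (i) while phases (ii)--(iii) still complete within the window whenever $z_j\gtrsim\epsilon\ln(1/\epsilon)$.

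With these ODE estimates in hand, I would work on the typical event $|\xi_j|\lesssim\epsilon\sqrt{\ln(1/\epsilon)}$ — absorbing its complement via the $\epsilon^2$-sub-Gaussianity of $\xi_j$, which renders that contribution to $\E\caR$ negligible — and split the indices into three groups, always keeping $(\theta_j-\theta_j^*)^2\le 2(\theta_j^*)^2+8\xi_j^2$ as a fallback bound. For $j$ with $\lambda_j\gtrsim\epsilon^{2/(D+2)}$, i.e.\ $j\lesssim\epsilon^{-2/((D+2)\gamma)}$: phase (i) already brings $\theta_j$ within $\epsilon$ of $z_j$, so $(\theta_j-\theta_j^*)^2\lesssim\xi_j^2$ and the total is $\lesssim\epsilon^2\,\epsilon^{-2/((D+2)\gamma)}$. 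For $j$ with $|\theta_j^*|\gtrsim\epsilon\ln(1/\epsilon)$: \cref{assu:SignificantSpan} gives $\ln(1/\lambda_j)=\gamma\ln j\lesssim\ln(1/\epsilon)$ on this range, hence $T_j\lesssim t$ and again $(\theta_j-\theta_j^*)^2\lesssim\xi_j^2$, and there are at most $\Phi(c\,\epsilon\ln(1/\epsilon))\le\Phi(\epsilon)$ such indices, contributing $\lesssim\epsilon^2\Phi(\epsilon)$. For the remaining $j$ the coordinate is in the non-escape regime, so $(\theta_j-\theta_j^*)^2\lesssim(\theta_j^*)^2+(z_j\lambda_jb_0^{2D}t)^2$; the first part sums over $j\notin J_{\mr{sig}}(c\,\epsilon\ln(1/\epsilon))$ to $\lesssim\Psi(\epsilon\ln(1/\epsilon))$, and, since $\gamma>1$ gives $\sum_{j\ge N}j^{-2\gamma}\asymp N^{1-2\gamma}$, the second part sums over $j\gtrsim\epsilon^{-2/((D+2)\gamma)}$ to $\lesssim\epsilon^2\,(b_0^{2D}t)^2\bigl(\epsilon^{-2/((D+2)\gamma)}\bigr)^{1-2\gamma}\asymp\epsilon^2\,\epsilon^{-2/((D+2)\gamma)}$ once $b_0^{2D}t\asymp\epsilon^{-2/(D+2)}$ is inserted. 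Adding the three groups and writing $\tfrac{2}{(D+2)\gamma}=\tfrac{2}{D+2}\cdot\tfrac1\gamma$ yields $\E\caR(\hat{\bm\theta}^{\mr{Op},D},\bm\theta^*)\lesssim\epsilon^2\bigl[\Phi(\epsilon)+\epsilon^{-\frac{2}{D+2}\frac1\gamma}\bigr]+\Psi(\epsilon\ln(1/\epsilon))$.
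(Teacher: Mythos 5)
Your ODE groundwork is sound and close to the paper's: you identify the same conserved quantities $a_j^2-\beta_j^2=\lambda_j$, $b_j^2-D\beta_j^2=b_0^2$, monotonicity of $\theta_j$, and essentially the same three-phase dissection (linear at rate $\lambda_jb_0^{2D}$; exponential in $\beta_j$ at rate $\asymp z_jb_0^D$; final contraction at rate $\asymp z_j^{(2D+2)/(D+2)}$), which matches the paper's $T^{(1)},T^{(2)},T^{\mathrm{sig}}$ timeline and the bounds in the lemmas of Appendix C.2. Your signal-index treatment (indices with $|\theta_j^*|\gtrsim\epsilon\ln(1/\epsilon)$, together with \cref{assu:SignificantSpan} to bound $\ln(1/\lambda_j)\lesssim\ln(1/\epsilon)$ and force $T^{\mathrm{sig}}_j\lesssim t$) is also essentially the paper's.

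The gap is in the noise control. The paper does \emph{not} condition on a ``typical'' event; it decomposes on the per-coordinate random event $S_j=\{|\xi_j|<|\theta_j^*|/2\}$, and on $S_j^\complement$ it further splits on whether $|\xi_j|$ is small enough that $t\le\underline{T}^{(1,2)}_j$ (event $A_j$). On $A_j$ the transient bound is $\hat\theta_j^2\lesssim b_0^{2D}\lambda_j^2\exp(Cb_0^D|z_j|t)\lesssim b_0^{2D}\lambda_j^2\exp(C|\xi_j|/\epsilon)$, and the key step is taking expectation with $\E\exp(C|\xi_j|/\epsilon)\lesssim1$ by $1$-sub-Gaussianity; on $A_j^\complement$ it uses the trivial bound $|\hat\theta_j|\le3|\xi_j|$ and the Gaussian tail integral (\cref{lem:SubGaussianTailBound}). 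Your version instead asserts a \emph{non-escape} bound $\theta_j(t)\lesssim z_j\lambda_jb_0^{2D}t$ on all of group~3, but that linear bound only holds for $t\lesssim\underline{T}^{(1)}_j\asymp(b_0^D|z_j|)^{-1}$, i.e.\ $|z_j|\lesssim\epsilon$. On your ``typical'' event $|\xi_j|\lesssim\epsilon\sqrt{\ln(1/\epsilon)}$ (and with $|\theta_j^*|$ allowed up to $\epsilon\ln(1/\epsilon)$ in group~3) you can have $|z_j|\gg\epsilon$, so some of these coordinates \emph{do} enter the exponential phase: $\underline{T}^{(1)}_j\asymp t/\sqrt{\ln(1/\epsilon)}$ or $t/\ln(1/\epsilon)$, and the factor picked up in the remaining time is $\exp(\Theta(\sqrt{\ln(1/\epsilon)}))$ or $\exp(\Theta(\ln(1/\epsilon)))$, which is not bounded by a constant and can even be polynomially large in $1/\epsilon$. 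The paper's exponential-plus-expectation argument is exactly what removes this; a worst-case bound on a high-probability event cannot. A secondary consequence of the typical-event strategy is that even where it works it forces $\ln(1/\epsilon)$ factors onto the $\epsilon^2\Phi(\epsilon)$ and $\epsilon^2\epsilon^{-2/((D+2)\gamma)}$ terms (e.g.\ $\sum\xi_j^2$ over $\Phi(\epsilon)$ indices becomes $\epsilon^2\ln(1/\epsilon)\Phi(\epsilon)$ on the event rather than $\epsilon^2\Phi(\epsilon)$ in expectation), which is a strictly weaker statement than the theorem. To close the gap, replace the typical-event split by the paper's $S_j/S_j^\complement$ split and, within $S_j^\complement$, the $A_j/A_j^\complement$ split, and control the noise term by taking expectations against the sub-Gaussian density rather than by truncation.

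Your ``group~1'' step also states that phase~(i) brings $\theta_j$ within $\epsilon$ of $z_j$ for all $\lambda_j\gtrsim\epsilon^{2/(D+2)}$, which at the boundary $\lambda_jb_0^{2D}t\asymp1$ only gives a constant-fraction relaxation; the paper instead uses the crude bound $|\hat\theta_j|\le3|\xi_j|$ for the first $J\asymp b_0^{-2/\gamma}$ indices on $S_j^\complement$, and for signal indices obtains $(\hat\theta_j-z_j)^2\lesssim(\theta_j^*)^2\exp(-c|\theta_j^*|^{(2D+2)/(D+2)}t)\lesssim t^{-(D+2)/(D+1)}\asymp\epsilon^2$ by trading the exponential against the prefactor, which is both simpler and tight. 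These are fixable, but as written the argument does not go through.
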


%
%

\subsection{Discussion of the results}

\paragraph{Benefits of Over-parameterization}
\cref{thm:Op} and \cref{thm:OpDeep} demonstrate the advantage of over-parameterization in the sequence model.
Compared with the vanilla fixed-eigenvalues gradient descent method,
the over-parameterized gradient descent method can significantly improve the generalization performance by adapting the eigenvalues to the truth signal.
For a more concrete example,
if we consider the setting of \cref{eq:GappedDecay}, plugging \cref{eq:GappedDecay_PhiPsi} yields the following corollary.

\begin{corollary}
  \label{cor:Op}
  Consider the over-parameterized gradient descent in \cref{eq:GradientFlow2} (setting $D=0$) or \cref{eq:GradientFlowD}.
  Suppose \cref{eq:GappedDecay} holds and $\lambda_j \asymp j^{-\gamma}$ for $\gamma > 1$ and $\gamma \geq \frac{p+1}{D+2}$.
  Then, by choosing $b_0 \asymp \epsilon^{\frac{1}{D+2}}$ (if $D\neq 0$) and $t \asymp \epsilon^{-\frac{2D+2}{D+2}}$,
  we have
  \begin{align}
    \E \caR(\hat{\bm{\theta}}^{\mr{Op,D}},\bm{\theta}^*)
    \lesssim \epsilon^{\frac{2p}{p+1}} (\ln(1/\epsilon))^{\frac{2p}{p+1}}
    \qq{as} \epsilon \to 0.
  \end{align}
  In comparison, the vanilla gradient flow method yields the rate $\epsilon^{\frac{2p}{p+q}}$.
\end{corollary}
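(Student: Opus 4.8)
The plan is to obtain \cref{cor:Op} as a direct specialization of \cref{thm:OpDeep} (with $D=0$ reducing to \cref{thm:Op}), so the work lies entirely in checking the hypotheses and in book-keeping the exponents produced by the setting \cref{eq:GappedDecay}. First I would verify that \cref{assu:SignificantSpan} is in force: under \cref{eq:GappedDecay} one has $\abs{\theta^*_{\ell(j)}} \asymp j^{-(p+1)/2}$, so $\abs{\theta^*_m} \geq \delta$ forces $m = \ell(j)$ with $j \lesssim \delta^{-2/(p+1)}$, whence $\max J_{\mr{sig}}(\delta) \asymp \ell\big(\delta^{-2/(p+1)}\big) \asymp \delta^{-2q/(p+1)}$; thus \cref{eq:SignificantSpan} holds with $\kappa = 2q/(p+1)$. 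The same count gives $\Phi(\delta) \asymp \delta^{-2/(p+1)}$ and, summing the tail of $(\theta^*_j)^2$, $\Psi(\delta) \asymp \delta^{2p/(p+1)}$, i.e.\ \cref{eq:GappedDecay_PhiPsi}. The conditions $\gamma > 1$ and $\gamma \geq (p+1)/(D+2)$ are carried over as standing assumptions.

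Next I would translate the admissible stopping-time window into the stated choice of $t$. Taking $b_0 \asymp \epsilon^{1/(D+2)}$ gives $b_0^D \asymp \epsilon^{D/(D+2)}$, so the constraint $B_1\epsilon^{-1} \leq b_0^D t \leq B_2 \epsilon^{-1}$ of \cref{thm:OpDeep} is equivalent to $t \asymp \epsilon^{-1} b_0^{-D} \asymp \epsilon^{-(2D+2)/(D+2)}$, exactly the choice in the corollary; for $D=0$ this reads $t \asymp \epsilon^{-1}$ and $b_0$ plays no role, matching \cref{thm:Op}. With these choices \cref{thm:OpDeep} yields
\begin{align*}
  \E \caR(\hat{\bm{\theta}}^{\mr{Op},D},\bm{\theta}^*)
  \lesssim \epsilon^2 \Phi(\epsilon) + \epsilon^{2 - \frac{2}{\gamma(D+2)}} + \Psi\big(\epsilon \ln(1/\epsilon)\big).
\end{align*}

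It then remains to bound the three terms via \cref{eq:GappedDecay_PhiPsi}. We have $\epsilon^2 \Phi(\epsilon) \asymp \epsilon^{2p/(p+1)}$ and $\Psi(\epsilon\ln(1/\epsilon)) \asymp \epsilon^{2p/(p+1)}(\ln(1/\epsilon))^{2p/(p+1)}$, so these two are at most $\epsilon^{2p/(p+1)}(\ln(1/\epsilon))^{2p/(p+1)}$. For the middle term, $\epsilon^{2-2/(\gamma(D+2))} \lesssim \epsilon^{2p/(p+1)}$ as $\epsilon \to 0$ precisely when $2 - \frac{2}{\gamma(D+2)} \geq \frac{2p}{p+1}$, i.e.\ $\gamma(D+2) \geq p+1$ --- which is exactly the hypothesis $\gamma \geq (p+1)/(D+2)$ --- so this term is absorbed. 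Hence all three terms are $\lesssim \epsilon^{2p/(p+1)}(\ln(1/\epsilon))^{2p/(p+1)}$, giving the claimed rate. The comparison statement is the computation already recorded after \cref{eq:GradientFlow2}: for the vanilla flow $\hat\theta_j = (1-e^{-\lambda_j t})z_j$, minimizing $\sum_j (e^{-\lambda_j t}\theta_j^*)^2 + \epsilon^2\sum_j(1-e^{-\lambda_j t})^2$ over $t$ under \cref{eq:GappedDecay} with $\lambda_j \asymp j^{-\gamma}$ gives $t \asymp \epsilon^{-2q\gamma/(p+q)}$ and rate $\epsilon^{2p/(p+q)}$.

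Since every step is a substitution, there is no serious obstacle; the one point that must be gotten right is the exponent arithmetic of the last paragraph, and in particular the observation that the hypothesis $\gamma \geq (p+1)/(D+2)$ is exactly the threshold at which the ``effective-variance'' contribution $\epsilon^{2-2/(\gamma(D+2))}$, coming from the $\epsilon^{-2/((D+2)\gamma)}$ term of \cref{thm:OpDeep}, gets swallowed by the main rate $\epsilon^{2p/(p+1)}$ --- so that, once $\gamma$ clears this bar, the final rate is independent of both $\gamma$ and the misalignment exponent $q$.
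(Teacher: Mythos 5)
Your proposal is correct and is essentially the derivation the paper intends: the paper itself offers no separate proof of \cref{cor:Op}, saying only that one "plugs \cref{eq:GappedDecay_PhiPsi}" into \cref{thm:Op} and \cref{thm:OpDeep}, and your exponent computations — $\kappa = 2q/(p+1)$ for \cref{assu:SignificantSpan}, $\epsilon^2\Phi(\epsilon) \asymp \epsilon^{2p/(p+1)}$, $\Psi(\epsilon\ln(1/\epsilon)) \asymp \epsilon^{2p/(p+1)}(\ln(1/\epsilon))^{2p/(p+1)}$, translation of the stopping-time window into $t\asymp\epsilon^{-(2D+2)/(D+2)}$, and absorption of the middle term under $\gamma\geq(p+1)/(D+2)$ — all match \prettyref{eq:GappedDecay_PhiPsi}, \prettyref{lem:PolyDecayPhiPsi}, and the remark following \prettyref{assu:SignificantSpan}. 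The identification of the hypothesis $\gamma\geq(p+1)/(D+2)$ as precisely the threshold at which the $\epsilon^{-\frac{2}{(D+2)\gamma}}$ term is swallowed is the right observation and completes the argument.
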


Ignoring the logarithmic factor, \cref{cor:Op} shows that the over-parameterized gradient descent method can achieve a nearly optimal rate $\epsilon^{\frac{2p}{p+1}}$,
while the vanilla gradient descent method only achieves the rate $\epsilon^{\frac{2p}{p+q}}$.
The improvement is significant when $q$ is large, which corresponds to the case that the misalignment between the ordering of the truth signal and the ordering of the eigenvalues is severe.
Moreover, if we return to the low-dimensional regression function in \cref{example:low-dim-structure} with the isotropic kernel $k_1$,
we can see that while the vanilla gradient descent method suffers from the curse of dimensionality with the rate $\frac{2t}{2t+d}$,
the over-parameterization leads to the dimension-free rate $\frac{2t}{2t+d_0}$.
Therefore, the over-parameterization significantly improves the generalization performance.

\paragraph{Learning the eigenvalues}
To further investigate how the eigenvalues are adapted by over-parameterized gradient descent, we present the following proposition.

\begin{proposition}
  \label{prop:EigLearn}
  Given the same conditions as in \cref{thm:OpDeep} or \cref{thm:Op} (with $D=0$ and $b_j^D = 1$ for \cref{thm:Op}),
  the term learning the eigenvalues $a_j(t) b_j^D(t)$ is non-decreasing in $t$ for each $j$.
  Moreover, letting $\delta \in (0,1)$, when $\epsilon$ is small enough, the following holds at time $t$ chosen as in \cref{thm:Op} or \cref{thm:OpDeep}:
  \begin{itemize}
    \item {Signal component:} There exist constants $C, c>0$ such that for any component satisfying $\abs{\theta_j^*} \geq C \epsilon \ln (1/\epsilon)$,
    it holds with probability at least $1-\delta$ that
    \begin{align}
      a_j(t) b^D_j(t) \geq c \abs{\theta_j^*}^{\frac{D+1}{D+2}}.
    \end{align}
    \item {Noise component:} There exist constants $c,C,C' > 0$ such that,
    for any component where $\abs{\theta_j^*} \leq \epsilon$ and $\lambda_j \leq c \epsilon^{\frac{2}{D+2}}$, it holds with probability at least $1-\delta$ that
    \begin{align}
      a_j(t) b^D_j(t) \leq C  \lambda_j^{\hf} \epsilon^{\frac{D}{D+2}} = C' a_j(0) b_j^D(0).
    \end{align}
  \end{itemize}
\end{proposition}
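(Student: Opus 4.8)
The plan is to first collapse the three coupled ODEs in \cref{eq:GradientFlowD} to a single scalar flow using the balancedness invariants, and then to deduce everything about $u_j(t):=a_j(t)b_j^D(t)$ from the trajectory of $\beta_j$. Writing $r_j=z_j-\theta_j$, the flow reads $\dot a_j=r_jb_j^D\beta_j$, $\dot b_j=Dr_ja_jb_j^{D-1}\beta_j$, $\dot\beta_j=r_ja_jb_j^D=r_ju_j$; hence $\frac{d}{dt}(a_j^2-\beta_j^2)=0$ and $\frac{d}{dt}(b_j^2-Da_j^2)=0$, which with the initialization give the invariants $a_j(t)^2=\beta_j(t)^2+\lambda_j$ and $b_j(t)^2=b_0^2+D\beta_j(t)^2$ for all $t$ (for \cref{thm:Op} one simply sets $D=0$, $b_j\equiv1$). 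In particular $a_j,b_j$ stay positive, $u_j(t)=(\beta_j^2+\lambda_j)^{1/2}(b_0^2+D\beta_j^2)^{D/2}$ is an explicit strictly increasing function of $|\beta_j|$, and $\theta_j=u_j\beta_j=g_j(\beta_j)$ for an odd strictly increasing $g_j$, so the system reduces to $\dot\beta_j=(z_j-g_j(\beta_j))u_j(\beta_j)$, a scalar gradient-type flow with unique equilibrium $g_j(\beta_j)=z_j$. A further useful identity is $\frac{d}{dt}\beta_j^2=2\beta_jr_ju_j=2\theta_j(z_j-\theta_j)$.

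For the monotonicity claim I would differentiate $u_j$ directly: $\dot u_j=\dot a_jb_j^D+Da_jb_j^{D-1}\dot b_j=r_j\beta_j\,b_j^{2D-2}(b_j^2+D^2a_j^2)$ (and $\dot u_j=\dot a_j=r_j\beta_j$ when $D=0$), so $\dot u_j$ has the sign of $r_j\beta_j$. Since $\beta_j(0)=0$ and $\dot\beta_j=r_ju_j$ has the sign of $r_j$, a standard ODE argument (WLOG $z_j>0$: $\theta_j$ increases monotonically from $0$ and cannot overshoot $z_j$ because $z_j$ is a fixed point once reached) shows $\beta_j(t)$ and $r_j(t)$ keep the sign of $z_j$ for all $t$; hence $r_j\beta_j\ge0$ and $u_j$ is non-decreasing.

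The two quantitative statements then combine one algebraic inequality with the trajectory estimates already obtained in the proof of \cref{thm:OpDeep}. The inequality is $u_j=|\theta_j|/|\beta_j|$ (for $t>0$) together with $\theta_j^2=(\beta_j^2+\lambda_j)(b_0^2+D\beta_j^2)^D\beta_j^2\ge D^D\beta_j^{2(D+2)}$ (reading $0^0=1$), which yields $|\beta_j|\le D^{-D/(2(D+2))}|\theta_j|^{1/(D+2)}$ and hence $u_j\ge D^{D/(2(D+2))}|\theta_j|^{(D+1)/(D+2)}$ at every time; conversely it shows that, once $\lambda_j\lesssim b_0^2$, the bound $u_j(t)\le C\lambda_j^{1/2}b_0^D$ is equivalent to $\beta_j(t)^2\lesssim\lambda_j$. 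For the signal case, sub-Gaussian concentration gives $|z_j|\asymp|\theta_j^*|$ on an event of probability at least $1-\delta$ ($\delta$ fixed, $\epsilon\to0$), while the convergence analysis behind \cref{thm:OpDeep} — which is exactly where \cref{assu:SignificantSpan} enters, bounding $\lambda_j$ from below for significant coordinates and hence controlling the logarithmic factors in the fitting time — shows that when $|\theta_j^*|\ge C\epsilon\ln(1/\epsilon)$ with $C$ large, the coordinate is essentially fitted by time $t$, i.e.\ $|\theta_j(t)|\ge c|z_j|$; plugging this in gives $u_j(t)\ge c'|\theta_j^*|^{(D+1)/(D+2)}$. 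For the noise case, on the same event $|z_j|\lesssim\epsilon$, and one reads off from the scalar flow that in the linear regime ($\beta_j^2\le\lambda_j$, where $u_j\asymp\mu_j:=\lambda_j^{1/2}b_0^D$) one has $\dot\beta_j\lesssim z_j\mu_j$ so $\beta_j$ reaches at most $\lesssim\mu_jz_jt=\lambda_j^{1/2}z_j(b_0^Dt)\lesssim\lambda_j^{1/2}$, and in the next regime ($\lambda_j\le\beta_j^2\le b_0^2$) the growth of $|\beta_j|$ is exponential with rate $\asymp|z_j|b_0^D$ and total exponent $\lesssim|z_j|b_0^Dt\lesssim\epsilon\cdot\epsilon^{-1}=O(1)$; choosing $c$ in $\lambda_j\le c\epsilon^{2/(D+2)}$ small enough keeps the coordinate out of the third (super-linear) regime, so $\beta_j(t)^2\lesssim\lambda_j$ and hence $u_j(t)\lesssim\lambda_j^{1/2}(b_0^2+D\lambda_j)^{D/2}\lesssim\lambda_j^{1/2}b_0^D=a_j(0)b_j^D(0)$.

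The main obstacle is the last step for the noise coordinates: turning the heuristic multi-regime picture into a rigorous bound requires a careful bootstrap through the phases delineated by the relative sizes of $\beta_j^2$, $\lambda_j$ and $b_0^2$, together with pinning down how small the constant $c$ must be so that the coordinate never enters the super-linear regime (where $|\beta_j|$, and with it $u_j$, can blow up) and so that the linear- and exponential-phase estimates close on themselves; the analogous "is it fitted before time $t$" estimate for signal coordinates, with its logarithmic dependence on $\lambda_j$ via \cref{assu:SignificantSpan}, is of the same nature. Both are precisely the estimates one sets up in the proof of \cref{thm:OpDeep}, so here I would invoke them rather than reprove them.
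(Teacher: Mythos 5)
Your proposal is correct and follows essentially the same route as the paper: both rely on the conservation identities $a_j^2=\lambda_j+\beta_j^2$, $b_j^2=b_0^2+D\beta_j^2$ to reduce everything to the trajectory of $\beta_j$, both use sub-Gaussian concentration to place $|z_j|\asymp|\theta_j^*|$ (signal) or $|z_j|\lesssim\epsilon$ (noise) on a $1-\delta$ event, and both invoke the phase estimates from Lemmas~\ref{lem:MultiLayerEq_NoiseCase} and~\ref{lem:MultiLayerEq_SignalCase} rather than re-deriving them. The only cosmetic difference is that you use the single unified inequality $u_j=|\theta_j|/|\beta_j|\ge D^{D/(2(D+2))}|\theta_j|^{(D+1)/(D+2)}$ for the signal lower bound (covering $D=0$ and $D\ge1$ at once), whereas the paper treats $D=0$ via $|\theta_j(t)|\le a_j^2(t)$ and $D\ge1$ via $a_jb_j^D\ge|\beta_j|^{D+1}$ with a lower bound on $|\beta_j(t)|$; these are equivalent reformulations of the same conservation law.
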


From this proposition, we can see that for the signal components, the eigenvalues are learned to be at least a constant times a certain power of the truth signal magnitude.
Thus, over-parameterized gradient descent adjusts the eigenvalues to match the truth signal as expected.
In the case of noise components, although the eigenvalues are still increasing due to the training process,
the eigenvalues do not exceed the initial values by some constant factor, provided that $\lambda_j$ is relatively small.
This finding suggests that over-parameterized gradient descent effectively adapts eigenvalues to the truth signal while mitigating overfitting to noise.
We remark that when $\lambda_j$ is relatively large, the method still tends to overfit the noise components,
contributing an extra $\epsilon^{-\frac{2}{D+2} \frac{1}{\gamma}}$ term in the generalization error,
but this term becomes negligible for large $\gamma$.
Moreover, we also remark that there is a $\ln(1/\epsilon)$ gap between the signal and noise components.
This is because the signal and the noise can not be distinguished for the components in the middle.

\paragraph{Adaptive choice of the stopping time}
A notable advantage of the over-parameterized gradient descent method is its adaptivity.
Consider the scenario described by \cref{eq:GappedDecay},
vanilla gradient descent requires the selection of a stopping time $t \asymp \epsilon^{-(2q\gamma)/(p+q)}$ to achieve the optimal convergence rate.
However, this choice of stopping time critically depends on the unknown parameters $p$ and $q$ of the truth parameter,
posing a significant challenge in practical applications.
In contrast, the over-parameterized gradient descent only need to choose the stopping time as $t \asymp \epsilon^{-\frac{2D+2}{D+2}}$,
which does not rely on the unknown truth parameters, while still achieving the nearly optimal convergence rate.
This independence from the truth parameters allows the over-parameterization approach to adaptively accommodate
any truth parameter structure by employing a fixed stopping time, without the need for prior knowledge about the truth function's properties.

\paragraph{Effect of the depth}
The results in \cref{thm:OpDeep} also show that deeper over-parameterization can further improve the generalization performance.
In the two-layer over-parameterization,
the extra term $\epsilon^{-1/\gamma}$ in \cref{thm:Op} emerges due to the limitation of the adapting large eigenvalues.
With the introduction of depth, namely adding extra $D$ layers to the model with proper initialization,
this term can be improved to $\epsilon^{-\frac{2}{D+2}\frac{1}{\gamma}}$ in \cref{thm:OpDeep}.
This improvement suggests that the depth can refine the model's sensitivity to eigenvalue adaptation,
enabling a more nuanced adjustment to the underlying signal structure.
This finding underscores the importance of model depth in enhancing the learning process,
providing also theoretical evidence for the empirical success of deep learning models.

\paragraph{Comparison with previous works}

We will compare our results with the existing literature~\citep{zhao2022_HighdimensionalLinear,li2021_ImplicitSparse,vaskevicius2019_ImplicitRegularization}
on the generalization performance of over-parameterized gradient descent in the following aspects:
\begin{itemize}
  \item \textbf{Problem settings:} While the existing literature~\citep{zhao2022_HighdimensionalLinear,li2021_ImplicitSparse,vaskevicius2019_ImplicitRegularization}
  investigate the realms of high-dimensional linear regression, focusing on implicit regularization and sparsity,
  the present study dives into kernel regression and its approximation by Gaussian sequence models,
  emphasizing the adaptivity of over-parameterization to the underlying signal's structure, a leap towards understanding model complexity beyond mere regularization.
  Moreover, while the literature primarily focuses on the setting of strong signal, weak signal and noise separation,
  we consider the more general setting of the sequence model with arbitrary signal and noise components.

  \item \textbf{Over-parameterization setup:}
  The existing work \citet{zhao2022_HighdimensionalLinear} considers the over-parameterization setup by the two-layer Hadamard product $\theta = a \odot b$
  where the initialization is the same for each component that $a(0) = \alpha \bm{1}$ and $b(0) = \bm{0}$.
  In comparison, our work considers initializing the eigenvalues $a_j(0) = \lambda_j^{1/2}$ differently for each component.
  Moreover, we extend the over-parameterization to deeper models by adding extra $D$ layers.
  Although \citet{vaskevicius2019_ImplicitRegularization} and the subsequent work \citet{li2021_ImplicitSparse} also consider the deeper over-parameterization,
  their over-parameterization is in the form of $\theta = u^{\odot D} - v^{\odot D}$ with $u(0) = v(0) = \alpha \bm{1}$.
  Unfortunately, though being easy to analysis because of the homogeneous initialization,
  this setup could not bring insights into the learning of the eigenvalues, which is the key to our results.
  Furthermore, the analysis for \cref{thm:OpDeep} involves the interplay between the differently initialized $a_j$ and $b_j$,
  so our analysis is more involved than the existing works.
  We also remark that although we only consider the gradient flow in the analysis,
  the results can be extended to the gradient descent with proper learning rates.

  \item \textbf{Interpretation of the over-parameterization:}
  The previous works view the over-parameterization mainly as a mechanism for implicit regularization,
  while our work provides a novel perspective that over-parameterization adapts to the structure of the truth signal by learning the eigenvalues.
  Our theory also aligns with the neural network literature~\citep{yang2022_FeatureLearning,ba2022_HighdimensionalAsymptotics},
  where over-parameterization with gradient descent is known to be beneficial in learning the structure of the target function.

  \item \textbf{Connection to sparse recovery:}
  Our results can be phrased for the setting of high dimensional regression with sparsity.
  Taking a sparse signal $(\theta_j^*)_{j \geq 1}$, e.g., $\theta_j^* = 1$ for $j \in S$, $\abs{S} = s$ and
  $\theta_j^* = 0$ for $j \notin S$,
  we find that $\Phi(\epsilon) = s$ and $\Psi(\epsilon) = 0$.
  Consequently, ignoring the extra error term, the resulting rate obtained by \cref{thm:Op} or \cref{thm:OpDeep} is $\tilde{O}(s/n)$ (ignoring the logarithmic factor).
  This rate coincides with the minimax rate for sparse recovery in high-dimensional regression.
\end{itemize}

\subsection{Proof outline}

In this subsection, we will provide an outline of the proof of \cref{thm:Op} and \cref{thm:OpDeep}.
For the detailed proof, we refer to \cref{sec:GradientFlowAnalysis} for the analysis of the gradient flow equation and \cref{sec:GeneralizationProofs} for the generalization error.

\paragraph{Equation analysis}
The proof of \cref{thm:Op} and \cref{thm:OpDeep} relies on the analysis of the gradient flow \cref{eq:GradientFlow2} and \cref{eq:GradientFlowD} for each component $j$.
For notation simplicity, we will suppress the index $j$ in the following discussion.
Firstly, the symmetry of the equation allows us to consider only the case $z > 0$.
Then, one can find that
\begin{align*}
  \dv{t} a^2 = \dv{t} \beta^2 = \frac{1}{D} \dv{t} b^2 = 2 \theta (z - \theta),
\end{align*}
so we get the conservation quantities $a^2(t) \equiv \beta^2(t) + \lambda$ and $b^2(t)  \equiv D \beta^2(t) + b_0^2$.

Consequently, for the case $D=0$, we can obtain the explicit gradient flow of $\theta$:
\begin{align*}
  \dot{\theta} = \sqrt {a_0^4 + 4\theta^2} (z - \theta),\quad \theta(0) = 0.
\end{align*}
Since $\sqrt {a_0^4 + 4\theta^2}$ can be bounded by a multiple of $a_0^2 + 2\theta$,
we can consider the other equation
$\dot{\theta} = (a_0^2 + 2\theta) (z - \theta)$,
which admits a closed-form solution.

For the case $D \geq 1$, the equation is more complicated.
We will apply a multiple stage analysis concerning both the effect of $a(t)$ and $b(t)$.


\paragraph{The generalization error}
In terms of the generalization error, we first separate the noise case when $\abs{\xi_j} \geq \abs{\theta_j^*}/2$ and the signal case when $\abs{\xi_j} < \abs{\theta_j^*}/2$.
For the noise case, we apply the analysis of the equation to show that $\theta_j(t)$ is bounded roughly by $\lambda_j$ for our choice of $t$.
Moreover, the fact that $\lambda_j$ is summable ensures that error of these noise components does not sum up to infinity.
On the other hand, for the signal case, if $\abs{\theta_j^*} \geq \epsilon \ln (1/\epsilon)$,
we can show that our choice of $t$ allows $\theta_j(t)$ to exceed $z/2$ and converge to $z$ close enough,
so the error in these components is only caused by the random noise and sum up to $\epsilon^2 \Phi(\epsilon)$.
In addition, the remaining signal components contribute to the error term $\Psi(\epsilon \ln (1/\epsilon))$.
Summing up these two terms, we can obtain the desired generalization error bound.


\section{Numerical Experiments}\label{sec:maintext-numerical-experiments}

In this section, we provide some numerical experiments to validate the theoretical results.
For more detailed numerical experiments, please refer to \cref{sec:numerical-experiments}.

We approximate the gradient flow equation \cref{eq:TwoLayerEq} and \cref{eq:MultiLayerEq} by discrete-time gradient descent
and truncate the sequence model to the first $N$ terms for some very large $N$.
We consider the settings as in \cref{cor:Op} that $\bm{\theta}^*$ is given by \cref{eq:GappedDecay} for some $p > 0$ and $q \geq 1$.
We set $\epsilon^2 = n^{-1}$, where $n$ can be regarded as the sample size, and consider the asymptotic generalization error rates as $n$ grows.

We first compare the generalization error rates between vanilla gradient descent and over-parameterized gradient descent (OpGD) in \cref{fig:ComparisonShort}.
The results show that the over-parameterized gradient descent can achieve the optimal generalization error rate,
while the vanilla gradient descent suffers from the misalignment caused by $q > 1$ and thus has a slower convergence rate.
Moreover, with a logarithmic least-squares fitting,
we find that the resulting generalization error rates also match the theoretical results in \cref{cor:Op} ($0.5$ for OpGD and $0.33$ for vanilla GD).

\begin{figure}[htb]
  \centering
  \subfigure{
    \begin{minipage}[t]{0.5\linewidth}
      \centering
      \includegraphics[width=1.05\linewidth]{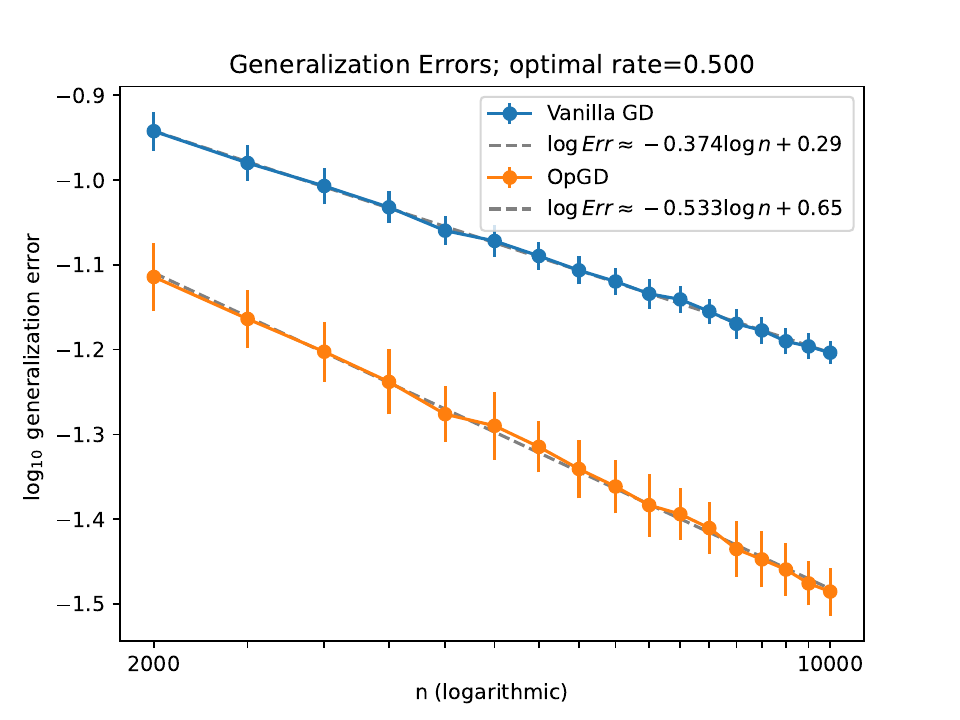}
    \end{minipage}%
  }%
  \subfigure{
    \begin{minipage}[t]{0.5\linewidth}
      \centering
      \includegraphics[width=1.05\linewidth]{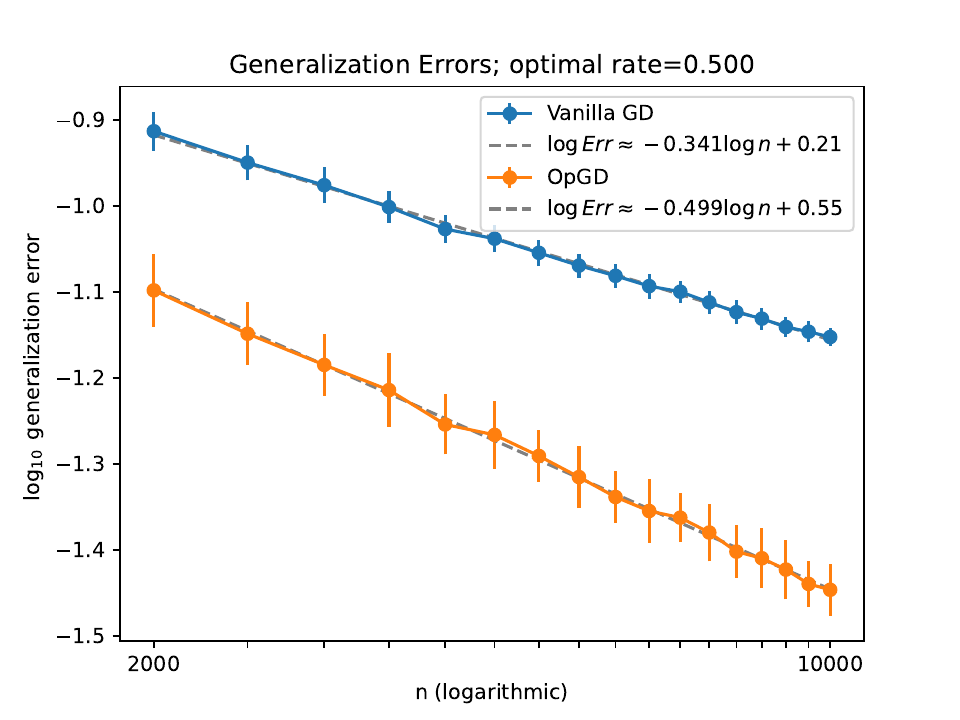}
    \end{minipage}%
  }%

  \centering
  \caption{Comparison of the generalization error rates between vanilla gradient descent and over-parameterized gradient descent (OpGD).
  We set $p=1$ and $q=2$ for the truth parameter $\bm{\theta}^*$,
  and $\gamma=1.5$ for the left column and $\gamma=3$ for the right column.
  For each $n$, we repeat $64$ times and plot the mean and the standard deviation.
  }

  \label{fig:ComparisonShort}
\end{figure}

Additionally, we investigate the evolution of the eigenvalue terms $a_j(t)b_j^D(t)$ over time $t$ as discussed in \cref{prop:EigLearn}.
The results are shown in \cref{fig:EigenvaluesEvolutionShort}.
We find that the eigenvalue terms can indeed adapt to the underlying structure of the signal:
for large signals, the eigenvalue terms approach the signals as the training progresses,
while for small signals, the eigenvalue terms do not increase significantly.
Moreover, we find that deeper over-parameterization reduces the fluctuations of the eigenvalue terms for the noise components,
and thus improves the generalization performance of the model.

In summary, the numerical experiments validate our theoretical results and provide insights into the adaptivity and generalization properties of the over-parameterized gradient descent method.

\begin{figure}[htb]
  \centering

  \includegraphics[width=1.\linewidth]{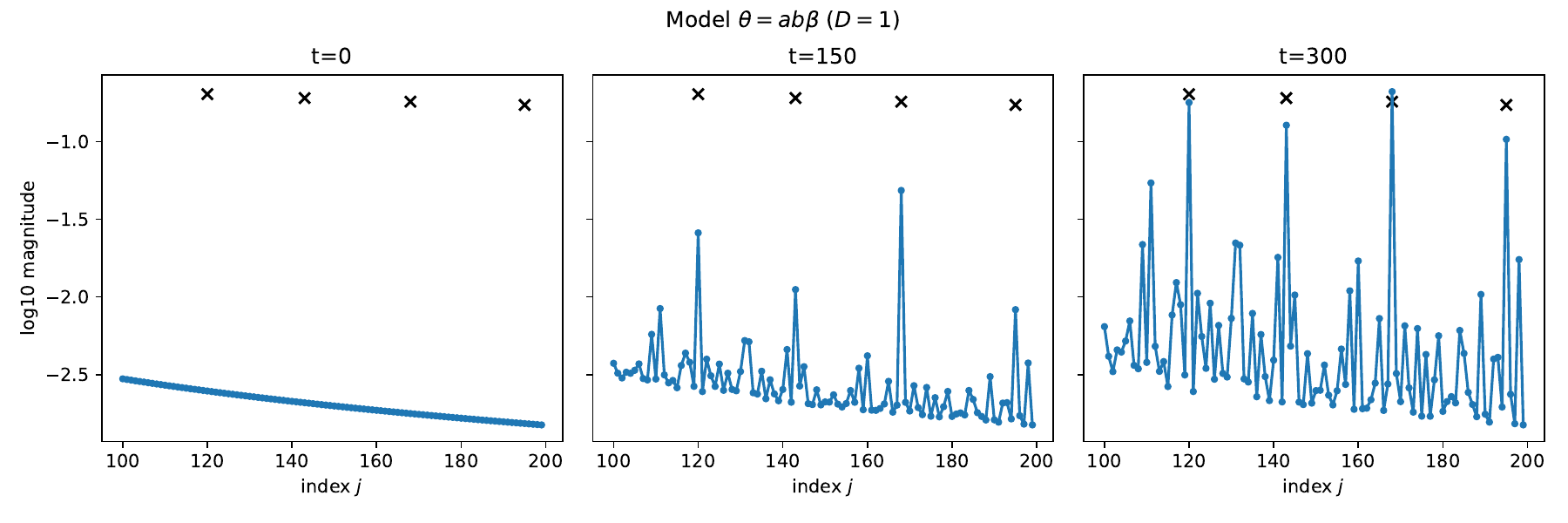}

  \centering
  \caption{
    The evolution of the trainable eigenvalues $a_j(t) b_j^D(t)$ over the time $t$ across components $j=100$ to $200$
    for $D=1$.
    The blue line shows the eigenvalues and the black marks show the non-zero signals scaled according to \cref{prop:EigLearn}.
    For the settings, we set $p=1$, $q=2$ and $\gamma=2$.
  }

  \label{fig:EigenvaluesEvolutionShort}
\end{figure}

\section{Conclusion and Future Work}\label{sec:conclusion-and-future-work}

In this work, we studied the generalization properties of over-parameterized gradient descent in the context of sequence models.
We showed that the over-parameterization method can adapt to the underlying structure of the signal and significantly outperform the vanilla fixed-eigenvalues method.
These results provide a new perspective on the benefits of over-parameterization and offer insights into the adaptivity and generalization properties of neural networks beyond the kernel regime.

However, there are also limitations of this work and many interesting directions for future research.
For example, one can directly consider the over-parameterization in the kernel regression
by replacing the feature map $\Phi(x) = (\lambda_j^{1/2} e_j(x))_{j \geq 1}$ with
the learnable one $\Phi(x;\bm{a}) = (a_j e_j(x))_{j \geq 1}$, where $a_j$'s are learnable parameters initialized by $a_j(0) = \lambda_j^{1/2}$.
However, the analysis would be more challenging since now the components are mutually coupled in the gradient flow dynamics.

Perhaps one of the most interesting directions is to study how the over-parameterization method can also learn
the eigenfunctions of the kernel during the training process,
which leads to the truly ``adaptive kernel method''.
We believe that future studies on this topic will provide a deeper theoretical understanding of the success of neural networks in practice.

\clearpage

  \begin{ack}
    Qian Lin's research was supported in part by the National Natural Science Foundation of China (Grant 92370122, Grant 11971257).

    We thank the anonymous reviewers and area chairs  for their valuable comments and suggestions.
    Their feedback helped us improve the quality of the paper.
  \end{ack}

  \phantomsection
  \addcontentsline{toc}{section}{References}
  \bibliographystyle{plainnat}
  \bibliography{main}
  \clearpage


  \appendix
  \tableofcontents
  \clearpage

\section{Additional related works}\label{sec:related-works}

In this section, we will provide additional related works and further discussions.

\paragraph{Regression with fixed kernel}

The regression problem with a fixed kernel has been well studied in the literature.
It has been shown that with proper regularization, kernel methods can achieve the minimax optimal rates under various conditions
~\citep{caponnetto2007_OptimalRates,steinwart2009_OptimalRates,lin2018_OptimalRatesa,fischer2020_SobolevNorm,zhang2023_OptimalityMisspecifieda}.
Recently, a sequence of works provided more refined results on the generalization error of kernel methods~\citep{li2023_SaturationEffect,bordelon2020_SpectrumDependent,cui2021_GeneralizationError,mallinar2022_BenignTempered,li2023_AsymptoticLearning,li2024_GeneralizationError}.

\paragraph{The NTK regime of neural networks}
Over-parameterized neural networks are connected to kernel methods through the neural tangent kernel (NTK) theory proposed by \citet{jacot2018_NeuralTangent},
which shows that the dynamics of the neural network at infinite width limited can be approximated by a kernel method with respect to the corresponding NTK\@.
The theory was further developed by many follow-up works~\citep{arora2019_ExactComputation,arora2019_FinegrainedAnalysisa,du2018_GradientDescent,lee2019_WideNeurala,allen-zhu2019_ConvergenceTheory}.
Also, the properties on the corresponding NTK have also been studied~\citep{geifman2020_SimilarityLaplace,bietti2020_DeepEquals,li2024_EigenvalueDecay}.

\paragraph{Over-parameterization as Implicit Regularization}
There has been a surge of interest in understanding the role of over-parameterization in deep learning.
One perspective is that over-parameterized models trained by gradient-based methods can expose certain implicit bias towards simple solutions,
which include linear models~\citep{hoff2017_LassoFractional,vaskevicius2019_ImplicitRegularization,zhao2022_HighdimensionalLinear,li2021_ImplicitSparse},
matrix factorization~\citep{gunasekar2017_ImplicitRegularization,arora2019_ImplicitRegularization,li2021_ResolvingImplicit,razin2021_ImplicitRegularization,chou2023_GradientDescent},
linear networks~\citep{yun2021_UnifyingView,nacson2022_ImplicitBias}
and neural networks~\citep{kubo2019_ImplicitRegularization,woodworth2020_KernelRich}.
Moreover, variants of gradient descent such as stochastic gradient descent are also shown to have implicit regularization effects~\citep{li2022_WhatHappens,vivien2022_LabelNoise}.
However, most of these works focus only on the optimization process and the final solution, but the generalization performance is not well understood.

\paragraph{Generalization Guarantees for Over-parameterized Models}
Being the most related to our work, a few works provided generalization guarantees for over-parameterized models, which only include linear models~\citep{zhao2022_HighdimensionalLinear,li2021_ImplicitSparse,vaskevicius2019_ImplicitRegularization} and single index model~\citep{fan2021_UnderstandingImplicit}.
In detail, the two parallel works~\citep{zhao2022_HighdimensionalLinear,vaskevicius2019_ImplicitRegularization}
studied the high-dimensional linear regression problem under sparse settings and showed that a two-layer diagonal over-parameterized model with proper initialization and early stopping can achieve minimax optimal recovery under certain conditions.
The subsequent work~\citep{li2021_ImplicitSparse} obtained similar results for multi-layer diagonal over-parameterized models.

\paragraph{The adaptive kernel perspective}
The idea of an adaptive kernel has appeared in a few recent works in various forms~\citep{chen2023_KernelLearning,gatmiry2021_OptimizationAdaptive,lejeune2023_AdaptiveTangent,yang2022_FeatureLearning,ba2022_HighdimensionalAsymptotics},
which is also known as ``feature learning''.
Notably, \citet{gatmiry2021_OptimizationAdaptive} showed the benefits brought by the adaptivity of the kernel on a three-layer neural network,
which is similar to our work in the adaptive kernel perspective.
However, our work and theirs consider different aspects of the adaptive kernel:
while they considered an adaptive kernel space in the form of $G\odot K^\infty$ around the NTK space,
we consider an eigenvalue-parameterized kernel space with fixed eigen-basis.
We believe that these various results, including ours, will contribute to a better understanding of the generalization properties of
over-parameterized models as well as neural networks.

  \clearpage

\section{Supplementary Technical Details}\label{sec:supplementary-technical-details}

\subsection{The connection between RKHS and the sequence model}\label{subsec:Connection_Sequence}

\paragraph{Diagonalized kernel gradient flow as sequence model}

Moreover, this connection can also be seen directly from the following.
The Mercer's decomposition of the RKHS $\caH$ associated with the kernel $k$ also provides a series representation of the RKHS:
\begin{align}
  \caH = \left\{ \sum_{j =1}^\infty a_j \lambda_j^{\hf} e_i \;\big|\; (a_j)_{j \geq 1} \in \ell^2 \right\},
\end{align}
where we denote by $\ell^2 = \left\{ (a_j)_{j \geq 1} \mid \sum_{j =1}^\infty a_j^2 < \infty \right\}$.
Therefore, by introducing the feature mapping $\Phi : \caX \to \ell^2$ defined by
\begin{align}
  \Phi(x) = (\lambda_j^{\hf}e_j(x))_{j \geq 1},
\end{align}
we establish a one-to-one correspondence between a function $f \in \caH$ in the RKHS and a vector $\beta \in \ell^2$ in feature space via
$f(x) = \ang{\Phi(x),\beta}_{\ell^2}$ for $f \in \caH$ and $\beta \in \ell^2$.
Moreover, it is convenient to consider $\Phi(x)$ as column vectors
and also define the feature matrix $\Phi(X) = (\Phi(x_1),\dots,\Phi(x_n))$ for $X = (x_1,\dots,x_n)$.
Then, the gradient flow \cref{eq:KernelGD_RKHS} in the feature space $\ell^2$ writes
\begin{align}
  \dot{\beta}(t) = - \nabla_{\beta}\caL = - \frac{1}{n} \Phi(X)\Phi(X)^{\T} \beta(t) + \frac{1}{n}\Phi(X) \bm{y}.
\end{align}
Intuitively, since the $j,l$-th entry
\begin{align*}
  \left( \frac{1}{n}\Phi(X)\Phi(X)^{\T} \right)_{j,l} =  \frac{\lambda_j^{\hf}\lambda_l^{\hf}}{n}\sum_{i=1}^n e_j(x_i)e_l(x_i)
\end{align*}
the law of large numbers implies that $\frac{1}{n}\Phi(X)\Phi(X)^{\T}$
converges to the diagonal operator $\Lambda = \mr{diag}(\lambda_1,\lambda_2,\dots)$;
moreover, since $j$-th entry
\begin{align*}
  \left( \frac{1}{n}\Phi(X) \bm{y} \right)_j =  \frac{\lambda_j^{\hf}}{n}\sum_{i=1}^n f(x_i) e_j(x_i) + \frac{\lambda_j^{\hf}}{n}\sum_{i=1}^n e_j(x_i)\ep_i,
\end{align*}
the central limit theorem suggest that it can be approximated by $\lambda_j^{\hf} z_j$,
where $z_j = \theta_j + \xi_j$ and $\xi_j$ is a normal random variable with mean zero and variance $\sigma^2/n$.
Therefore, with these approximations, the equation can be diagonalized as
\begin{align*}
  \dot\beta_{j}(t) = - \lambda_j \beta_{j} + \lambda_j^{\hf} z_j =
  - \lambda_j (\beta_{j}(t) - \lambda_j^{-\hf} z_j).
\end{align*}
Moreover, we can rewrite the representation $f(x) = \ang{\Phi(x),\beta}_{\ell^2}$ into
\begin{align*}
  f = \sum_{j = 1}^{\infty} \lambda_j^{\hf} \beta_j e_j = \sum_{j = 1}^{\infty} \theta_j e_j,\quad
  \theta_j = \lambda_j^{\hf} \beta_j.
\end{align*}
Then, in terms of $\theta$, we have
\begin{align*}
  \dot{\beta}_j(t) = \lambda_j^{\hf} \dot\beta_{j}(t) = - \lambda_j (\theta_j - z_j).
\end{align*}
This is exactly the vanilla gradient flow in the sequence model in \cref{sec:sequence-model}.

Furthermore, we can consider the parameterized feature map
\begin{align*}
  \Phi_{\bm{a}}(x) = \xk{a_j e_j(x)}_{j \geq 1},
\end{align*}
where $\bm{a} = (a_j)_{j \geq 1}$.
We can consider similar gradient flow in the feature space with both $\beta$ and $\bm{a}$ trainable.
Then, with similar diagonalizing argument, we can show that the corresponding gradient flow in the sequence model
is just the over-parameterized gradient flow in \cref{eq:OpModel}.
Similar correspondence can be established for the multi-layer models \cref{eq:OpModelDeep}.


\subsection{The examples in \cref{sec:kernel-regression}}

\subsubsection{\cref{example:common-eigenfunctions}}
The deduction in \cref{example:common-eigenfunctions} is straightforward.
The series expansion \cref{eq:RKHS_Expansion} can be written as
\begin{align*}
  \zk{\caH_k}^s = \dk{f = \sum_{j \geq 1} f_j e_j \mid \sum_{j \geq 1} f_j^2 \lambda_{j}^{-s} < \infty}.
\end{align*}
We recall that $\lambda_{j,1} \asymp j^{-\gamma_1}$ and $\lambda_{j,2} \asymp j^{-\gamma_2}$.
Let $f^* = \sum_{j \geq 1} f_j^* e_j$.
Then, for $\ell = 1,2$,
\begin{align*}
  f^* \in [\caH_{k_\ell}]^{s_\ell} \Longleftrightarrow
  \sum_{j \geq 1} (f^*_j)^2 \lambda_{j,1}^{-s_\ell} < \infty
  \Longleftrightarrow
  \sum_{j \geq 1} (f^*_j)^2 j^{\gamma_\ell s_\ell} < \infty
\end{align*}
Consequently,
\begin{align*}
  f^* \in [\caH_{k_1}]^{s_1} \Longleftrightarrow f^* \in [\caH_{k_2}]^{s_2} \qq{for} \gamma_1 s_1 = \gamma_2 s_2.
\end{align*}

\subsubsection{\cref{example:low-dim-structure}}
We justify the claims in \cref{example:low-dim-structure}.
Let us consider the torus $\bbT^d = [-1,1)^d$ and the uniform measure $\mu$ on $\bbT^d$ (namely $\mu(\bbT^d) = 1$).
Let us recall that the multidimensional Fourier basis is given by $\phi_{\bm{m}}(x) = \exp(i \pi \ang{\bm{m},x})$ for $\bm{m} \in \bbZ^d$.

The Sobolev space $H^s(\bbT^d)$ is defined via the Fourier coefficients as
\begin{align*}
  H^s(\bbT^d) = \dk{f \in L^2(\bbT^d) \mid \sum_{\bm{m} \in \bbZ^d} \abs{f_{\bm{m}}}^2 (1 + \norm{\bm{m}}^2)^s < \infty},
\end{align*}
which is equipped with the inner product (as thus the induced norm)
\begin{align*}
  \ang{f,g}_{H^s(\bbT^d)} = \sum_{\bm{m} \in \bbZ^d} f_{\bm{m}}\overline{g_{\bm{m}}} (1 + \norm{\bm{m}}^2)^s.
\end{align*}

Now, we briefly show that $H^s(\bbT^d)$ is an RKHS when $s > d/2$.
It suffices to show that $f \mapsto f(x)$ is bounded for each $x \in \bbT^d$~\citep{andreaschristmann2008_SupportVector}.
Using the boundedness of $\phi_{\bm{m}}$, we have
\begin{align*}
  \sum_{\bm{m} \in \bbZ^d} \abs{f_{\bm{m}} \phi_{\bm{m}}}
  \leq \sum_{\bm{m} \in \bbZ^d} \abs{f_{\bm{m}}}
  \leq \zk{\sum_{\bm{m} \in \bbZ^d} \abs{f_{\bm{m}}}^2 (1 + \norm{\bm{m}}^2)^s}^{\frac{1}{2}} \zk{\sum_{\bm{m} \in \bbZ^d} (1 + \norm{\bm{m}}^2)^{-s}}^{\frac{1}{2}}
\end{align*}
Now,
\begin{align*}
  \sum_{\bm{m} \in \bbZ^d} (1 + \norm{\bm{m}}^2)^{-s}
  \lesssim \int_{x \in \R^d} (1 + \abs{x}^2)^{-s} \dd x
  \lesssim \int_0^{\infty} (1 + r^2)^{-s} r^{d-1} \dd r.
\end{align*}
Since the last integral is finite when $s > d/2$, we find that there is a constant $C$ such that
\begin{align*}
  \sum_{\bm{m} \in \bbZ^d} \abs{f_{\bm{m}} \phi_{\bm{m}}} \leq C \norm{f}_{H^s(\bbT^d)}.
\end{align*}
Therefore, the series expansion $f(x) = \sum_{\bm{m} \in \bbZ^d} f_{\bm{m}} \phi_{\bm{m}}(x)$ converges absolutely and uniformly,
and thus $\abs{f(x)} \leq \sum_{\bm{m} \in \bbZ^d} \abs{f_{\bm{m}} \phi_{\bm{m}}} \leq C \norm{f}_{H^s(\bbT^d)}$,
showing that $f \mapsto f(x)$ is bounded for each $x \in \bbT^d$.

Moreover, it is easy to see from the Mercer's decomposition
\cref{eq:Mercer} and the power series expansion \cref{eq:RKHS_Expansion}
that the kernel of $H^s(\bbT^d)$ is given by
$k(x,x') = \sum_{\bm{m} \in \bbZ^d} (1 + \norm{\bm{m}}^2)^{-s} \phi_{\bm{m}}(x) \overline{\phi_{\bm{m}}(x')}$,
so its eigenvalues are $\lambda_{\bm{m}} = (1 + \norm{\bm{m}}^2)^{-s}$.

To determine the eigenvalue decay rate of $\lambda_{\bm{m}} = (1 + \norm{\bm{m}}^2)^{-s}$ after reordering them in decreasing order,
it suffices to determine the count $\# \dk{\bm{m} : \lambda_{\bm{m}} > \delta }$:
the eigenvalue decay rate is $\beta$ if $\# \dk{\bm{m} : \lambda_{\bm{m}} > \delta } \asymp \delta^{-1/\beta}$, see, e.g., Proposition A.1 in \citet{li2024_GeneralizationError}.
We have
\begin{align*}
  \# \dk{\bm{m} : \lambda_{\bm{m}} > \delta }
  &= \# \dk{\bm{m} : (1 + \norm{\bm{m}}^2)^{-s} > \delta }\\
  &\asymp \mr{Vol}( \dk{x \in \R^d : (1 + \abs{x}^2)^{-s} > \delta } ) \\
  &\asymp \mr{Vol}( \dk{x \in \R^d : \abs{x} < \delta^{-\frac{1}{2s}} } ) \asymp \delta^{-\frac{d}{2s}}.
\end{align*}

We consider a function  $f^*(x) = g(x_1,\dots,x_{d_0})$ with low-dimensional structure.
Let us denote by $x_{\leq d_0} = (x_1,\dots,x_{d_0})$ and $x_{>d_0} = (x_{d_0+1},\dots,x_d)$ for simplicity.
Then, the Fourier coefficients of $f^*$ are given by
\begin{align*}
  f_{\bm{m}} &= \ang{f^*, \phi_{\bm{m}}}_{L^2(\bbT,\dd \mu)} \\
  &= 2^{-d}\int_{\bbT^{d_0}\times \bbT^{d - d_0}} g(x_1,\dots,x_{d_0}) \exp(i\pi \ang{\bm{m}_{\leq d_0},x_{\leq d_0}})
  \cdot \exp(i \pi \ang{\bm{m}_{> d_0},x_{> d_0}}) \dd x_{\leq d_0} \dd x_{> d_0} \\
  &= 2^{-d_0} \int_{\bbT^{d_0}} g(x_1,\dots,x_{d_0}) \exp(i \pi \ang{\bm{m}_{\leq d_0},x_{\leq d_0}}) \dd x_{\leq d_0} \\
  &\quad \cdot 2^{-(d-d_0)}\int_{\bbT^{d - d_0}} \exp(i \pi \ang{\bm{m}_{> d_0},x_{> d_0}}) \dd x_{> d_0} \\
  &= g_{\bm{m}_{\leq d_0}} \cdot \bm{1}_{\dk{\bm{m}_{> d_0}=\bm{0}}},
\end{align*}
so we show that
\begin{align*}
  f_{\bm{m}} =
  \begin{cases}
    g_{\bm{m}_{\leq d_0}}, & \bm{m} = (\bm{m}_{\leq d_0},\bm{0}),~ \bm{m}_{\leq d_0} \in \mathbb{Z}^{d_0}, \\
    0, & \text{otherwise}.
  \end{cases}
\end{align*}

We recall that $g \in H^t(\bbT^{d_0})$, so
\begin{align*}
  \sum_{\bm{m}_{\leq d_0} \in \bbZ^{d_0}} \abs{g_{\bm{m}_{\leq d_0}}}^2 (1 + \norm{\bm{m}_{\leq d_0}}^2)^t < \infty.
\end{align*}
To determine the smoothness of $f^*$ on $[\mathcal{H}_{k_1}]^{s}$ and $[\mathcal{H}_{k_2}]^{s}$,
following \cref{eq:RKHS_Expansion}, we compute
\begin{align*}
  \sum_{\bm{m} \in \bbZ^d}
  \abs{f_{\bm{m}}}^2 \zk{(1 + \norm{\bm{m}}^2)^r}^s
  &= \sum_{\bm{m} = (\bm{m}_{\leq d_0},\bm{0}), \bm{m}_{\leq d_0} \in \bbZ^{d_0}}
  \abs{g_{\bm{m}_{\leq d_0}}}^2 \zk{(1 + \norm{\bm{m}_{\leq d_0}}^2)^{r}}^s \\
  & \sum_{\bm{m}_{\leq d_0} \in \bbZ^{d_0}} \abs{g_{\bm{m}_{\leq d_0}}}^2 (1 + \norm{\bm{m}_{\leq d_0}}^2)^{rs},
\end{align*}
so $f^*$ belongs to $[\mathcal{H}_{k_1}]^{s}$ and $[\mathcal{H}_{k_2}]^{s}$ for $s = t / r$

\subsubsection{\cref{example:misalignment}}
We recall that $f^* = \sum_{j \geq 1} \theta_j^* e_j$,
\begin{align*}
  \abs{\theta_{l(j)}^*} \asymp j^{-(p+1)/2} \qq{and} \ell(j) \asymp j^q \qq{for} p > 0,~q \geq 1,
\end{align*}
where $\ell(j)$ gives the descending order of $\abs{\theta_j^*}$.
To compute the relative smoothness of $f^*$ w.r.t. $\lambda_{j,1} \asymp j^{-\gamma}$, we compute
\begin{align*}
  \sum_{j \geq 1} \abs{\theta_j^*}^2 \lambda_j^{-s}
  = \sum_{j \geq 1} \abs{\theta_{l(j)}^*}^2 \lambda_{l(j)}^{-s}
  \asymp \sum_{j \geq 1} j^{-(p+1)} (\ell(j))^{\gamma s} \asymp \sum_{j \geq 1} j^{-(p+1)} j^{q \gamma s}
   \asymp \sum_{j \geq 1} j^{- 1 - (p - q\gamma s)},
\end{align*}
so we have $s < p/(q\gamma)$ (but arbitrarily close) and the corresponding generalization error rate is
$\frac{s\gamma}{s\gamma + 1} < \frac{p}{p+q}$.
The generalization error rate w.r.t. $\lambda_{l(j),2} \asymp j^{-\gamma}$ can be computed similarly.

  \clearpage


\section{Detailed Numerical Experiments}\label{sec:numerical-experiments}

In this section, we provide numerical experiments to validate the theoretical results.
The codes are provided in the supplementary material.
We approximate the gradient flow equation \cref{eq:TwoLayerEq} and \cref{eq:MultiLayerEq} by discrete-time gradient descent with sufficiently small step size.
Moreover, we truncate the sequence model to the first $N$ terms for some very large $N$.
We consider the settings as in \cref{cor:Op} that $\bm{\theta}^*$ is given by \cref{eq:GappedDecay} for some $p > 0$ and $q \geq 1$.
We set $\epsilon^2 = n^{-1}$, where $n$ can be regarded as the sample size, and consider the asymptotic performance of the generalization error as $n$ grows.
For the stopping time, we choose the oracle one that minimizes the generalization error for each method.
We first compare the generalization error rates between vanilla gradient descent and over-parameterized gradient descent (OpGD) in \cref{fig:Comparison}.
The results show that the over-parameterized gradient descent can achieve the optimal generalization error rate,
while the vanilla gradient descent suffers from the misalignment caused by $q > 1$ and thus has a slower convergence rate.
Moreover, with a logarithmic least-squares fitting,
we find that the resulting generalization error rates are consistent with the theoretical results in \cref{cor:Op} ($0.5$ for OpGD and $0.33$ for vanilla GD);
the oracle stopping times for the over-parameterized gradient descent also match the theoretical value ($0.5$).

\begin{figure}[htb]
  \centering
  \subfigure{
    \begin{minipage}[t]{0.5\linewidth}
      \centering
      \includegraphics[width=1.05\linewidth]{fig/CompareMinGenN_p1q2L15}
    \end{minipage}%
  }%
  \subfigure{
    \begin{minipage}[t]{0.5\linewidth}
      \centering
      \includegraphics[width=1.05\linewidth]{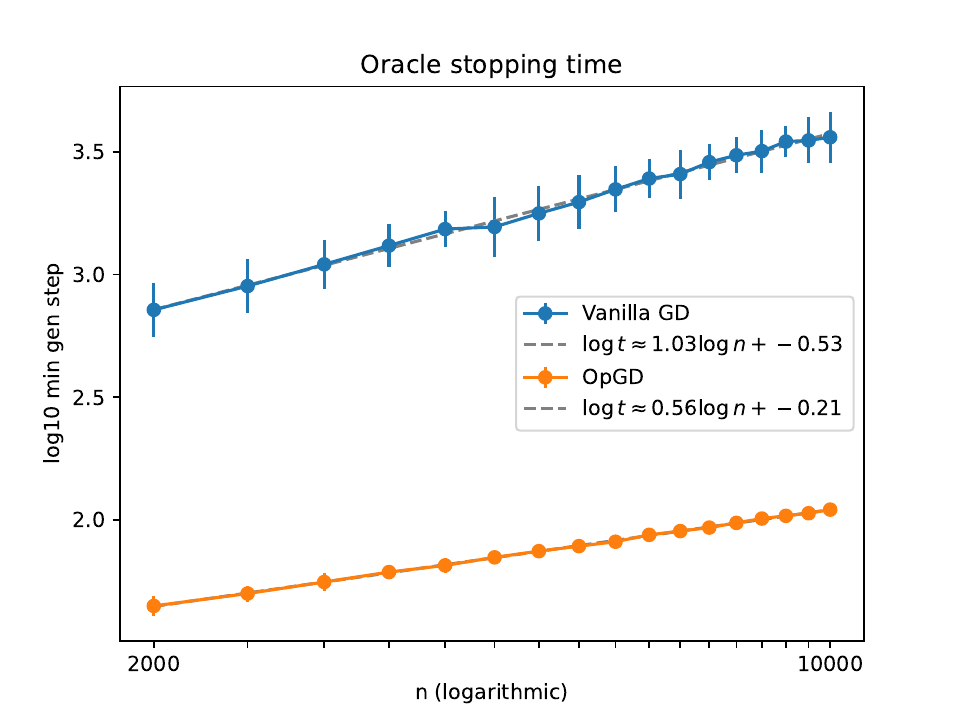}
    \end{minipage}%
  }%

  \subfigure{
    \begin{minipage}[t]{0.5\linewidth}
      \centering
      \includegraphics[width=1.05\linewidth]{fig/CompareMinGenN_p1q2L3}
    \end{minipage}%
  }%
  \subfigure{
    \begin{minipage}[t]{0.5\linewidth}
      \centering
      \includegraphics[width=1.05\linewidth]{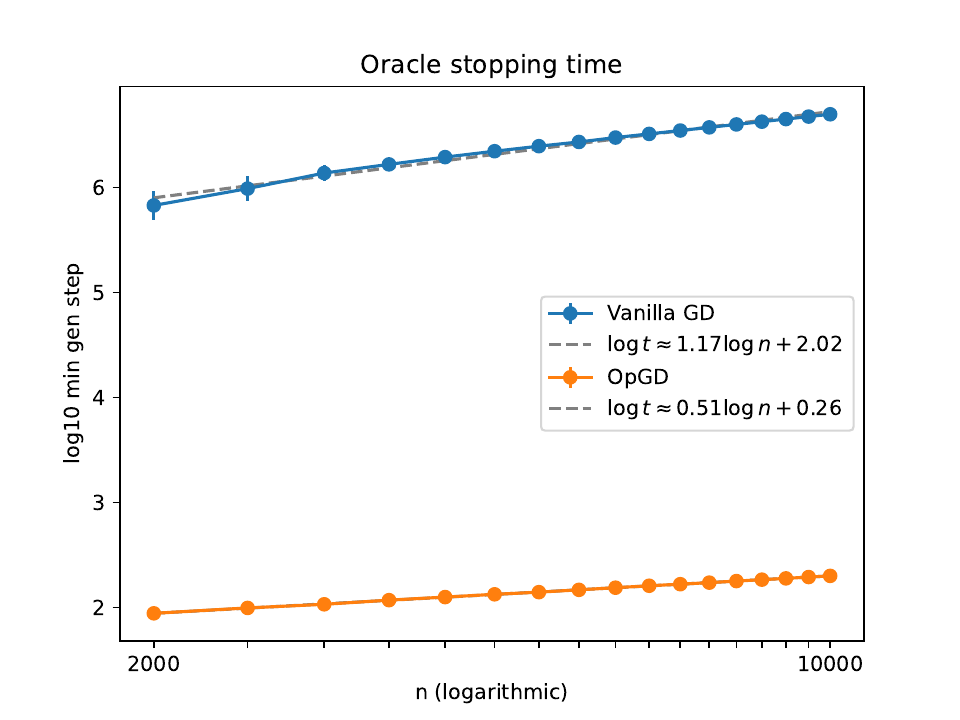}
    \end{minipage}%
  }%

  \centering
  \caption{Comparison of the generalization error rates between vanilla gradient descent and over-parameterized gradient descent (OpGD).
  We set $p=1$ and $q=2$ for the truth parameter $\bm{\theta}^*$.
  The left and right columns show respectively the generalization error and the orcale stopping time with respect to $n$.
  For the upper row, we set the eigenvalue decay rate $\gamma = 1.5$; for the lower row, we set $\gamma = 3$.
  For each $n$, we repeat $64$ times and plot the mean and the standard deviation.
  }

  \label{fig:Comparison}
\end{figure}

Furthermore, we investigate the generalization performance of over-parameterized gradient descent (also with deeper parameterization)
for different settings of the truth parameter $\bm{\theta}^*$, the eigenvalue decay rate $\gamma$ and the depth $D$.
The results are reported in \cref{tab:1}.
We find that the generalization error rates are in general consistent with the theoretical results in \cref{cor:Op},
while there are some fluctuations due to the finite sample size.
Comparing the results for $\gamma = 1.1$ across depth $D=0$, $D=1$ and $D=2$,
we see that the method with $D=0$ has the slowest convergence rate, while the method with $D=2$ has the fastest convergence rate,
justifying that deeper parameterization can improve the generalization performance.
In summary, the numerical experiments validate our theoretical results.

\begin{table}
  \centering

  \label{tab:1}
  \begin{tabular}{|c|ccc|ccc|ccc|}
    \hline
    & \multicolumn{3}{c|}{\(p = 0.6\) ($r^*=0.37$)} & \multicolumn{3}{c|}{\(p = 1\) ($r^*=0.5$)} & \multicolumn{3}{c|}{\(p = 3\) ($r^*=0.75$)} \\
    \hline
    \(\gamma\) & \(q = 1\) & \(q = 1.5\) & \(q = 2\) & \(q = 1\) & \(q = 1.5\) & \(q = 2\) & \(q = 1\) & \(q = 1.5\) & \(q = 2\) \\
    \hline
    1.1        & 0.38      & 0.40        & 0.50      & 0.50      & 0.45        & 0.48      & 0.78      & 0.69        & 0.67      \\
    2          & 0.36      & 0.41        & 0.52      & 0.49      & 0.46        & 0.50      & 0.80      & 0.73        & 0.72      \\
    3          & 0.36      & 0.41        & 0.52      & 0.48      & 0.46        & 0.50      & 0.76      & 0.73        & 0.74      \\
    \hline
  \end{tabular}
  \caption{Convergence rates of the over-parameterized gradient descent \cref{eq:GradientFlow2} under different settings of the truth parameter $p,q$ and the eigenvalue decay rate $\gamma$, where $r^*$ is the ideal convergence rate.
  The convergence rate is estimated by the logarithmic least-squares fitting of the generalization error with $n$ ranging from $2000,2200,\dots,4000$,
    where the generalization error is the mean of $256$ repetitions.
  }

\end{table}

\begin{table}
  \centering

  \label{tab:2}

  \begin{tabular}{|c|c|ccc|ccc|ccc|}
    \hline
    & & \multicolumn{3}{c|}{\(p = 0.6\) ($r^*=0.37$)} & \multicolumn{3}{c|}{\(p = 1\) ($r^*=0.5$)} & \multicolumn{3}{c|}{\(p = 3\) ($r^*=0.75$)} \\
    \hline
    & \(\gamma\) & \(q = 1\) & \(q = 1.5\) & \(q = 2\) & \(q = 1\) & \(q = 1.5\) & \(q = 2\) & \(q = 1\) & \(q = 1.5\) & \(q = 2\) \\
    \hline
    \multirow{3}{*}{$D=1$} & 1.1        & 0.36      & 0.40        & 0.52      & 0.49      & 0.46        & 0.49      & 0.79      & 0.73        & 0.72      \\
    & 2          & 0.36      & 0.40        & 0.52      & 0.48      & 0.46        & 0.50      & 0.76      & 0.73        & 0.73      \\
    & 3          & 0.30      & 0.32        & 0.38      & 0.45      & 0.41        & 0.44      & 0.75      & 0.73        & 0.74      \\
    \hline
    \multirow{3}{*}{$D=2$} & 1.1        & 0.34      & 0.39        & 0.49      & 0.46      & 0.44        & 0.48      & 0.76      & 0.74        & 0.75      \\
    & 2          & 0.35      & 0.40        & 0.51      & 0.47      & 0.45        & 0.49      & 0.74      & 0.73        & 0.73      \\
    & 3          & 0.36      & 0.40        & 0.51      & 0.48      & 0.46        & 0.50      & 0.74      & 0.73        & 0.73      \\
    \hline
  \end{tabular}

  \caption{Convergence rates of the over-parameterized gradient descent \cref{eq:GradientFlowD} with $D=1$ and $D=2$.
  The settings are the same as in \cref{tab:1}.
  }

\end{table}


Additionally, we investigate the evolution of the eigenvalue terms $a_j(t)b_j^D(t)$ over time $t$ as discussed in \cref{prop:EigLearn}.
The results are shown in \cref{fig:EigenvaluesEvolution}.
We find that the eigenvalue terms can indeed adapt to the underlying structure of the signal:
for large signals, the eigenvalue terms approach the signals as the training progresses,
while for small signals, the eigenvalue terms do not increase significantly.
Moreover, we find that deeper over-parameterization reduces the fluctuations of the eigenvalue terms for the noise components,
and thus improves the generalization performance of the model.

\begin{figure}[htbp]
  \centering
  \includegraphics[width=1.\linewidth]{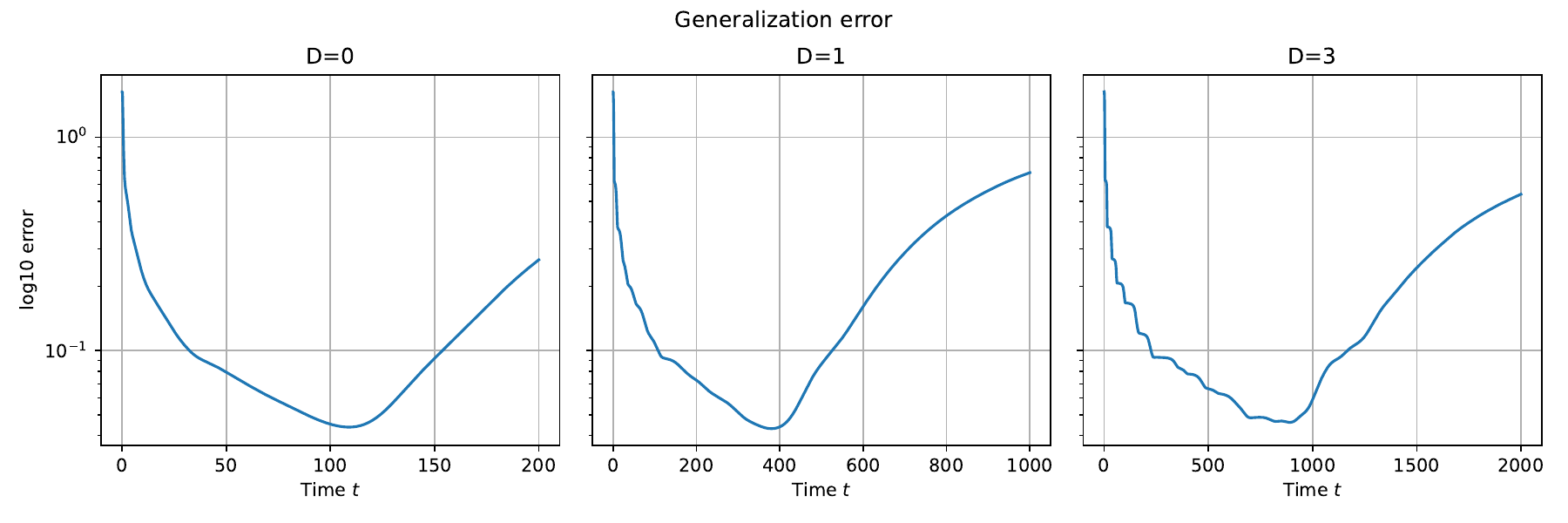}

  \includegraphics[width=1.\linewidth]{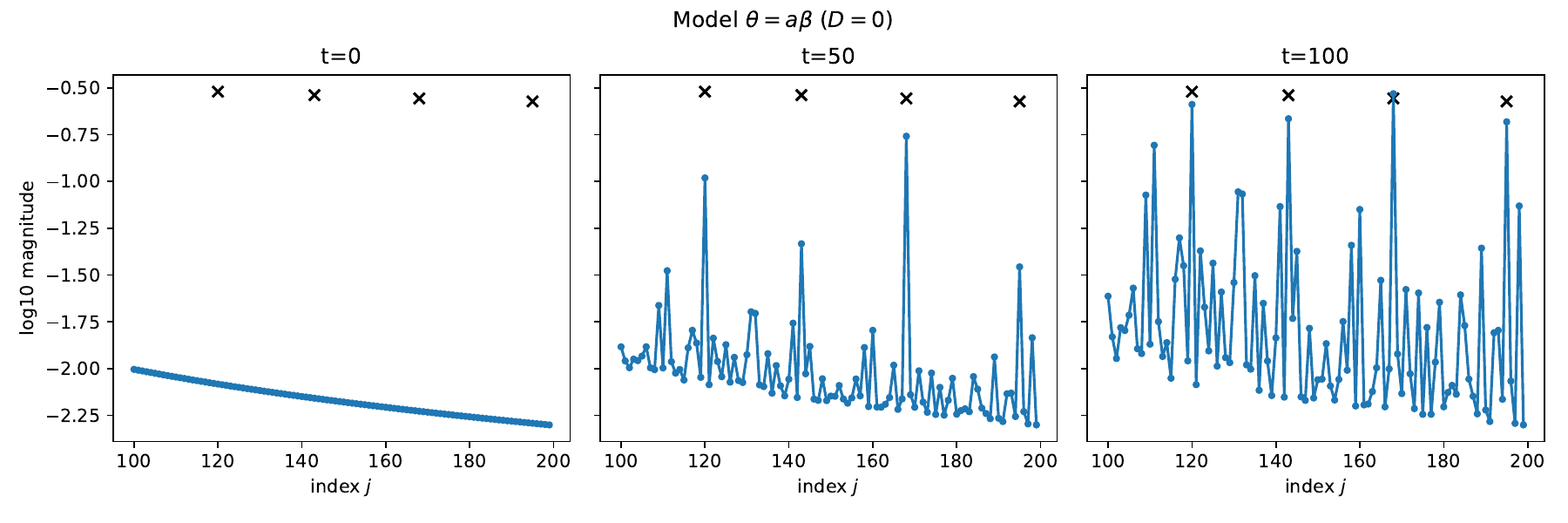}
%

  \includegraphics[width=1.\linewidth]{fig/eig_ab1}
  \includegraphics[width=1.\linewidth]{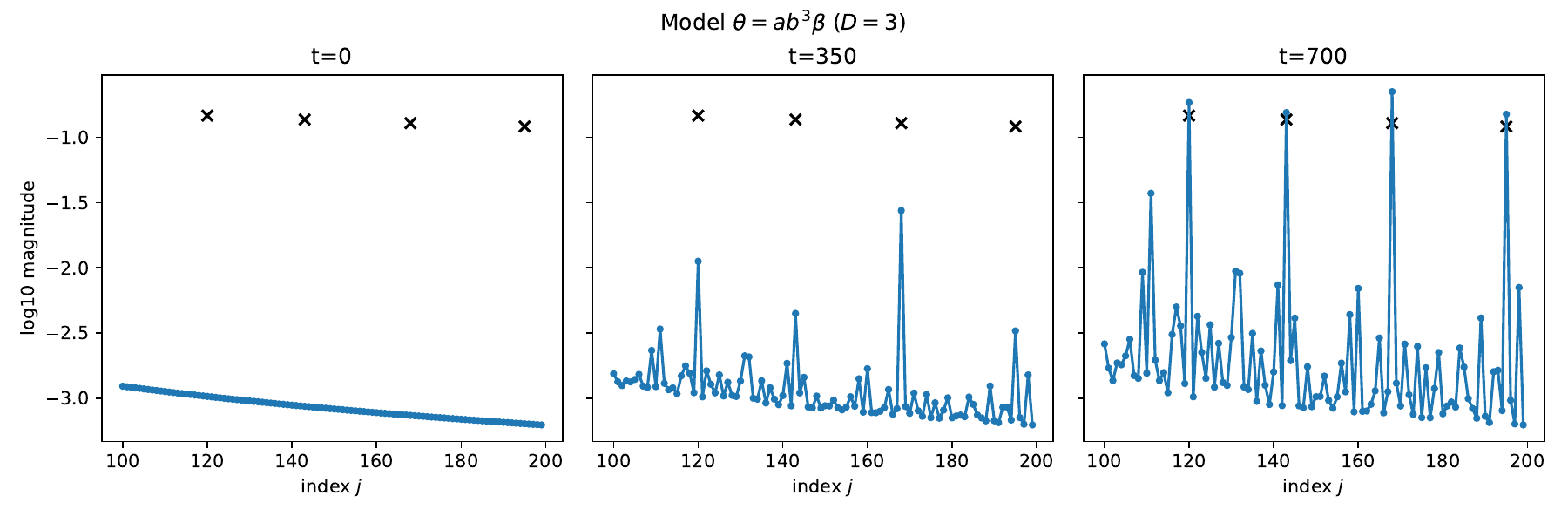}

  \centering
  \caption{
    The generalization error as well as the evolution of the eigenvalue terms $a_j(t)b_j^D(t)$ over the time $t$.
    The first row shows the generalization error of three parameterizations $D=0,1,3$ with respect to the training time $t$.
    The rest of the rows show the evolution of the eigenvalue terms $a_j(t)b_j^D(t)$ over the time $t$.
    For presentation, we select the index $j=100$ to $200$.
    The blue line shows the eigenvalue terms and the black marks show the non-zero signals scaled according to \cref{prop:EigLearn}.
    For the settings, we set $p=1$, $q=2$ and $\gamma=2$.
  }

  \label{fig:EigenvaluesEvolution}
\end{figure}

\subsection{Experiments beyond the sequence model}
We also explore the adaptivity of the over-parameterized gradient descent beyond the sequence model.
Let us consider the diagonal adaptive kernel method by parameterizing the feature map with
$\Phi(x;\bm{a}) = (a_j e_j(x))_{j \geq 1}$ introduced in \cref{sec:conclusion-and-future-work}.
We use the setting in \cref{example:low-dim-structure} where the eigenfunctions are the trigonometric functions.
In particular, we consider the truth function $f^*(x) = \sin(7.5 \pi x_1)$ with $x = (x_1,x_2) \in \R^2$.
We present the generalization error curve of a single trial and also the generalization error rates with respect to the sample size $n$ in \cref{fig:EigRKHS}.
The result also shows the benefit of over-parameterization in adapting to the underlying structure of the signal.

\begin{figure}[htb]
  \centering
  \subfigure{
    \begin{minipage}[t]{0.5\linewidth}
      \centering
      \includegraphics[width=1.\linewidth]{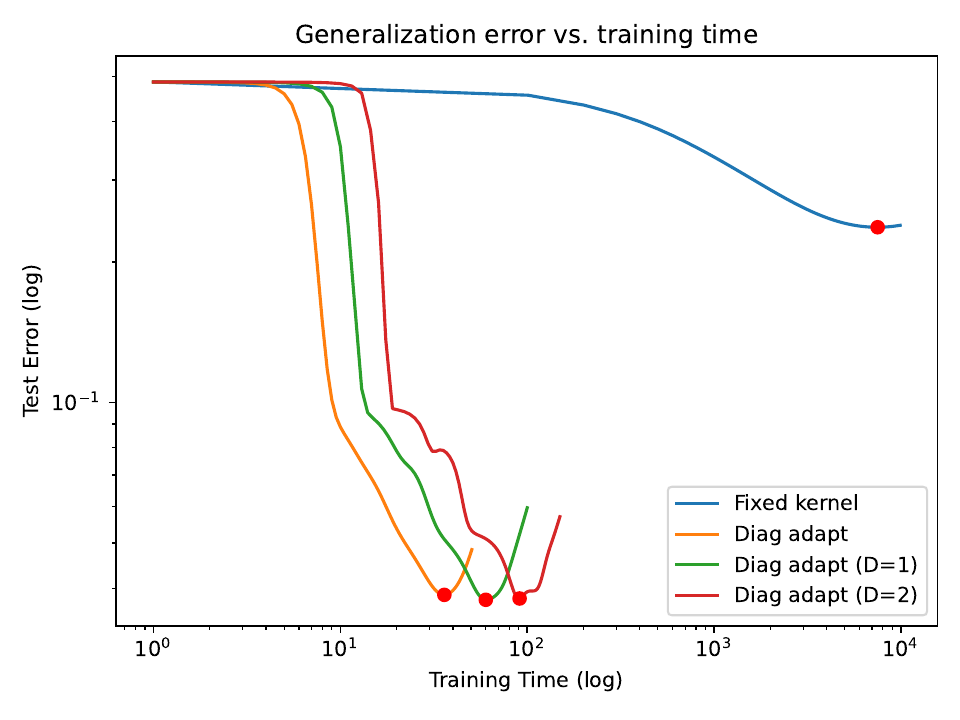}
    \end{minipage}%
  }%
  \subfigure{
    \begin{minipage}[t]{0.5\linewidth}
      \centering
      \includegraphics[width=1.\linewidth]{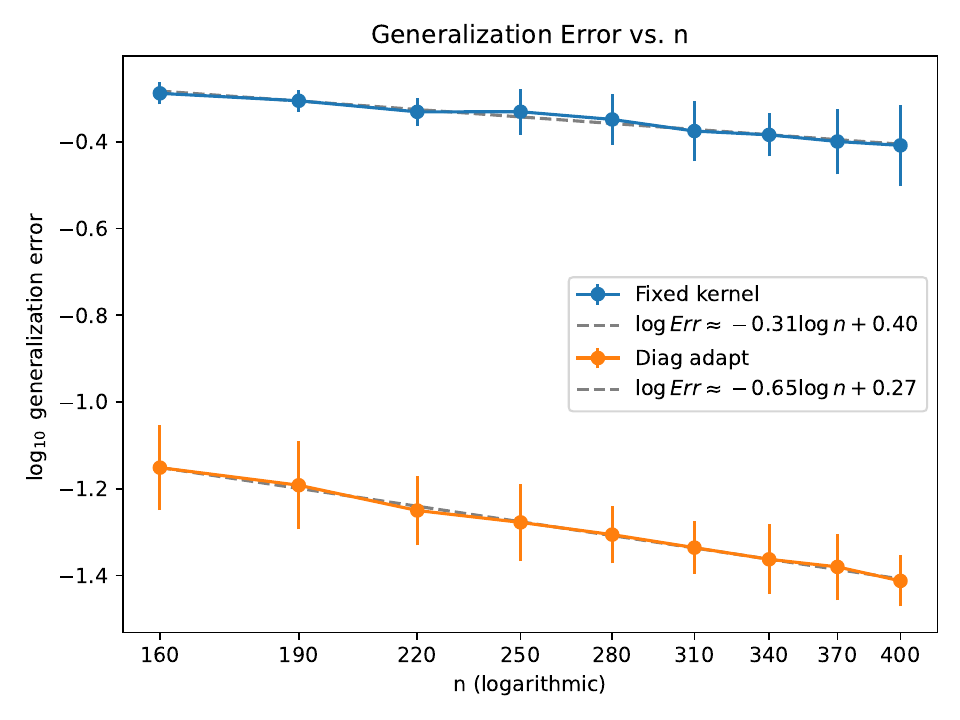}
    \end{minipage}%
  }%

  \centering
  \caption{
  Comparison of the generalization error between the fixed kernel gradient method and the diagonal adaptive kernel method.
  The left figure shows the generalization error curve of a single trial.
  The right figure shows the generalization error rates with respect to the sample size $n$.
  }

  \label{fig:EigRKHS}
\end{figure}

\subsection{Testing eigenvalue misalignment on real-world data}

In this section, we provide additional experiments to test the eigenvalue misalignment phenomenon on real-world data.
Recalling \cref{example:low-dim-structure} and \cref{example:misalignment}, we know that the misalignment happens when the order of the eigenvalues of the kernel mismatches the order of coefficients of the truth function.
Therefore, to test the misalignment, we compute the coefficients of the regression function over the eigen-basis of the kernel
and examine whether the coefficients decay in the order given by the kernel.
For the eigen-basis, we use the multidimensional Fourier basis (the trigonometric functions) considered in \cref{example:low-dim-structure},
where the order is given by the descending order of $\lambda_{\bm{m}} = (1 + \norm{\bm{m}}^2)^{-r}$.

We consider the two real-world datasets: ``California Housing'' and ``Concrete Compressive Strength''.
We compute the empirical inner product of the regression function with the Fourier basis functions up to a certain order.
Then, we plot the coefficients in the order given by the kernel.
The results are shown in \cref{fig:RealData}.
The figures show that the empirical coefficients exhibit significant spikes.
Also, among the coefficients, only very few components have large magnitudes, indicating the sparse structure of the regression function.
Together, these results suggest that the eigenvalues of the kernel are misaligned with the truth function in these datasets.

\begin{figure}[htb]
  \centering
  \subfigure{
    \begin{minipage}[t]{0.7\linewidth}
      \centering
      \includegraphics[width=1.05\linewidth]{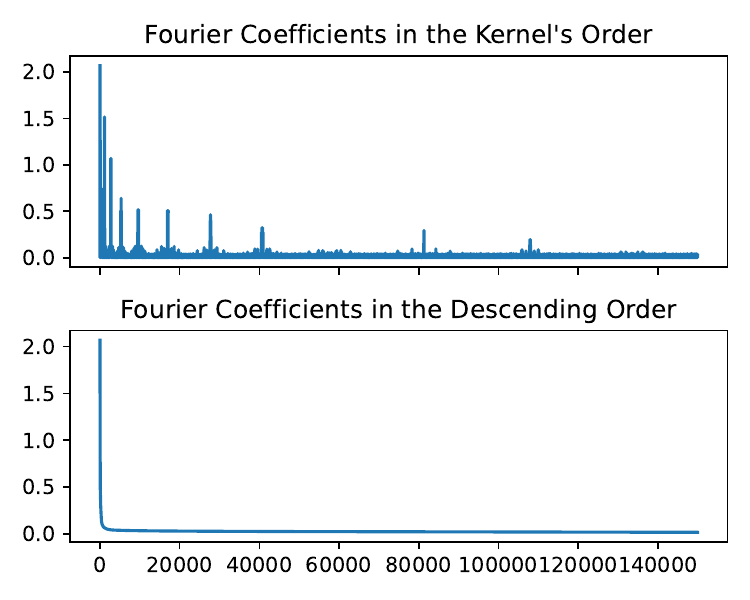}
    \end{minipage}%
  }%

  \subfigure{
    \begin{minipage}[t]{0.7\linewidth}
      \centering
      \includegraphics[width=1.05\linewidth]{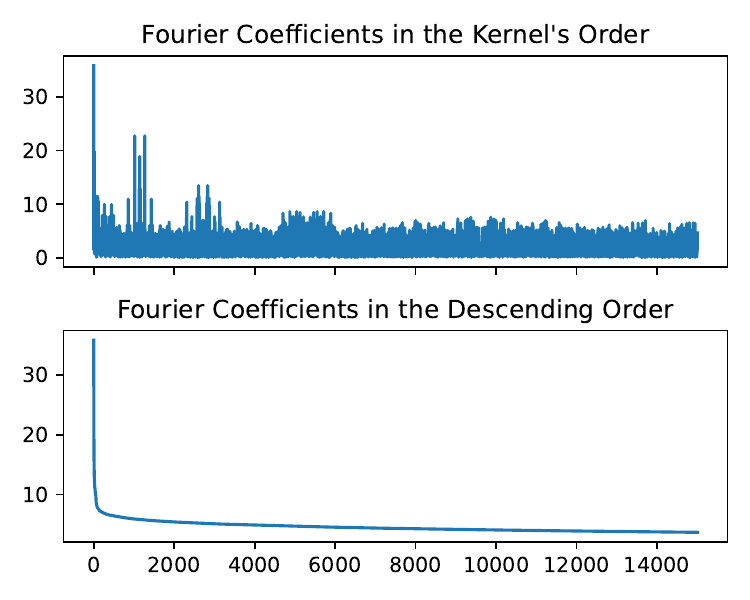}
    \end{minipage}%
  }%

  \centering
  \caption{The empirical coefficients of the regression function over the Fourier basis for the "California Housing" dataset (upper) and "Concrete Compressive Strength" dataset (lower).
  Note that we take different numbers of Fourier basis functions for the two datasets for better visualization.
  }

  \label{fig:RealData}
\end{figure}
  \clearpage


\section{Analysis of the gradient flow}
\label{sec:GradientFlowAnalysis}

\subsection{The two-layer parameterization}
\label{subsec:TwoLayer_EqAnalysis}
Let us consider the gradient flow considered in \cref{eq:GradientFlow2},
where we remove the subscript $j$ for notational simplicity.
Let $L = \frac{1}{2} (\theta - z)^2$ and $\theta = a \beta$.
We are interested in the gradient flow:
\begin{align}
  \label{eq:TwoLayerEq}
  \begin{aligned}
    \dot{a} &= -\nabla_{a} L = \beta (z - \theta), \\
    \dot{\beta} &= -\nabla_{\beta} L = a (z - \theta), \\
    &  a(0) = \lambda^{\hf} > 0, \quad \beta(0) = 0.
  \end{aligned}
\end{align}

\paragraph{Symmetry of the solution}
We can find that the solution of the equation for $z < 0$ can be obtained by simply $a(t), -\beta(t)$ for the positive signal case of $-z > 0$.
Therefore, we only need to consider the case of $z > 0$.
In this case, it is obvious that $a(t)$, $\beta(t)$ and $\theta(t)$ are all non-negative and increasing.

\paragraph{Gradient flow of $\theta$}
Now we notice that
\begin{align*}
  \frac{1}{2}\dv{t} a^2 = \frac{1}{2}\dv{t} \beta^2 = a \beta (z - \theta),
\end{align*}
so
\begin{align}
  \label{eq:TwoLayerEq_Conservation}
  a^2(t) - \beta^2(t) \equiv a^2(0) - \beta^2(0) = \lambda.
\end{align}
Using this conservation quantity, we can prove the following estimations:
\begin{align}
  \label{eq:TwoLayerEq_Estimations}
  \begin{aligned}
    \theta = a \beta =  \sqrt {\lambda + \beta^2} \cdot \beta \geq \beta^2, \\
    \theta = a \beta = a \sqrt{a^2 - \lambda} \leq a^2.
  \end{aligned}
\end{align}
Moreover, the derivative of $\theta$ writes
\begin{align*}
  \dot{\theta} = a \dot{\beta} + \dot{a} \beta = (a^2 + \beta^2) (z - \theta).
\end{align*}
Using $a^2 + \beta^2 = \sqrt {(a^2 - \beta^2)^2 + 4 a^2\beta^2} = \sqrt {\lambda^2 + 4\theta^2}$,
we conclude the follow explicit equation for $\theta$:
\begin{align}
  \label{eq:TwoLayerEq_Theta}
  \dot{\theta} = \sqrt {\lambda^2 + 4\theta^2} (z - \theta).
\end{align}

Then, we have the following approximation of the solution.

\begin{lemma}
  \label{lem:TwoLayer_ThetaComparison}
  Let us consider the gradient flow \cref{eq:TwoLayerEq_Theta} and
  \begin{align}
    \label{eq:TwoLayerEq_ThetaTilde}
    \dv{t} \tilde{\theta} = (\lambda + 2\lvert\tilde{\theta}\rvert)(z - \tilde{\theta}),\quad \tilde{\theta}(0) = 0.
  \end{align}
  Then we have
  \begin{align}
    \label{eq:ThetaComparison}
    \begin{aligned}
      0 \leq \tilde{\theta}(t/\sqrt {2}) \leq \theta(t) \leq \tilde{\theta}(t) \leq z \qif z \geq 0; \\
      0 \geq \tilde{\theta}(t/\sqrt {2}) \geq \theta(t) \geq \tilde{\theta}(t) \geq z \qif z \leq 0.
    \end{aligned}
  \end{align}
  Moreover, \cref{eq:TwoLayerEq_ThetaTilde} is solved by
  \begin{align}
    \label{eq:ThetaTildeExpr}
    \tilde{\theta}(t) = \frac{\lambda(E-1)}{2\abs{z} + \lambda E} z,
    \qquad E = \exp((2\abs{z}+\lambda)t).
  \end{align}
\end{lemma}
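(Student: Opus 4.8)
The plan is to prove the two-sided comparison by a standard ODE comparison argument, and then verify the closed-form solution of \cref{eq:TwoLayerEq_ThetaTilde} directly. By the symmetry noted just before the lemma (replacing $(a,\beta,\theta,z)$ by $(a,-\beta,-\theta,-z)$ maps the $z\ge 0$ case to the $z\le 0$ case, and the same symmetry holds for the auxiliary equation since it depends on $\tilde\theta$ only through $\lvert\tilde\theta\rvert$ and $(z-\tilde\theta)$), it suffices to treat $z\ge 0$; then $\theta(t),\tilde\theta(t)\in[0,z]$ for all $t$ and both are non-decreasing, so the absolute values can be dropped. First I would record the elementary pointwise inequality
\begin{align}
  \label{eq:sqrt_sandwich}
  \tfrac{1}{\sqrt 2}\,(\lambda+2\theta) \;\le\; \sqrt{\lambda^2+4\theta^2} \;\le\; \lambda+2\theta
  \qquad\text{for }\theta\ge 0,
\end{align}
which follows from $a^2+b^2\le (a+b)^2\le 2(a^2+b^2)$ with $a=\lambda$, $b=2\theta$. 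Multiplying by the nonnegative factor $(z-\theta)$ (nonnegative because $\theta$ stays in $[0,z]$), the velocity field of \cref{eq:TwoLayerEq_Theta} is sandwiched between the velocity field of \cref{eq:TwoLayerEq_ThetaTilde} and $1/\sqrt2$ times it.

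The upper bound $\theta(t)\le\tilde\theta(t)$ then follows from the comparison theorem for scalar ODEs: both $\theta$ and $\tilde\theta$ start at $0$, and whenever they agree at a point the velocity of $\theta$ is at most that of $\tilde\theta$ (one has to note the common upper barrier $z$ is an equilibrium of both flows, so neither trajectory escapes $[0,z]$ and the vector fields are locally Lipschitz on the relevant region, making the comparison principle applicable). For the lower bound I would rescale time: set $\theta_\star(t):=\tilde\theta(t/\sqrt2)$, so that $\dot\theta_\star(t)=\tfrac{1}{\sqrt2}(\lambda+2\theta_\star)(z-\theta_\star)\le \dot\theta$-field evaluated at $\theta_\star$ by the left inequality in \cref{eq:sqrt_sandwich}; since $\theta_\star(0)=0=\theta(0)$, the comparison theorem again gives $\theta_\star(t)\le\theta(t)$, i.e. $\tilde\theta(t/\sqrt2)\le\theta(t)$. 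This chains into $0\le\tilde\theta(t/\sqrt2)\le\theta(t)\le\tilde\theta(t)\le z$, which is \cref{eq:ThetaComparison} for $z\ge0$; the $z\le0$ case is obtained by the symmetry above.

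Finally, for the explicit formula I would verify that \cref{eq:ThetaTildeExpr} solves \cref{eq:TwoLayerEq_ThetaTilde}. The cleanest route is to observe that \cref{eq:TwoLayerEq_ThetaTilde} (for $z>0$, $\tilde\theta\in[0,z]$) is the logistic-type equation $\dot u=(\lambda+2u)(z-u)$, a Riccati/separable equation whose solution with $u(0)=0$ can be found by partial fractions: $\frac{du}{(\lambda+2u)(z-u)}=dt$ integrates to an expression linear in $E=e^{(2z+\lambda)t}$ after solving for $u$, yielding exactly $u(t)=\frac{\lambda(E-1)}{2z+\lambda E}\,z$. I would just substitute this candidate back in and check $u(0)=0$ and that $\dot u$ matches, which is a routine (if slightly tedious) computation; the case $z=0$ gives $\tilde\theta\equiv0$, consistent with the formula. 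I do not expect any real obstacle here — the only points requiring a little care are confirming that both trajectories remain trapped in $[0,z]$ so that the sign of $(z-\theta)$ is controlled and the comparison principle applies on a region where the fields are Lipschitz, and handling the $z\le 0$ branch cleanly via the stated symmetry rather than redoing the estimates.
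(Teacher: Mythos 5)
Your proposal matches the paper's proof: both reduce to $z\ge 0$ by symmetry, both use the elementary sandwich $\frac{1}{\sqrt 2}(\lambda+2\theta)\le\sqrt{\lambda^2+4\theta^2}\le\lambda+2\theta$, both invoke the ODE comparison principle (with the same time-rescaling $t\mapsto t/\sqrt2$ implicit for the lower bound), and both verify the explicit formula by direct substitution. Your version simply spells out the time-rescaling step and the Lipschitz/invariant-interval hypotheses of the comparison theorem that the paper leaves tacit.
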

\begin{proof}
  It suffices to consider the case $z \geq 0$.
  It is easy to see from the gradient flow \cref{eq:TwoLayerEq} that both $a(t),\beta(t)$ are non-negative.
  Then, using the elementary inequality
  \begin{align*}
    \frac{1}{\sqrt {2}} (\lambda+2x) \leq \sqrt {\lambda^2 + 4x^2} \leq \lambda + 2x,
  \end{align*}
  we have
  \begin{align*}
    \frac{1}{\sqrt {2}} (\lambda+2x) (z - x) \leq \sqrt {\lambda^2 + 4x^2} (z - x) \leq (\lambda + 2x)(z - x).
  \end{align*}
  Then, the comparison principal in ordinary differential equation yields \cref{eq:ThetaComparison}.
  The verification of \cref{eq:ThetaTildeExpr} is straightforward.
\end{proof}

\subsection{Deeper parameterization}
\label{subsec:MultiLayer_EqAnalysis}

Now let us consider the deeper parameterization of the form
\begin{align}
  \theta = a b^D \beta,
\end{align}
where $a,b,\beta$ are all trainable parameters,
and the gradient flow
\begin{align}
  \label{eq:MultiLayerEq}
  \begin{aligned}
    \dot{a} &= -\nabla_{a} L = b^D \beta (z - \theta), \\
    \dot{b} &= -\nabla_{b} L = D a b^{D-1} \beta (z - \theta), \\
    \dot{\beta} &= -\nabla_{\beta} L = a b^D (z - \theta), \\
    &  a(0) = \lambda^{\hf} > 0, \quad b(0) = b_0 > 0, \quad \beta(0) = 0.
  \end{aligned}
\end{align}

\paragraph{Main idea}
We provide the main idea of analyzing the gradient flow \cref{eq:MultiLayerEq} here.
Using the conservation quantities, we can first focus on the equation of $\beta$.
For the initial stage when $t$ is relatively small, $\beta(t)$ only grows linearly in $t$.
Next, when $\beta(t)$ exceeds a certain threshold depending on the initialization (and also the interplay between $\lambda_j$ and $b_0$),
$\beta(t)$ grows exponentially in $t$, provided that $\theta(t) \leq z/2$.
Thus, we can upper bound the hitting time of $\theta(t)$ to $z/2$.
Finally, when $\theta(t) \geq z/2$, we consider directly the equation of $\theta$ and show that $\theta(t)$ converges to $z$ exponentially fast.

\paragraph{Symmetry of the solution}
Similar to the two-layer case, we can find that the solution of the equation for $z < 0$ can be obtained by
negating the sign of $\beta(t)$ for the positive signal case.
Therefore, we will focus on the case of $z \geq 0$ where $a(t), b(t), \beta(t)$ and thus $\theta(t)$ are all non-negative and non-decreasing.

\paragraph{Conservation quantities}
Similarly, we have
\begin{align}
  \frac{1}{2}\dv{t} a^2 = \frac{1}{2 D}\dv{t} b^2 = \frac{1}{2}\dv{t} \beta^2 = \theta (z - \theta).
\end{align}
so
\begin{align}
  \label{eq:MultiLayerEq_Conservation}
  a = \xk{\lambda + \beta^2}^{\hf}, \quad b = \xk{b_0^2 + D\beta^2}^{\frac{1}{2}}.
\end{align}
Using these conservation quantities, we can prove the following estimations in terms of $\beta$:
\begin{align}
  \label{eq:MultiLayerEq_Estimations}
  \begin{aligned}
    & \min(\lambda^{\hf},\beta) \leq a \leq \sqrt {2} \max(\lambda^{\hf},\beta),\\
    & \min(b_0,\sqrt{D} \beta) \leq b \leq \sqrt {2} \max(b_0,\sqrt{D} \beta).
  \end{aligned}
\end{align}


\paragraph{The evolution of $\theta$}
It is direct to compute that
\begin{align}
  \label{eq:MultiLayerEq_Theta}
  \begin{aligned}
    \dot{\theta} &= \dot{a} b^D \beta + a D b^{D-1} \dot{b} \beta + a b^D \dot{\beta} \\
    &= \zk{(b^D \beta)^2 + (D a b^{D-1} \beta)^2 + (a b^D)^2} (z - \theta) \\
    &= \theta^2 (a^{-2} + D b^{-2} + \beta^{-2}) (z - \theta).
  \end{aligned}
\end{align}

\paragraph{Auxiliary notations}
Let us introduce
\begin{align}
  \begin{aligned}
    T^{(1)} &= \inf \left\{ t \geq 0 : \beta(t) \geq \lambda^{\hf} \right\}, \quad
    T^{(2)} = \inf \left\{ t \geq 0 : \beta(t) \geq b_0 / \sqrt {D} \right\},\\
    T^{\mr{esc}} &= \min(T^{(1)},T^{(2)}),    \quad
    T^{\mr{sig}} = \inf \left\{ t \geq 0 : \theta(t) \geq z/2 \right\}.
  \end{aligned}
\end{align}

\begin{lemma}[Noise case]
  \label{lem:MultiLayerEq_NoiseCase}
  For the gradient flow \cref{eq:MultiLayerEq}, we have the initial estimation
  \begin{align}
    \label{eq:MultiLayerEq_UpperBound_Linear}
    \begin{aligned}
      \abs{\beta(t)} &\leq 2^{\frac{D+1}{2}} \lambda^{\hf} b_0^D \abs{z} t, \\
      \abs{\theta(t)} &\leq 2^{D+1}\lambda b_0^{2D} \abs{z} t,
    \end{aligned}
    \quad\qq{for} t \leq \min(\underline{T}^{(1)}, \underline{T}^{(2)}),
  \end{align}
  where
  \begin{align}
    \label{eq:MultiLayerEq_T1T2LowerBound}
    \underline{T}^{(1)} = \xk{2^{\frac{D+1}{2}} b_0^D \abs{z}}^{-1}, \quad
    \underline{T}^{(2)} = \xk{2^{\frac{D+1}{2}} \sqrt {D} \lambda^{\hf} b_0^{D-1} \abs{z}}^{-1}.
  \end{align}
  Moreover, if $\lambda^{\hf} \leq b_0 / \sqrt {D}$, then
  \begin{align}
    \label{eq:MultiLayerEq_UpperBound_Exp}
    \begin{aligned}
      \abs{\beta(t)} &\leq \lambda^{\hf} \exp( 2^{\frac{D+1}{2}} b_0^D \abs{z} (t- \underline{T}^{(1)})^+ ),  \\
      \abs{\theta(t)} &\leq 2^{\frac{D+1}{2}} \lambda b_0^D \exp(2^{\frac{D+3}{2}} b_0^D \abs{z} (t - \underline{T}^{(1)})^+),
    \end{aligned}
    \quad\qq{for} t \leq \underline{T}^{(1,2)},
  \end{align}
  where
  \begin{align}
    \label{eq:MultiLayerEq_T12LowerBound}
    \underline{T}^{(1,2)} =  \left( 1 + \ln \frac{b_0}{\sqrt {D} \lambda^{\hf}} \right) \underline{T}^{(1)}.
  \end{align}
\end{lemma}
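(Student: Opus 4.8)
The plan is a two-phase continuity (bootstrap) argument on the single scalar $\beta(t)$, using the conservation identities \cref{eq:MultiLayerEq_Conservation} (equivalently the two-sided bounds \cref{eq:MultiLayerEq_Estimations}) to control $a$, $b$ and $\theta$ purely through $\beta$. By the symmetry noted before the lemma we may take $z\ge 0$, so $a,b,\beta,\theta\ge 0$ are non-decreasing and $0\le z-\theta\le z$; in particular $\dot\beta=ab^D(z-\theta)\le ab^Dz$. Write $c\coloneqq 2^{D/2}b_0^D z$, so that $\sqrt 2\,c=1/\underline{T}^{(1)}$. For the \emph{linear phase}, let $T_\star$ be the largest time up to which $\beta(s)\le\min(\lambda^{\hf},b_0/\sqrt D)$. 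On $[0,T_\star]$, \cref{eq:MultiLayerEq_Estimations} gives $a\le\sqrt 2\,\lambda^{\hf}$ and $b^D\le 2^{D/2}b_0^D$, hence $\dot\beta\le 2^{\frac{D+1}{2}}\lambda^{\hf}b_0^D z$; integrating from $\beta(0)=0$ gives $\beta(t)\le 2^{\frac{D+1}{2}}\lambda^{\hf}b_0^D z\,t$. Since this is $\le\lambda^{\hf}$ for $t\le\underline{T}^{(1)}$ and $\le b_0/\sqrt D$ for $t\le\underline{T}^{(2)}$, the constraint defining $T_\star$ cannot fail before $\min(\underline{T}^{(1)},\underline{T}^{(2)})$, so $T_\star\ge\min(\underline{T}^{(1)},\underline{T}^{(2)})$ and the first line of \cref{eq:MultiLayerEq_UpperBound_Linear} holds there; the $\theta$-bound then follows by substituting the $\beta$-bound into $\theta=ab^D\beta\le 2^{\frac{D+1}{2}}\lambda^{\hf}b_0^D\,\beta$.

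For the \emph{exponential phase} assume $\lambda^{\hf}\le b_0/\sqrt D$, so $\underline{T}^{(1)}\le\underline{T}^{(2)}$ and the previous step already gives $\beta(\underline{T}^{(1)})\le\lambda^{\hf}$. The key device is the envelope $\rho(t)\coloneqq\max(\lambda^{\hf},\beta(t))$, which is non-decreasing and locally Lipschitz, hence absolutely continuous, and satisfies $a\le\sqrt 2\,\rho$ by \cref{eq:MultiLayerEq_Estimations}; it lets me treat the regimes $\beta\le\lambda^{\hf}$ and $\beta\ge\lambda^{\hf}$ at once. Letting $T_{\star\star}$ be the largest time up to which $\sqrt D\,\beta(s)\le b_0$, on $[0,T_{\star\star}]$ we have $b\le\sqrt 2\,b_0$, so $\dot\beta\le\sqrt 2\,c\,\rho$; since $\dot\rho=0$ wherever $\beta<\lambda^{\hf}$, this yields $\dot\rho\le\sqrt 2\,c\,\rho$ for a.e.\ $t$, and Grönwall from $\rho(\underline{T}^{(1)})=\lambda^{\hf}$ gives $\beta(t)\le\rho(t)\le\lambda^{\hf}\exp(\sqrt 2\,c\,(t-\underline{T}^{(1)})^+)$, the first line of \cref{eq:MultiLayerEq_UpperBound_Exp}. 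Because $\sqrt 2\,c\,\underline{T}^{(1)}=1$, the right-hand side equals $b_0/\sqrt D$ exactly at $t=(1+\ln(b_0/(\sqrt D\,\lambda^{\hf})))\,\underline{T}^{(1)}=\underline{T}^{(1,2)}$, so the constraint $\sqrt D\,\beta\le b_0$ cannot fail before $\underline{T}^{(1,2)}$; hence $T_{\star\star}\ge\underline{T}^{(1,2)}$ and the bound holds on $[0,\underline{T}^{(1,2)}]$. Finally $\theta=ab^D\beta\le\sqrt 2\,\rho\cdot 2^{D/2}b_0^D\cdot\rho=2^{\frac{D+1}{2}}b_0^D\rho^2$ squares the exponential factor, giving exponent $2\sqrt 2\,c=2^{\frac{D+3}{2}}b_0^D z$ and the second line of \cref{eq:MultiLayerEq_UpperBound_Exp}.

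The computations are routine; the genuine points to get right are (i) recognising that the envelope $\rho=\max(\lambda^{\hf},\beta)$ is what closes a single clean Grönwall estimate across the two sub-regimes, and checking that it obeys the differential inequality almost everywhere (it is monotone and locally Lipschitz, hence absolutely continuous, with $\dot\rho\le\dot\beta$ a.e.), and (ii) confirming that $\underline{T}^{(1)},\underline{T}^{(2)},\underline{T}^{(1,2)}$ are calibrated so that each bootstrap constraint remains valid up to---and not beyond---the stated time, which is exactly where the logarithmic factor in $\underline{T}^{(1,2)}$ comes from. An alternative would be to integrate the closed equation \cref{eq:MultiLayerEq_Theta} for $\theta$ directly, but the envelope argument already yields the stated form with the stated constants.
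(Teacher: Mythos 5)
Your proposal is correct and follows the same two-phase bootstrap structure as the paper's proof: a linear growth phase controlled by the escape times $\underline{T}^{(1)},\underline{T}^{(2)}$, then (when $\lambda^{\hf}\le b_0/\sqrt D$) an exponential phase closed by checking that the resulting bound cannot violate $\sqrt D\,\beta\le b_0$ before $\underline{T}^{(1,2)}$. The one genuine difference is in the exponential phase: the paper tracks the actual hitting time $T^{(1)}$, proves the exponential bound on $[T^{(1)},T^{(2)}]$, and then separately patches the interval $t\le T^{(1)}$ by observing that there $\beta\le\lambda^{\hf}$ while the exponential factor is $\ge 1$; your envelope $\rho=\max(\lambda^{\hf},\beta)$ absorbs that case distinction into a single a.e.\ differential inequality $\dot\rho\le\sqrt2\,c\,\rho$ and one Gr\"onwall step from $\underline{T}^{(1)}$. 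Both routes yield the stated constants; yours is marginally cleaner, the paper's is more explicit about the hitting-time geometry.
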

\begin{proof}
  It suffices to consider the case $z > 0$.
  Recalling \cref{eq:MultiLayerEq} and $\theta \geq 0$, we have
  \begin{align*}
    \dot{\beta} \leq ab^D z.
  \end{align*}
  Using the upper bound in \cref{eq:MultiLayerEq_Estimations}, when $t \leq T^{\mr{esc}}$,
  namely when $\beta(t) \leq \lambda^{\hf}$ and $\sqrt {D} \beta(t) \leq b_0$, we have
  \begin{align*}
    & \dot{\beta} \leq (\sqrt {2} \lambda^{\hf}) (\sqrt {2} b_0)^D z = 2^{\frac{D+1}{2}} \lambda^{\hf} b_0^D z,
  \end{align*}
  implying that
  \begin{align*}
    \beta(t) \leq 2^{\frac{D+1}{2}} \lambda^{\hf} b_0^D z t, \qq{for} t \leq T^{\mr{esc}}.
  \end{align*}
  Therefore, we get the lower bound
  \begin{align*}
    T^{\mr{esc}} \geq \min(\underline{T}^{(1)},\underline{T}^{(2)}),
  \end{align*}
  where $\underline{T}^{(1)}$ and $\underline{T}^{(2)}$ are defined by \cref{eq:MultiLayerEq_T1T2LowerBound} in the lemma.
  Combining this again with the upper bound that
  $\theta = a b^D \beta \leq 2^{\frac{D+1}{2}} a_{0}b_0^D \beta$ when $t \leq T^{\mr{esc}}$,
  we prove \cref{eq:MultiLayerEq_UpperBound_Linear}.

  For the second part, we consider the case $\lambda^{\hf} \leq b_0 / \sqrt {D}$.
  In this case, we have $T^{(1)} \leq T^{(2)}$ and thus $T^{(1)} \geq \underline{T}^{(1)}$ from the above argument.
  Now, when $t \in [T^{(1)}, T^{(2)}]$, we turn to the following equation:
  \begin{align*}
    \dot{\beta} \leq (\sqrt {2} \beta) (\sqrt {2} b_0)^D z = 2^{\frac{D+1}{2}} b_0^D z \beta,
    \qq{for} t \in [T^{(1)}, T^{(2)}],
  \end{align*}
  which yields
  \begin{align*}
    \beta(s+T^{(1)}) \leq \beta(T^{(1)}) \exp(2^{\frac{D+1}{2}} b_0^D z s) = \lambda^{\hf} \exp(2^{\frac{D+1}{2}} b_0^D z s),
    \qq{for} s \in [0, T^{(2)} - T^{(1)}].
  \end{align*}
  Comparing $\beta(s+T^{(1)})$ with $b_0/\sqrt {D}$ gives
  \begin{align*}
    T^{(2)} - T^{(1)} \geq \left[ 2^{\frac{D+1}{2}} b_0^D z  \right]^{-1} \ln \frac{b_0}{\sqrt {D} \lambda^{\hf}} = \underline{T}^{(1)} \ln \frac{b_0}{\sqrt {D} \lambda^{\hf}},
  \end{align*}
  so
  \begin{align*}
    T^{(2)} \geq T^{(1)} + \underline{T}^{(1)} \ln \frac{b_0}{\sqrt {D} \lambda^{\hf}} \geq \underline{T}^{(1)} \xk{1 + \ln \frac{b_0}{\sqrt {D} \lambda^{\hf}}} = \underline{T}^{(1,2)}.
  \end{align*}
  Therefore, the comparison principal yields
  \begin{align*}
    \beta(t) \leq \lambda^{\hf} \exp( 2^{\frac{D+1}{2}} b_0^D \abs{z} (t- \underline{T}^{(1)})^+ ) \qq{for} t \leq \underline{T}^{(1,2)},
  \end{align*}
  where we notice that the bound also holds for $t \leq \underline{T}^{(1)} \leq T^{(1)}$ since at that time $\beta(t) \leq \lambda^{\hf}$.
  Finally, \cref{eq:MultiLayerEq_UpperBound_Exp} is obtained by using $\theta = ab^D \beta \leq 2^{\frac{D+1}{2}} b_0^D \beta^2$ when $t \in [T^{(1)}, T^{(2)}]$,
  while the bound also holds for $t \leq T^{(1)}$.

\end{proof}

\begin{lemma}[Signal case]
  \label{lem:MultiLayerEq_SignalCase}
  For the gradient flow \cref{eq:MultiLayerEq}, we have:
  \begin{itemize}
    \item If $\lambda^{\hf} \leq b_0/\sqrt {D}$, then
    \begin{align}
      \label{eq:MultiLayerEq_SignalTime_Case1}
      T^{\mr{sig}} \leq 2 (b_0^D \abs{z})^{-1} \zk{1+ \xk{\ln \frac{(D^{-D/2} z/2 )^{\frac{1}{D+2}}}{\lambda^{\hf}}}^+ },
    \end{align}
    \item If $\lambda^{\hf} \geq b_0/\sqrt {D}$, then
    \begin{align}
      \label{eq:MultiLayerEq_SignalTime_Case2}
      T^{\mr{sig}} \leq 2 \xk{\sqrt{D} \lambda^{\hf} b_0^{D-1} \abs{z}}^{-1} \left( 1 + R^+ \right),
    \end{align}
    where
    \begin{align*}
      R =
      \begin{cases}
        \ln \frac{(D\abs{z}/2)^{\frac{1}{D+2}}}{b_0}, & D = 1, \\
        \frac{1}{D-1}, & D > 1.
      \end{cases}
    \end{align*}
  \end{itemize}
  Moreover, we have
  \begin{align}
    \label{eq:MultiLayerEq_FinalConvergence}
    \abs{z - \theta(t)} \leq \frac{1}{2} \abs{z} \exp(-\frac{1}{4} D^{\frac{D}{D+2}} \abs{z}^{\frac{2D+2}{D+2}} (t - T^{\mr{sig}})),
    \qq{for} t \geq T^{\mr{sig}}.
  \end{align}
\end{lemma}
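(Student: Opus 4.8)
The plan is to exploit the conservation quantities \eqref{eq:MultiLayerEq_Conservation} to reduce everything to the scalar equation for $\beta$, and then track $\beta$ through three phases: a linear-growth phase, an exponential-growth phase, and finally a power-law blow-up phase that drives $\theta$ past $z/2$. Throughout I assume $z>0$ by the symmetry already established. Since $a=(\lambda+\beta^2)^{1/2}$ and $b=(b_0^2+D\beta^2)^{1/2}$, we have $\theta = ab^D\beta$, and the evolution $\dot\beta = ab^D(z-\theta)$ holds. As long as $\theta(t)\le z/2$ we have $z-\theta\ge z/2$, so $\dot\beta \ge \tfrac{z}{2}\,ab^D \ge \tfrac{z}{2}\,(\lambda+\beta^2)^{1/2}(b_0^2+D\beta^2)^{D/2}$. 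The idea is to lower-bound this right-hand side by a simple power of $\beta$ in each regime and integrate.

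\textbf{Case $\lambda^{1/2}\le b_0/\sqrt D$.} Here the ordering is $T^{(1)}\le T^{(2)}$ (first $\beta$ overtakes $\lambda^{1/2}$, then $b_0/\sqrt D$). For $t\le T^{(1)}$ the factors $a,b$ are essentially frozen at $\lambda^{1/2},b_0$, giving linear growth; this contributes a time of order $(b_0^D z)^{-1}$ (matching $\underline T^{(1)}$ up to a factor $2^{(D+1)/2}$, which the ``$2$'' in front absorbs). For $t\in[T^{(1)},T^{(2)}]$ we have $a\asymp\beta$ while $b\asymp b_0$, so $\dot\beta \gtrsim z\,b_0^D\,\beta$, exponential growth; the time to cross from $\lambda^{1/2}$ to $b_0/\sqrt D$ is $\asymp (b_0^D z)^{-1}\ln\frac{b_0}{\sqrt D\lambda^{1/2}}$, which is the source of the logarithmic term, but only if $b_0/\sqrt D\ge \lambda^{1/2}$ is the binding constraint — when the threshold $\beta_\star$ at which $\theta$ reaches $z/2$ is smaller than $b_0/\sqrt D$, this phase terminates early and the log collapses to the ``$+$'' truncation. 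Finally, once $\beta\gtrsim b_0/\sqrt D$, both $a\asymp\beta$ and $b\asymp\sqrt D\beta$, so $\theta\asymp D^{D/2}\beta^{D+2}$ and $\dot\beta \gtrsim z\,D^{D/2}\beta^{D+1}$; this ODE reaches $\beta=\beta_\star:=(D^{-D/2}z/2)^{1/(D+2)}$ in finite time $\asymp (D^{D/2}z\,\beta_\star^{D})^{-1} = \asymp (b_0^D z)^{-1}\cdot(\text{bounded})$, again absorbed. Summing the three contributions and bounding each by the uniform $2(b_0^Dz)^{-1}$ scale gives \eqref{eq:MultiLayerEq_SignalTime_Case1}; the innermost log argument $(D^{-D/2}z/2)^{1/(D+2)}/\lambda^{1/2} = \beta_\star/\lambda^{1/2}$ is exactly the crossing threshold over the initialization, consistent with the $R^+$ structure.

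\textbf{Case $\lambda^{1/2}\ge b_0/\sqrt D$.} Now $b$ overtakes $b_0$ before $a$ overtakes $\lambda^{1/2}$, so the roles swap: there is an intermediate phase where $a\asymp\lambda^{1/2}$ is frozen but $b\asymp\sqrt D\beta$ grows, giving $\dot\beta \gtrsim z\,\lambda^{1/2}(\sqrt D\beta)^D$, a power-law ODE of exponent $D$. For $D=1$ this is again exponential, producing the logarithm $\ln\frac{(D z/2)^{1/(D+2)}}{b_0}$; for $D>1$ integrating $\dot\beta\gtrsim \beta^D$ from $b_0/\sqrt D$ to $\infty$ gives a \emph{bounded} time $\asymp \frac{1}{D-1}(\lambda^{1/2}b_0^{D-1} z)^{-1}$, hence the constant $\frac{1}{D-1}$. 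After $\beta$ overtakes $\lambda^{1/2}$ we are back in the common $\beta^{D+1}$ blow-up regime, which again costs bounded time on the same scale. This yields \eqref{eq:MultiLayerEq_SignalTime_Case2}.

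\textbf{Final convergence \eqref{eq:MultiLayerEq_FinalConvergence}.} Once $\theta(T^{\mr{sig}})\ge z/2$, monotonicity keeps $\theta\in[z/2,z)$, so $a\asymp b\asymp\beta$ with $\theta\asymp D^{D/2}\beta^{D+2}\ge z/2$, hence $\beta\ge\beta_\star$. Using $\dot\theta = \theta^2(a^{-2}+Db^{-2}+\beta^{-2})(z-\theta)$ from \eqref{eq:MultiLayerEq_Theta}, and noting $\theta^2(a^{-2}+Db^{-2}+\beta^{-2}) \ge \theta^2/\beta^2 = (ab^D)^2 \ge c\,D^{D}\beta^{2D+2} \ge c\,D^{D/(D+2)}(z/2)^{(2D+2)/(D+2)}$ (lower bounding $\beta^{D+2}\ge z/(2D^{D/2})$ and taking the $(2D+2)/(D+2)$ power), we get a linear differential inequality $\frac{d}{dt}(z-\theta)\le -c'\,D^{D/(D+2)}z^{(2D+2)/(D+2)}(z-\theta)$, and Grönwall gives the stated exponential decay with rate $\tfrac14 D^{D/(D+2)}z^{(2D+2)/(D+2)}$ after checking the numerical constant.

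\textbf{Main obstacle.} The routine parts are the comparison-principle integrations within each phase; the delicate bookkeeping is (i) verifying that the phase thresholds $\lambda^{1/2}$, $b_0/\sqrt D$, and $\beta_\star$ are correctly ordered in each case and that a phase is \emph{skipped} (contributing a ``$+$''-truncated term) precisely when $\beta_\star$ falls below the relevant threshold, and (ii) making all multiplicative constants $2^{(D+1)/2}$, $\sqrt D$, etc.\ fit under the clean prefactors $2(b_0^Dz)^{-1}$ and $2(\sqrt D\lambda^{1/2}b_0^{D-1}z)^{-1}$ — this is where a careless estimate would break the stated bound, so the prefactor ``$2$'' and the $(1+R^+)$ form must be chosen to absorb exactly these.
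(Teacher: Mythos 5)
Your argument for the hitting-time bounds is essentially the same as the paper's, though organized into three phases where the paper uses only two: the paper notices that the bounds $a\ge\beta$, $a\ge\lambda^{1/2}$, $b\ge\sqrt D\beta$, $b\ge b_0$ all hold \emph{unconditionally} (they follow directly from the conservation identities $a^2=\lambda+\beta^2$, $b^2=b_0^2+D\beta^2$), so the ``exponential'' bound $\dot\beta\ge\tfrac12 b_0^D z\beta$ in the first case and the ``power-law'' bound $\dot\beta\ge\tfrac12\lambda^{1/2}D^{D/2}z\beta^D$ in the second case are valid on the \emph{entire} interval $[T^{(1)},T^{\mr{sig}}]$ (resp.\ $[T^{(2)},T^{\mr{sig}}]$), and a third phase is never needed. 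Your extra phase is not wrong, just superfluous, and it makes the constant-tracking harder for no gain.

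The real gap is in the final convergence step. You assert that once $\theta(T^{\mr{sig}})\ge z/2$, ``$a\asymp b\asymp\beta$ with $\theta\asymp D^{D/2}\beta^{D+2}\ge z/2$, hence $\beta\ge\beta_\star$,'' and then lower-bound $\theta^2/\beta^2=(ab^D)^2\ge D^D\beta^{2D+2}$ by plugging in $\beta\ge\beta_\star$. This does not follow. The unconditional inequalities give $\theta\ge D^{D/2}\beta^{D+2}$, which combined with $\theta\le z$ yields an \emph{upper} bound $\beta^{D+2}\le D^{-D/2}z$, not a lower bound. To extract $\beta\gtrsim\beta_\star$ from $\theta\ge z/2$ you would need the reverse inequality $\theta\le C D^{D/2}\beta^{D+2}$, which requires $\beta\ge\lambda^{1/2}$ and $\sqrt D\beta\ge b_0$ — and these need \emph{not} hold at $T^{\mr{sig}}$. (For example with $D=1$, $\lambda=b_0=1$ and $z$ small, $\theta\approx\beta$ reaches $z/2$ while $\beta\approx z/2\ll\beta_\star=(z/2)^{1/3}$.) The paper avoids this entirely by \emph{not} replacing $\theta$ by a power of $\beta$: it keeps the product $\theta^2\beta^{-2}$, uses $\theta\ge z/2$ directly for the $\theta^2$ factor, and uses the \emph{upper} bound $\beta\le D^{-D/(2(D+2))}z^{1/(D+2)}$ (which follows unconditionally from $\theta\le z$) to lower-bound $\beta^{-2}$. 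That gives exactly $\theta^2\beta^{-2}\ge\tfrac14 D^{D/(D+2)}z^{(2D+2)/(D+2)}$ with no hidden constants, and Gr\"onwall finishes. You should rewrite your final step along these lines; your step $(ab^D)^2\ge D^D\beta^{2D+2}$ throws away the information $\theta\ge z/2$ and is the wrong direction to bound.
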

\begin{proof}
  It suffices to consider the case $z > 0$.
  To provide an upper bound of the signal time $T^{\mr{sig}}$,
  we observe that the lower bound in \cref{eq:MultiLayerEq_Estimations} implies a sufficient condition for $\theta \geq z/2$ that
  \begin{align}
    \label{eq:MultiLayerEq_SignalTimeBetaBound}
    \beta \geq \xk{ D^{-D/2} z /2}^{\frac{1}{D+2}}
    \quad \Longrightarrow \quad \theta \geq \frac{1}{2} z.
  \end{align}

  We first consider case that $\lambda^{\hf} \leq b_0/\sqrt {D}$.
  Let us define $\overline{T}^{(1)} \coloneqq 2(b_0^D z)^{-1}$ and suppose that $T^{\mr{sig}} \geq 2(b_0^D z)^{-1}$,
  otherwise the statement \cref{eq:MultiLayerEq_SignalTime_Case1} already holds.
  Then, we first have
  \begin{align}
    \label{eq:MultiLayerEq_InitialStageBetaBound}
    \dot{\beta} = a b^D (z-\theta) \geq \frac{1}{2} \lambda^{\hf} b_0^D z,\qq{for} t \leq T^{\mr{sig}}.
  \end{align}
  This implies that
  \begin{align*}
    \beta(t) \geq \frac{1}{2} \lambda^{\hf} b_0^D z t,\qq{for} t \leq T^{\mr{sig}},
  \end{align*}
  and thus
  \begin{align*}
    T^{(1)} \leq \overline{T}^{(1)} \leq T^{\mr{sig}}.
  \end{align*}
  Now, for $t \in [T^{(1)}, T^{\mr{sig}}]$, we use the alternative bound $a \geq \beta$ to obtain
  \begin{align*}
    \dot{\beta} \geq \frac{1}{2} \beta b_0^D z,\qq{for} t \in [T^{(1)}, T^{\mr{sig}}],
  \end{align*}
  giving that
  \begin{align*}
    \beta(s + T^{(1)}) \geq \lambda^{\hf} \exp(\frac{1}{2} b_0^D z s), \qq{for} s \in [0,T^{\mr{sig}} - T^{(1)}].
  \end{align*}
  Comparing it with \cref{eq:MultiLayerEq_SignalTimeBetaBound}, we obtain
  \begin{align*}
    T^{\mr{sig}} - T^{(1)} \leq 2(b_0^D z)^{-1} \ln \frac{(D^{-D/2}z /2 )^{\frac{1}{D+2}}}{\lambda^{\hf}}
    = \overline{T}^{(1)} \ln \frac{(D^{-D/2}z /2 )^{\frac{1}{D+2}}}{\lambda^{\hf}},
  \end{align*}
  which, together with the upper bound of $T^{(1)}$, proves \cref{eq:MultiLayerEq_SignalTime_Case1}.

  The case that $\lambda^{\hf} \geq b_0/\sqrt {D}$ is very similar.
  We define $\overline{T}^{(2)} \coloneqq 2(\sqrt{D} \lambda^{\hf} b_0^{D-1} z)^{-1}$ and suppose that $T^{\mr{sig}} \geq \overline{T}^{(2)}$.
  Using the estimation \cref{eq:MultiLayerEq_InitialStageBetaBound} again, we get
  \begin{align*}
    T^{(2)} \leq \overline{T}^{(2)} \leq T^{\mr{sig}}.
  \end{align*}
  Now, when $t \in [T^{(2)}, T^{\mr{sig}}]$, we use $b \geq \sqrt {D} \beta$ to obtain
  \begin{align*}
    \dot{\beta} \geq \frac{1}{2} \lambda^{\hf} D^{\frac{D}{2}} \beta^D z,\qq{for} t \in [T^{(2)}, T^{\mr{sig}}].
  \end{align*}
  This implies that for $s \leq T^{\mr{sig}} - T^{(2)}$,
  \begin{align*}
    \beta(s + T^{(2)}) &\geq \frac{b_0}{\sqrt {D}} \exp\left( \frac{1}{2} \lambda^{\hf} D^{\frac{D}{2}} z s \right), \qif D = 1,\\
    \beta(s + T^{(2)}) &\geq \zk{(D^{-1/2}b_0)^{-(D-1)} - \frac{D-1}{2} \lambda^{\hf} D^{\frac{D}{2}}z s}^{-\frac{1}{D-1}}, \qif   D > 1.
  \end{align*}
  Consequently, comparing it with \cref{eq:MultiLayerEq_SignalTimeBetaBound} gives
  \begin{align*}
    T^{\mr{sig}} - T^{(2)} &\leq 2 \xk{\lambda^{\hf} D^{\frac{D}{2}} z}^{-1} \ln \frac{(Dz/2)^{\frac{1}{D+2}}}{b_0} = \overline{T}^{(2)} \ln \frac{(Dz/2)^{\frac{1}{D+2}}}{b_0},
    \qif D = 1,\\
    T^{\mr{sig}} - T^{(2)} &\leq \frac{2}{D-1} \xk{\sqrt{D} \lambda^{\hf} b_0^{D-1} z}^{-1} = \frac{1}{D-1} \overline{T}^{(2)},
    \qif D > 1.
  \end{align*}

  Finally, let us consider the convergence stage when $t \geq T^{\mr{sig}}$.
  Since it is always true that $\theta \leq z$,
  using the lower bounds in \cref{eq:MultiLayerEq_Conservation}, we have
  \begin{align*}
    z \geq \theta = a b^D \beta \geq \beta \cdot  D^{\frac{D}{2}} \beta^D \cdot \beta = D^{\frac{D}{2}} \beta^{D+2},
  \end{align*}
  implying that $\beta \leq D^{-\frac{D}{2(D+2)}} z^{\frac{1}{D+2}}$.
  Now, plugging this into \cref{eq:MultiLayerEq_Theta} and noticing that $\theta \geq z/2$ when $t \geq T^{\mr{sig}}$,
  we derive
  \begin{align*}
    \dot{\theta} &= \theta^2 (a^{-2} + D b^{-2} + \beta^{-2}) (z - \theta) \\
    &\geq \theta^2 \beta^{-2} (z - \theta) \\
    &\geq \frac{1}{4}z^2 D^{\frac{D}{D+2}} z^{-\frac{2}{D+2}}  (z - \theta) \\
    &=  \frac{1}{4}D^{\frac{D}{D+2}} z^{\frac{2D+2}{D+2}} (z - \theta).
  \end{align*}
  Therefore, we have
  \begin{align*}
    z - \theta(s + T^{\mr{sig}}) \leq (z-\theta(T^{\mr{sig}})) \exp\left( - \frac{1}{4}D^{\frac{D}{D+2}} z^{\frac{2D+2}{D+2}} s \right)
    = \frac{1}{2} \exp\left( - \frac{1}{4}D^{\frac{D}{D+2}} z^{\frac{2D+2}{D+2}} s \right),
  \end{align*}
\end{proof}

  \clearpage


\section{Main proofs}
\label{sec:GeneralizationProofs}

\paragraph{Notations}
For notation simplicity, we will use $C,c$ to denote generic positive constants that may change from line to line.

\subsection{Proof of \cref{thm:Op}}
\label{subsec:ProofOp}

We recall that
\begin{align*}
  \E \caR(\hat{\bm{\theta}};\bm{\theta}^*) = \E \sum_{j = 1}^\infty  (\hat\theta_j - \theta_j^*)^2.
\end{align*}

Let us define the signal event
\begin{align}
  \label{eq:SignalEvents}
  S_j = \left\{ \omega : \abs{\xi_j} < \frac{1}{2}\abs{\theta_j^*} \right\},\quad S_j^\complement = \left\{ \omega : \abs{\xi_j} \geq \frac{1}{2}\abs{\theta_j^*} \right\}.
\end{align}
Then, on $S_j$ we have $\frac{1}{2}\abs{\theta_j^*} \leq \abs{z_j} \leq \frac{3}{2} \abs{\theta_j^*}$,
while on $S_j^\complement$ we have $\abs{z_j} \leq 3 \abs{\xi_j}$.
Then, we decompose
\begin{align*}
(\hat\theta_j - \theta_j^*)
  ^2 = (\hat\theta_j - \theta_j^*)^2 \bm{1}_{S_j} + (\hat\theta_j - \theta_j^*)^2 \bm{1}_{S_j^\complement}.
\end{align*}
Moreover, when the signal is significant, we use
\begin{align*}
(\hat\theta_j - \theta_j^*)
  ^2 \bm{1}_{S_j}
  \leq 2 (\hat\theta_j - z_j)^2 \bm{1}_{S_j} + 2 (z_j - \theta_j^*)^2 \bm{1}_{S_j} = 2 (\hat\theta_j - z_j)^2 \bm{1}_{S_j} + 2 \xi_j^2 \bm{1}_{S_j}.
\end{align*}
On the other hand, when the noise is dominating, we apply
\begin{align*}
(\hat\theta_j - \theta_j^*)
  ^2 \bm{1}_{S_j^\complement}
  \leq 2 \hat\theta_j^2 \bm{1}_{S_j^\complement} + 2(\theta_j^*)^2 \bm{1}_{S_j^\complement}.
\end{align*}

Summing over $j$ and taking the expectation, we have
\begin{align}
  \notag
  \caR(\hat{\bm{\theta}}^{\mr{Op}};\bm{\theta}^*)
  &= \E \sum_{j=1}^{\infty} (\hat\theta_j - \theta_j^*)^2
  = \E \sum_{j=1}^{\infty} (\hat\theta_j - \theta_j^*)^2 \bm{1}_{S_j} + \E \sum_{j=1}^{\infty} (\hat\theta_j - \theta_j^*)^2 \bm{1}_{S_j^\complement} \\
  \notag
  & \leq 2 \E \sum_{j=1}^{\infty} (\hat\theta_j - z_j)^2 \bm{1}_{S_j} + 2 \E \sum_{j=1}^{\infty} \xi_j^2 \bm{1}_{S_j^\complement}
  + 2\E \sum_{j=1}^{\infty} \hat\theta_j^2 \bm{1}_{S_j^\complement} + 2\E \sum_{j=1}^{\infty} (\theta_j^*)^2 \bm{1}_{S_j^\complement} \\
  \label{eq:ErrorDecomposition_Abs}
  & = 2 \E \sum_{j=1}^{\infty} \zk{\xi_j^2 \bm{1}_{S_j} + (\theta_j^*)^2 \bm{1}_{S_j^\complement}} \\
  \label{eq:ErrorDecomposition_Noise}
  &\quad + 2 \E \sum_{j=1}^{\infty} \hat\theta_j^2 \bm{1}_{S_j^\complement} \\
  \label{eq:ErrorDecomposition_Signal}
  &\quad + 2 \E \sum_{j=1}^{\infty} (\hat\theta_j - z_j)^2 \bm{1}_{S_j}.
\end{align}

Now, the first term \cref{eq:ErrorDecomposition_Abs}, representing the absolute error, is controlled by \cref{prop:AbsError} that
\begin{align*}
  \E \sum_{j=1}^{\infty} \zk{\xi_j^2 \bm{1}_{S_j} + (\theta_j^*)^2 \bm{1}_{S_j^\complement}}
  \leq 4 \left[ \epsilon^2 \Phi(\epsilon) +  \Psi(\epsilon) \right].
\end{align*}
Therefore, we focus on the remaining two terms and obtain the estimations \cref{eq:TwoLayer_NoiseTermControl} and \cref{eq:TwoLayer_SignalTermControl}
in the following.

\paragraph{The noise term}
The term \cref{eq:ErrorDecomposition_Noise} represents the extra error caused by the estimator when the noise dominates.
Applying \cref{lem:TwoLayer_ThetaComparison},
we obtain
\begin{align*}
  \abs{\hat\theta_j} = \abs{ \theta_j(t)} \leq \abs{\tilde{\theta}(t)}
  = \frac{\lambda_j (E_j-1)}{2\abs{z_j} + \lambda_j E_j} \abs{z_j} \leq \frac{1}{2} \lambda_j \exp\left(\xk{6\abs{\xi_j} + \lambda_j} t\right)
  \qq{on} S_j^\complement.
\end{align*}

Let us choose $J = \min\dk{j \geq 1 : \lambda_j \leq \epsilon} \asymp \epsilon^{-1/\gamma}$.
Then, since $t \leq B_2 \epsilon^{-1}$, we have $(\abs{\xi_j}+\lambda_j) t\leq B_2(\abs{\xi_j}/\epsilon + 1)$ and thus
\begin{align*}
  \E \sum_{j \geq J} \hat\theta_j^2 \bm{1}_{S_j^\complement}
  \leq  C \sum_{j \geq J} \lambda_j^2 \E \exp(C (\abs{\xi_j}/\epsilon) + C)
  \leq C \sum_{j \geq J}\lambda_j^2 \leq C J^{-(2\gamma-1)} \leq C \epsilon^{2 - 1/\gamma},
\end{align*}
where we notice that $\E \exp(C (\abs{\xi_j}/\epsilon) + C)$ is uniformly bounded by some constant for all $j$ since each $\abs{\xi_j}/\epsilon$ is $1$-sub-Gaussian.
On the other hand, using the obvious bound $\abs{\hat\theta_j} \leq 3 \abs{\xi_j}$ on $S_j^\complement$,
we obtain
\begin{align*}
  \E \sum_{j < J} \hat\theta_j^2 \bm{1}_{S_j^\complement} \leq C \sum_{j < J} \E \xi_j^2 \leq C \epsilon^2 J \leq C \epsilon^{2-1/\gamma}.
\end{align*}
Combining two terms, we conclude that
\begin{align}
  \label{eq:TwoLayer_NoiseTermControl}
  \E \sum_{j=1}^{\infty} \hat\theta_j^2 \bm{1}_{S_j^\complement}
  \leq C \epsilon^{2-1/\gamma}.
\end{align}

\paragraph{The signal term}
The term \cref{eq:ErrorDecomposition_Noise} represents the approximation error when the signal is significant.
We apply \cref{lem:TwoLayer_ThetaComparison} again to derive
\begin{align*}
  \abs{\hat\theta_j - z_j} \leq \abs{\tilde{\theta}(t/\sqrt {2}) - z_j}
  = \frac{2\abs{z_j} + \lambda_j}{2\abs{z_j} + \lambda_j \exp((2\abs{z_j} + \lambda_j)t/\sqrt {2})} \abs{z_j}.
\end{align*}
Using the fact that $\frac{1}{2}\abs{\theta_j^*} \leq \abs{z_j} \leq \frac{3}{2} \abs{\theta_j^*}$ on $S_j$,
we derive that
\begin{align}
  \label{eq:SignalTermControl}
  \begin{aligned}
  (\hat\theta_j - z_j)
    ^2 \bm{1}_{S_j} & \leq C \frac{(\abs{\theta_j^*} + \lambda_j)^2 \theta_j^2}{\lambda_j^2 \exp((2\abs{\theta_j^*} + \lambda_j)t/\sqrt {2} )}.
  \end{aligned}
\end{align}
Let us define $\nu = \epsilon \ln (1/\epsilon) \geq \epsilon$.
Recalling \cref{eq:PhiPsi} and using \cref{assu:SignificantSpan},
we have
\begin{align*}
  j \leq \max J_{\mr{sig}}(\epsilon) \leq C \epsilon^{-\kappa}, \qq{for} j \in J_{\mr{sig}}(\nu) \subseteq J_{\mr{sig}}(\epsilon).
\end{align*}
Now, if we take $t \geq B_1 \epsilon^{-1}$ for some constant $B_1$,
since $\abs{\theta_j^*} \geq \nu$ for $j \in J_{\mr{sig}}(\nu)$, we also have
\begin{align*}
  \frac{1}{\sqrt {2}} \abs{\theta_j^*} t  \geq  \frac{1}{\sqrt {2}} t \epsilon \ln (1/\epsilon) \geq c B_1 \ln (1/\epsilon),
\end{align*}
and thus when $j \in J_{\mr{sig}}(\nu)$,
\begin{align*}
  \ln\zk{\lambda_j^{2} \exp(\abs{\theta_j^*} t/\sqrt {2})}
  = 2 \ln \lambda_j +\frac{1}{\sqrt {2}} \abs{\theta_j^*} t
  \geq c B_1 \ln (1/\epsilon) - C \ln j
  \geq (c B_1 - C) \ln (1/\epsilon).
\end{align*}
Consequently, as long as $B_1$ is large enough, we have
\begin{align*}
  \lambda_j^{2} \exp(\abs{\theta_j^*} t/\sqrt {2}) \geq 1 \qq{when} j \in J_{\mr{sig}}(\nu).
\end{align*}
Therefore, plugging this into \cref{eq:SignalTermControl}, we get
\begin{align*}
  (\hat\theta_j - z_j)
    ^2 \bm{1}_{S_j}  \leq C \frac{(\abs{\theta_j^*} + \lambda_j)^2 \theta_j^2}{\lambda_j^2 \exp((2\abs{\theta_j^*} + \lambda_j)t/\sqrt {2} )}
  \leq C \exp(-(\abs{\theta_j^*}+\lambda_j) t/(\sqrt {2})) (\abs{\theta_j^*}+\lambda_j)^2 \theta_j^2,
\end{align*}
and hence
\begin{align*}
  \sum_{j \in J_{\mr{sig}}(\nu)} \E (\hat\theta_j - z_j)^2 \bm{1}_{S_j}
  &\leq \sum_{j \in J_{\mr{sig}}(\nu)} \exp(-(\abs{\theta_j^*}+\lambda_j) t/(\sqrt {2})) (\abs{\theta_j^*}+\lambda_j)^2 \theta_j^2 \\
  & \leq C \sum_{j=1}^\infty \left[ (\abs{\theta_j^*}+\lambda_j) t \right]^{-2}(\abs{\theta_j^*}+\lambda_j)^2 \theta_j^2 \\
  &= C t^{-2} \sum_{j=1}^\infty \theta_j^2 \leq C \epsilon^2.
\end{align*}

On the other hand, with the trivial bound $\abs{\hat{\theta}_j - z_j} \leq \abs{z_j}$,
the remaining terms are bounded by
\begin{align*}
  \sum_{j \notin J_{\mr{sig}}(\nu)} \E (\hat\theta_j - z_j)^2 \bm{1}_{S_j}
  \leq \sum_{j \notin J_{\mr{sig}}(\nu)} (\theta_j^*)^2 = \Psi(\nu).
\end{align*}

Therefore, we conclude that
\begin{align}
  \label{eq:TwoLayer_SignalTermControl}
  \E \sum_{j=1}^{\infty} (\hat\theta_j - z_j)^2 \bm{1}_{S_j}
  \leq C \epsilon^2 + \Psi\left( \epsilon \ln (1/\epsilon) \right).
\end{align}

\subsection{Proof of \cref{thm:OpDeep}}
\label{subsec:ProofOpDeep}
Following the same argument as the proof in the previous section,
we introduce the events $S_j$ and $S_j^\complement$ in \cref{eq:SignalEvents} and decompose the error as in
\cref{eq:ErrorDecomposition_Abs}, \cref{eq:ErrorDecomposition_Noise} and \cref{eq:ErrorDecomposition_Signal}.
Then, we will focus on the last two terms and derive the estimations \cref{eq:MultiLayer_NoiseTermControl} and \cref{eq:MultiLayer_SignalTermControl} in the following.
We recall that $b_0 \asymp \epsilon^{\frac{1}{D+2}}$ and we choose $t$ such that
$B_1 \epsilon^{-1} \leq b_0^D t \leq B_2 \epsilon^{-1}$ for some constants $B_1,B_2 > 0$ that will be determined later.

Here, we also note that the in correspondence to the component-wise gradient flow considered in \cref{subsec:MultiLayer_EqAnalysis},
the initializations are given by $\lambda = \lambda_j$ and $b_{0,j} = b_0$ in \cref{eq:MultiLayerEq}.
Now, since $\lambda_j \asymp j^{-\gamma}$, the index
\begin{align}
  \label{eq:MultiLayer_J}
  J = \min\dk{j \geq 1 : \lambda_j^{1/2} \leq b_{0,j} / \sqrt {D}} \asymp b_0^{-2/\gamma}.
\end{align}

\paragraph{The noise term}
To apply the bounds in \cref{lem:MultiLayerEq_NoiseCase}, let us denote the event
\begin{align*}
  A_j = \dk{\omega : 3 \cdot 2^{\frac{D+1}{2}} b_0^D \abs{\xi_j} t \leq \ln \frac{b_0}{\lambda_j^{\hf}\sqrt {D}}}.
\end{align*}
Then, since $\abs{z_j} \leq 3 \abs{\xi_j}$ on $S_j^\complement$,
we have $t \leq \underline{T}_j^{(1,2)}$ on $A_j$,
where $\underline{T}_j^{(1,2)}$ is defined via \cref{eq:MultiLayerEq_T12LowerBound}.
Then, for $j > J$, applying \cref{eq:MultiLayerEq_UpperBound_Exp} yields
\begin{align*}
  \hat\theta_j^2 \bm{1}_{S_j^\complement \cap A_j}
  &\leq 2^{D+1} b_0^{2D} \lambda_j^2 \exp(2^{\frac{D+5}{2}} b_0^D \abs{z_j} t) \bm{1}_{S_j^\complement \cap A_j} \\
  &\leq 2^{D+1} b_0^{2D} \lambda_j^2 \exp(2^{\frac{D+5}{2}} B_2 \abs{z_j}/\epsilon ) \bm{1}_{S_j^\complement \cap A_j} \\
  & \leq C b_0^{2D} \lambda_j^2\exp(C \abs{\xi_j}/\epsilon ) \bm{1}_{S_j^\complement \cap A_j}
\end{align*}
where we use $b_0^D t \leq B_2 \epsilon^{-1}$ in the last inequality.
Consequently,
\begin{align*}
  \E \sum_{j > J} \hat\theta_j^2 \bm{1}_{S_j^\complement \cap A_j}
  &\leq C b_0^{2D} \sum_{j > J} \lambda_j^2 \E \exp(C \abs{\xi_j}/\epsilon)
  \leq C b_0^{2D} \sum_{j > J} \lambda_j^2 \\
  &\leq  C b_0^{2D} J^{-(2\gamma-1)} \leq C b_0^{2(D+2-1/\gamma)},
\end{align*}
where in the second inequality we notice that $\E \exp(C \abs{\xi_j}/\epsilon)$ is uniformly bounded since each $\abs{\xi_j}/\epsilon$ is $1$-sub-Gaussian.

On the other hand, noticing $b_0^D t \leq B_2 \epsilon^{-1}$ again, we have
\begin{align*}
  & j \in A_j^{\complement}\quad \Longrightarrow \quad
  C b_0^D t \abs{\xi_j} \geq \ln \frac{b_0}{\lambda_j^{\hf}\sqrt {D}}  \\
  \Longrightarrow \quad & \abs{\xi_j}/\epsilon \geq c B_2^{-1} \ln \frac{b_0}{\lambda_j^{\frac{1}{2}} \sqrt {D}}
  = c B_2^{-1} \ln (b_0^2 \lambda_j^{-1}/ D).
\end{align*}
Hence, using \cref{lem:SubGaussianTailBound} with the sub-Gaussian property of $\xi_j$
and noticing that $b_0^2 \lambda_j^{-1}/ D \geq 1$ when $j > J$,
we obtain
\begin{align*}
  \E \sum_{j > J} \hat\theta_j^2 \bm{1}_{S_j^\complement \cap A_j^\complement}
  &\leq C  \sum_{j > J} \E\zk{\xi_j^2 \bm{1}\left\{ \abs{\xi_j}/\epsilon \geq c B_2^{-1} \ln (b_0^2 \lambda_j^{-1}/ D) \right\}}   \\
  &\leq C \epsilon^2 \sum_{j > J} \exp( - c \zk{c B_2^{-1} \ln (b_0^2 \lambda_j^{-1}/ D)}^2 ) \\
  & \leq C \epsilon^2 \sum_{j > J} \exp( - c \zk{\ln (b_0^2 j^\gamma)}^2) \\
  & \leq C \epsilon^2 \int_{J}^{\infty}\exp( - c \zk{\ln (b_0^2 x^{\gamma})}^2 ) \dd x \\
  & \leq C \epsilon^2 b_0^{-2/\gamma}\int_{c}^{\infty}\exp( - c \zk{\ln (y^\gamma)}^2 ) \dd y,\quad y = b_0^{2/\gamma} x \\
  & \leq C \epsilon^2 b_0^{-2/\gamma} \leq C \epsilon^2 \epsilon^{-\frac{2}{D+2}\frac{1}{\gamma}}.
\end{align*}
Finally, using the bound $\abs{\hat\theta_j} \leq 3 \abs{\xi_j}$ again, the remaining terms are bounded by
\begin{align*}
  \E \sum_{j \leq J} \hat\theta_j^2 \bm{1}_{S_j^\complement}
  \leq \epsilon^2 J \leq C \epsilon^2 b_0^{-2/\gamma} \leq C \epsilon^2 \epsilon^{-\frac{2}{D+2}\frac{1}{\gamma}}.
\end{align*}

In summary, we conclude that
\begin{align}
  \label{eq:MultiLayer_NoiseTermControl}
  \E \sum_{j=1}^{\infty} \hat\theta_j^2 \bm{1}_{S_j^\complement} \leq C \epsilon^2 \epsilon^{-\frac{2}{D+2}\frac{1}{\gamma}}.
\end{align}

\paragraph{The signal term}

We will apply the bound in \cref{lem:MultiLayerEq_SignalCase}.
Let us denote
\begin{align*}
  J_{\mr{rec}} = \left\{ j : t \geq 2T^{\mr{sig}}_j \right\} \cap J_{\mr{sig}}(\epsilon).
\end{align*}
Then, when $j \in J_{\mr{rec}}$ and $S_j$ holds,
\cref{eq:MultiLayerEq_FinalConvergence} and the fact $\frac{1}{2}\abs{\theta_j^*} \leq \abs{z_j} \leq \frac{3}{2} \abs{\theta_j^*}$ imply
\begin{align*}
  \abs{\theta_j - z_j} \leq \frac{1}{2} \abs{z_j} \exp(-\frac{1}{4}D^{\frac{D}{D+2}} z^{\frac{2D+2}{D+2}} (t - T^{\mr{sig}}_j))
  \leq C \abs{\theta_j^*} \exp(- c \abs{\theta_j^*}^{\frac{2D+2}{D+2}} t).
\end{align*}
Consequently,
\begin{align*}
  \E \sum_{j \in J_{\mr{rec}}} (\hat\theta_j - z_j)^2 \bm{1}_{S_j}
  &\leq C \sum_{j \in J_{\mr{rec}}}  (\theta_j^*)^2 \exp(- c \abs{\theta_j^*}^{\frac{2D+2}{D+2}} t)
  \leq C \sum_{j \in J_{\mr{rec}}} (\theta_j^*)^2 \xk{\abs{\theta_j^*}^{\frac{2D+2}{D+2}} t}^{-\frac{D+2}{D+1}} \\
  &= C \sum_{j \in J_{\mr{rec}}} t^{-\frac{D+2}{D+1}}
  \leq C t^{-\frac{D+2}{D+1}} \abs{J_{\mr{sig}}(\epsilon)} \leq C \epsilon^2 \Phi(\epsilon),
\end{align*}
where we use $\exp(-cx) \leq C x^{-\frac{D+2}{D+1}}$ in the second inequality.

Let us define $\nu = \epsilon \ln (1/\epsilon) \geq \epsilon$.
We claim that $J_{\mr{rec}}^\complement \subseteq J_{\mr{sig}}(\nu)^\complement$ on $S_j$ as long as $b_0^D t \geq B_1 \epsilon^{-1}$ for some large constant $B_1$.
Then, using the obvious bound $\abs{\hat\theta_j - z_j} \leq \abs{z_j} \leq \frac{3}{2} \abs{\theta_j^*}$ on $S_j$,
we have
\begin{align}
  \label{eq:MultiLayer_SignalTermControl}
  \E \sum_{j \in J_{\mr{rec}}^\complement} (\hat\theta_j - z_j)^2 \bm{1}_{S_j}
  \leq  \sum_{j \in J_{\mr{sig}}(\nu)^\complement } (\theta_j^*)^2 = \Psi(\nu).
\end{align}

To prove the claim, we show that $J_{\mr{sig}}(\nu) \subseteq J_{\mr{rec}}$ on $S_j$ as long as $B_1$ is large enough.
Recalling \cref{eq:PhiPsi} and using \cref{assu:SignificantSpan},
for $j \in J_{\mr{sig}}(\nu) \subseteq J_{\mr{sig}}(\epsilon)$, we have
\begin{align*}
  j \leq \max J_{\mr{sig}}(\epsilon) \leq C \epsilon^{-\kappa}.
\end{align*}

Now, we show that $t \geq 2T^{\mr{sig}}_j$ for $j \in J_{\mr{sig}}(\nu)$ on $S_j$ for different cases in \cref{lem:MultiLayerEq_SignalCase}.
\begin{itemize}
  \item If $\lambda_j^{1/2} \leq b_0 / \sqrt {D}$, we have \cref{eq:MultiLayerEq_SignalTime_Case1} and thus
  \begin{align*}
    t \geq 2T^{\mr{sig}}_j \quad &\Longleftarrow \quad
    b_0^D \abs{z_j} t \geq 1 + \xk{\ln \frac{(D^{-D/2} z/2 )^{\frac{1}{D+2}}}{\lambda_j^{1/2}}}^+ \\
    \quad &\Longleftarrow \quad
    \frac{B_1}{2} \abs{\theta_j} \epsilon^{-1} \geq C \ln (\abs{\theta_j^*} \lambda_j^{-1}) + C \\
    \quad &\Longleftarrow \quad
    B_1 \epsilon \ln(1/\epsilon) \epsilon^{-1} \geq C \gamma \ln j + C \\
    \quad &\Longleftarrow \quad
    B_1\ln(1/\epsilon) \geq C\kappa \ln (1/\epsilon) + C.
  \end{align*}
  \item If $\lambda_j^{1/2} \geq b_0 / \sqrt {D}$, \cref{eq:MultiLayerEq_SignalTime_Case2} gives
  \begin{align*}
    t \geq 2T^{\mr{sig}}_j \quad &\Longleftarrow \quad
    \sqrt{D} \lambda_j^{1/2} b_0^{D-1} \abs{z_j} t \geq 1 + R_j^+,
  \end{align*}
  where
  \begin{align*}
    R_j =
    \begin{cases}
      \ln \frac{(D\abs{z_j}/2)^{\frac{1}{D+2}}}{b_0}, & D = 1, \\
      \frac{1}{D-1}, & D > 1.
    \end{cases}
  \end{align*}
  So for both $D=1$ and $D>1$, we have similarly
  \begin{align*}
    t \geq 2T^{\mr{sig}}_j \quad &\Longleftarrow \quad
    \frac{1}{2} \sqrt{D} \lambda_j^{1/2} b_0^{D-1} \abs{\theta_j^*} t \geq C \ln (b_0^{-1}) + C \\
    \quad &\Longleftarrow \quad
    \frac{1}{2}  \abs{\theta_j} b_0^D t \geq C \ln (1/\epsilon) + C \\
    \quad &\Longleftarrow \quad
    B_1  \ln(1/\epsilon)  \geq C \ln (1/\epsilon) + C.
  \end{align*}
\end{itemize}
Therefore, for both cases, we have $t \geq 2T^{\mr{sig}}_j$ as long as $B_1$ is large enough.
This finishes the proof of the claim.

\subsection{The absolute error term}

The following proposition connect the absolute error term with the ideal risk in \citet{johnstone2017_GaussianEstimation}.

\begin{proposition}
  \label{prop:AbsError}
  For the sequence model \cref{eq:SeqModel}, recalling the signal events \cref{eq:SignalEvents} and the quantities \cref{eq:PhiPsi},
  we have
  \begin{align}
    \E \sum_{j =1}^{\infty} \left[ \xi_j^2 \bm{1}_{S_j} + \theta_j^2 \bm{1}_{S_j^\complement}  \right]
    \leq 4 \sum_{j =1}^{\infty} \min(\epsilon^2, \theta_j^2) = 4 \left[ \epsilon^2 \Phi(\epsilon) +  \Psi(\epsilon) \right].
  \end{align}
\end{proposition}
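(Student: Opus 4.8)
The plan is to prove the inequality termwise in $j$ and then sum. Fix an index $j$ and write $\theta = \theta_j^*$, $\xi = \xi_j$ for brevity. The key observation is that the defining event $S_j^\complement = \{|\xi| \geq |\theta|/2\}$ already forces $\theta^2 \leq 4\xi^2$, while on $S_j$ one trivially has $\xi^2 \leq \theta^2/4$. I would use these two elementary facts to bound the integrand $\xi^2\bm{1}_{S_j} + \theta^2\bm{1}_{S_j^\complement}$ pointwise in two complementary ways, take expectations, and then sum.

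First, from $\theta^2\bm{1}_{S_j^\complement} \leq 4\xi^2\bm{1}_{S_j^\complement}$ I get the deterministic bound
\[
  \xi^2\bm{1}_{S_j} + \theta^2\bm{1}_{S_j^\complement} \;\leq\; \xi^2\bm{1}_{S_j} + 4\xi^2\bm{1}_{S_j^\complement} \;\leq\; 4\xi^2,
\]
so, using only that each $\xi_j$ has variance at most $\epsilon^2$ (the sub-Gaussian hypothesis is not needed for this proposition), $\E\!\left[\xi_j^2\bm{1}_{S_j} + (\theta_j^*)^2\bm{1}_{S_j^\complement}\right] \leq 4\epsilon^2$. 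Second, from $\xi^2\bm{1}_{S_j} \leq \frac{1}{4}\theta^2\bm{1}_{S_j}$ together with $\bm{1}_{S_j} + \bm{1}_{S_j^\complement} = 1$,
\[
  \xi^2\bm{1}_{S_j} + \theta^2\bm{1}_{S_j^\complement} \;\leq\; \tfrac{1}{4}\theta^2\bm{1}_{S_j} + \theta^2\bm{1}_{S_j^\complement} \;\leq\; \theta^2,
\]
whence $\E\!\left[\xi_j^2\bm{1}_{S_j} + (\theta_j^*)^2\bm{1}_{S_j^\complement}\right] \leq (\theta_j^*)^2$. Combining the two estimates, $\E\!\left[\xi_j^2\bm{1}_{S_j} + (\theta_j^*)^2\bm{1}_{S_j^\complement}\right] \leq \min\!\big(4\epsilon^2, (\theta_j^*)^2\big) \leq 4\min\!\big(\epsilon^2, (\theta_j^*)^2\big)$.

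Finally I would sum over $j \geq 1$; since every term is nonnegative, Tonelli lets me exchange the sum and the expectation, giving $\E\sum_j\big[\xi_j^2\bm{1}_{S_j} + (\theta_j^*)^2\bm{1}_{S_j^\complement}\big] \leq 4\sum_j\min\!\big(\epsilon^2, (\theta_j^*)^2\big)$. Splitting this last sum according to whether $|\theta_j^*| \geq \epsilon$ or not and recalling $\Phi(\epsilon) = |J_{\mr{sig}}(\epsilon)|$, $\Psi(\epsilon) = \sum_{j\notin J_{\mr{sig}}(\epsilon)}(\theta_j^*)^2$ from \cref{eq:PhiPsi} yields $\sum_j\min\!\big(\epsilon^2,(\theta_j^*)^2\big) = \epsilon^2\Phi(\epsilon) + \Psi(\epsilon)$, which is exactly the claim. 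There is essentially no obstacle here: the one point worth noticing is that the event $S_j^\complement$ by itself converts the ``signal energy on the noise event'' term into a second moment of the noise, and that is what produces the clean constant.
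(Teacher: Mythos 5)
Your proof is correct and takes essentially the same route as the paper: both reduce to the pointwise bound $\xi_j^2\bm{1}_{S_j} + (\theta_j^*)^2\bm{1}_{S_j^\complement} \leq \min(4\xi_j^2,(\theta_j^*)^2)$, pass to expectations componentwise, and then sum and split at the threshold $|\theta_j^*|=\epsilon$. Your side observation that only the variance bound on $\xi_j$ is used (not sub-Gaussianity) is accurate and worth noting, but it is not a departure from the paper's argument.
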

\begin{proof}
  It is straightforward to see that
  \begin{align*}
    \xi_j^2 \bm{1}_{S_j} + \theta_j^2 \bm{1}_{S_j^\complement}
    = \xi_j^2 \bm{1}_{\{2\abs{\xi_j} < \abs{\theta_j}\}} + \theta_j^2 \bm{1}_{\{2\abs{\xi_j} \geq \abs{\theta_j}\}}
    \leq \min(4 \xi_j^2, \theta_j^2),
  \end{align*}
  so
  \begin{align*}
    \E \left[ \xi_j^2 \bm{1}_{S_j} + \theta_j^2 \bm{1}_{S_j^\complement}  \right]
    \leq \E \min(4 \xi_j^2, \theta_j^2) \leq 4 \E \min(\xi_j^2, \theta_j^2) \leq 4 \min(\epsilon^2,\theta_j^2).
  \end{align*}
  Summing over $j$ yields the inequality.
  The last equality follows from the definition of $\Phi(\epsilon)$ and $\Psi(\epsilon)$.
\end{proof}

\subsection{Proof of \cref{prop:EigLearn}}
The fact that $a_j(t) b_j^D(t)$ is non-decreasing follows from the analysis of the gradient flow in \cref{subsec:TwoLayer_EqAnalysis} and \cref{subsec:MultiLayer_EqAnalysis}.
For $\delta \in (0,1)$, let us choose $C$ large enough such that $\bbP\dk{\abs{\xi_j} \leq C \epsilon} \geq 1-\delta$ for any fixed $j$.

\paragraph{The case $D=0$}
For the signal component where $\abs{\theta_j^*} \geq 2C \epsilon \ln (1/\epsilon)$,
we have $\abs{z_j} \geq \frac{1}{2}\abs{\theta_j^*}$ with high probability.
Then, we follow the analysis of the signal term in \cref{subsec:ProofOp} and obtain that
\begin{align*}
  \abs{\theta_j(t) - z_j}^2 \leq C (\theta_j^*)^2 t^{-2} \leq \frac{1}{4}(\theta_j^*)^2,
\end{align*}
provided that $\epsilon$ is small enough.
This implies that
\begin{align*}
  \abs{\theta_j(t)} \geq \frac{1}{4} \abs{\theta_j^*}.
\end{align*}
Now, the second inequality in \cref{eq:TwoLayerEq_Estimations} implies that $\abs{\theta_j(t)}\leq a_j^2(t)$.
Consequently, we conclude that $a_j(t) \geq \frac{1}{2}\abs{\theta_j^*}^{1/2}$.

For the noise component where $\abs{\theta_j^*} \leq \epsilon$,
we have $\abs{z_j} \leq (C+1) \epsilon$ with high probability.
Moreover, since $\lambda_j \asymp j^{-\gamma}$, we have
$J = \min\dk{j \geq 1 : \lambda_j \leq \epsilon} \asymp \epsilon^{-1/\gamma}$.
Following the similar analysis of the noise term in \cref{subsec:ProofOp}, when $j \geq C \epsilon^{-1/\gamma}$ for some $C>0$, we have
\begin{align*}
  \abs{\theta_j(t)} \leq \lambda_j \exp(C (\abs{z_j}+\lambda_j) t) \leq \lambda_j \exp(C \epsilon t) \leq C \lambda_j.
\end{align*}
Then, the first inequality in \cref{eq:TwoLayerEq_Estimations} gives $\abs{\theta_j(t)} \geq \beta_j^2(t)$, so
we have $\abs{\beta_j(t)} \leq C \lambda_j^{1/2}$.
Finally,
\begin{align*}
  a_j(t) = \sqrt {\lambda_j + \beta_j^2(t)} \leq \sqrt{\lambda_j + C \lambda_j} \leq C \lambda_j^{1/2}.
\end{align*}

\paragraph{The case $D \geq 1$}
For the signal component where $\abs{\theta_j^*} \geq 2C \epsilon \ln (1/\epsilon)$,
we still have we have $\abs{z_j} \geq \frac{1}{2}\abs{\theta_j^*}$ with high probability.
Now, from the analysis of the signal term in \cref{subsec:ProofOpDeep}, we have $t \geq 2 T^{\mr{sig}}_j$.
Moreover, investigating the proof of \cref{lem:MultiLayerEq_SignalCase}, we see that the analysis in \cref{subsec:ProofOpDeep}
actually shows that
\begin{align*}
  \abs{\beta_j(t)} \geq c \abs{z_j}^{\frac{1}{D+2}} \geq c \abs{\theta_j^*}^{\frac{1}{D+2}}.
\end{align*}
Consequently, \cref{eq:MultiLayerEq_Conservation} implies that
\begin{align*}
  a_j(t) b_j^D(t) \geq \abs{\beta_j(t)}^{D+1} \geq c \abs{\theta_j^*}^{\frac{D+1}{D+2}}.
\end{align*}

For the noise component where $\abs{\theta_j^*} \leq \epsilon$, we also have $\abs{z_j} \leq (C+1) \epsilon$ with high probability.
Now, we have
\begin{align*}
  \abs{z_j} b_0^D t \leq (C+1) \epsilon b_0^D t \leq C_0,
\end{align*}
for some constant $C_0$.
Following the similar analysis of the noise term in \cref{subsec:ProofOpDeep}, 
we can choose $j \geq C \epsilon^{-\frac{2}{D+2}\frac{1}{\gamma}}$ such that
\begin{align*}
   1 + \ln \frac{b_0}{ \lambda_j^{1/2} \sqrt {D}} \geq C_0.
\end{align*}
Now, this condition guarantees that $t \leq \underline{T}^{(1,2)}$ defined in \cref{lem:MultiLayerEq_NoiseCase},
so \cref{lem:MultiLayerEq_NoiseCase} gives
\begin{align*}
  \abs{\beta(t)} &\leq \lambda_j^{\hf} \exp( C b_0^D \abs{z_j} (t- \underline{T}^{(1)})^+ )
  \leq \lambda_j^{\hf} \exp(C b_0^D \abs{z_j} t) \leq C \lambda_j^{\hf}.
\end{align*}
Combining it with the upper bound in \cref{eq:MultiLayerEq_Conservation} and noticing $t \leq \underline{T}^{(1,2)}$ yield
\begin{align*}
  a_j(t) b_j^D(t) \leq 2^{\frac{D+1}{2}} \abs{\beta_j(t)} b_0^D \leq C \epsilon^{\frac{D}{D+2}} \lambda_j^{\hf}.
\end{align*}
  \clearpage

  \section{Auxiliary results}

%

\begin{lemma}
  \label{lem:SubGaussianTailBound}
  Suppose $X$ is $\sigma^2$-sub-Gaussian, namely,
  $\bbP \left\{ \abs{X} \geq t \right\} \leq 2 \exp(-\frac{1}{2\sigma^2}t)$ for $t\geq 0$.
  Then for $M \geq 0$, we have the tail bound
  \begin{align}
    \E X^2 \bm{1}\left\{ \abs{X} \geq M \right\}
    \leq 4 \sigma^2 \exp(-\frac{1}{4 \sigma^2}M^2).
  \end{align}
\end{lemma}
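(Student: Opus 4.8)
The plan is to treat $\E\big[X^2\bm{1}\{\abs{X}\ge M\}\big]$ as a truncated second moment and reduce it to the tail bound via a layer-cake identity. First I would record, by Fubini (equivalently, integration by parts against the law of $X^2$), that for any $M\ge 0$
\begin{align*}
  \E\big[X^2\bm{1}\{\abs{X}\ge M\}\big] = M^2\,\bbP(\abs{X}\ge M) + \int_M^\infty 2t\,\bbP(\abs{X}\ge t)\,\dd t ,
\end{align*}
which makes the dependence on the tail completely explicit. (I would read the hypothesis in its standard sub-Gaussian form $\bbP\{\abs{X}\ge t\}\le 2\exp(-t^2/(2\sigma^2))$, since a Gaussian-type $M^2$ in the exponent of the conclusion cannot be produced by a purely exponential tail.)

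Next I would insert this tail into the identity. The integral becomes the elementary Gaussian integral $\int_M^\infty 4t\exp(-t^2/(2\sigma^2))\,\dd t = 4\sigma^2\exp(-M^2/(2\sigma^2))$, and the boundary term becomes at most $2M^2\exp(-M^2/(2\sigma^2))$. The only remaining point is to absorb the polynomial prefactor $M^2$ into the exponent: splitting $\exp(-M^2/(2\sigma^2)) = \exp(-M^2/(4\sigma^2))\exp(-M^2/(4\sigma^2))$ and using $\sup_{x\ge 0} x^2\exp(-x^2/(4\sigma^2)) = 4\sigma^2/e$ bounds $M^2\exp(-M^2/(2\sigma^2))$ by a constant multiple of $\sigma^2\exp(-M^2/(4\sigma^2))$, which delivers a bound of the claimed shape $C\sigma^2\exp(-M^2/(4\sigma^2))$.

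An equivalent and even shorter route is a Chernoff-type estimate: since $X^2\bm{1}\{\abs{X}\ge M\}\le X^2\exp((X^2-M^2)/(4\sigma^2))$ pointwise, taking expectations gives $\E\big[X^2\bm{1}\{\abs{X}\ge M\}\big]\le \exp(-M^2/(4\sigma^2))\,\E\big[X^2\exp(X^2/(4\sigma^2))\big]$, and the last expectation is a finite multiple of $\sigma^2$ by one more layer-cake computation, finite precisely because the rate $1/(4\sigma^2)$ is strictly below the tail rate $1/(2\sigma^2)$. I expect the only genuinely fiddly step to be pinning the numerical constant down to exactly $4\sigma^2$ rather than the slightly larger value the crude estimates above give; this would require either tighter bookkeeping of the two integrals or, if $X$ is in fact Gaussian, a direct use of the Gaussian moment generating function. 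For the way this lemma is used elsewhere in the paper, only the form $C\sigma^2\exp(-M^2/(4\sigma^2))$ is needed.
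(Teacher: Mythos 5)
Your proof is correct and takes essentially the same route as the paper's: a layer-cake (Fubini) representation of the truncated second moment, insertion of the Gaussian tail, an explicit Gaussian integral for the tail piece, and absorption of the boundary term's polynomial prefactor into the exponent by giving up a factor of two in the exponential rate. You are also right to read the hypothesis as $\bbP\{\abs{X}\ge t\}\le 2\exp(-t^2/(2\sigma^2))$; the lemma as printed has $t$ rather than $t^2$ in the exponent, which is a typo that the paper's own proof silently corrects. Your caution about the numerical constant is well placed: carrying the factor $2$ from the tail bound through the layer-cake computation gives the intermediate bound $(2M^2+4\sigma^2)\exp(-M^2/(2\sigma^2))$, whereas the paper writes $(M^2+2\sigma^2)\exp(-M^2/(2\sigma^2))$ (as if the tail had no prefactor $2$), and it is this smaller quantity that cleanly yields $4\sigma^2\exp(-M^2/(4\sigma^2))$. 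With the prefactor $2$ kept, the sharp constant is $2e^{-1/2}\cdot 4\sigma^2\approx 4.85\sigma^2$, so the displayed $4\sigma^2$ is slightly too small; as you note, only the form $C\sigma^2\exp(-M^2/(4\sigma^2))$ is used downstream, so this is harmless. Your alternative Chernoff-style argument ($X^2\bm{1}\{\abs{X}\ge M\}\le X^2\exp((X^2-M^2)/(4\sigma^2))$) is a valid second route to the same form and is not in the paper, but it buys nothing extra here since the rate loss to $1/(4\sigma^2)$ is the same.
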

\begin{proof}
  Using integration by parts, we have
  \begin{align*}
    \E X^2 \bm{1}\left\{ X \geq M \right\}
    &= 2 \int_{0}^{\infty} r \bbP\left\{ \abs{X} \geq \max(M,r) \right\} \dd r \\
    &\leq 4 \int_{0}^{\infty} r \exp(-\frac{1}{2\sigma^2}\max(M^2,r^2)) \dd r \\
    & =  \left(M^2 + 2 \sigma^2\right) \exp(-\frac{M^2}{2 \sigma^2}) \\
    & \leq 4 \sigma^2 \exp(-\frac{1}{4 \sigma^2}M^2).
  \end{align*}
\end{proof}

\begin{lemma}
  \label{lem:PolyDecayPhiPsi}
  Suppose that $(\theta_j)_{j \geq 1}$ satisfies $\abs{\theta_{l(j)}} \asymp j^{-(p+1)/2}$ for some $p > 0$ and
  $\abs{\theta_{j}} = 0$ otherwise,
  where $l(j)$ is a sequence of indices.
  Defining $\Phi(\delta)$ and $\Psi(\delta)$ as in \cref{eq:PhiPsi}, we have
  \begin{align*}
    \Phi(\delta) \asymp \delta^{-\frac{2}{p+1}}, \qquad \Psi(\delta)\asymp \delta^{\frac{2p}{p+1}}.
  \end{align*}
\end{lemma}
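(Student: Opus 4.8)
The plan is a direct computation from the defining relation $\abs{\theta_{l(j)}} \asymp j^{-(p+1)/2}$. Fix constants $0 < c \le C$ with $c\, j^{-(p+1)/2} \le \abs{\theta_{l(j)}} \le C\, j^{-(p+1)/2}$ for all $j \ge 1$, and recall that $l$ is a reindexing (injective), so $\theta_m = 0$ for every index $m$ outside the range of $l$. All $\asymp$-statements below are understood as $\delta \to 0$, which is the regime used in the applications.

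First I would handle $\Phi(\delta)$. Since $\theta_m = 0 < \delta$ for $m$ outside the range of $l$, we have $J_{\mr{sig}}(\delta) = \{\, l(j) : \abs{\theta_{l(j)}} \ge \delta \,\}$ and hence $\Phi(\delta) = \#\{\, j \ge 1 : \abs{\theta_{l(j)}} \ge \delta \,\}$. The upper bound $\abs{\theta_{l(j)}} \le C j^{-(p+1)/2}$ gives $\abs{\theta_{l(j)}} \ge \delta \Rightarrow j \le (C/\delta)^{2/(p+1)}$, so $\Phi(\delta) \le (C/\delta)^{2/(p+1)} \asymp \delta^{-2/(p+1)}$; the lower bound $\abs{\theta_{l(j)}} \ge c j^{-(p+1)/2}$ shows every $j \le (c/\delta)^{2/(p+1)}$ lies in the set, so for $\delta$ small enough $\Phi(\delta) \ge \lfloor (c/\delta)^{2/(p+1)} \rfloor \gtrsim \delta^{-2/(p+1)}$. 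Together these give $\Phi(\delta) \asymp \delta^{-2/(p+1)}$.

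Next I would handle $\Psi(\delta) = \sum_{m \notin J_{\mr{sig}}(\delta)} \theta_m^2 = \sum_{j \,:\, \abs{\theta_{l(j)}} < \delta} \theta_{l(j)}^2$, the zero terms contributing nothing. Writing $N_1 = (c/\delta)^{2/(p+1)}$ and $N_2 = (C/\delta)^{2/(p+1)}$, the two-sided bound on $\abs{\theta_{l(j)}}$ yields the inclusions $\{ j > N_2 \} \subseteq \{\, j : \abs{\theta_{l(j)}} < \delta \,\} \subseteq \{ j > N_1 \}$, hence
\[
  \sum_{j > N_2} \theta_{l(j)}^2 \;\le\; \Psi(\delta) \;\le\; \sum_{j > N_1} \theta_{l(j)}^2 .
\]
Using $\theta_{l(j)}^2 \asymp j^{-(p+1)}$ and the elementary comparison $\sum_{j > N} j^{-(p+1)} \asymp \int_N^\infty x^{-(p+1)}\,dx = N^{-p}/p$ (valid since $p+1 > 1$), both sides are $\asymp N_i^{-p} \asymp \delta^{2p/(p+1)}$, so $\Psi(\delta) \asymp \delta^{2p/(p+1)}$. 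In particular $\sum_j \theta_{l(j)}^2 \asymp \sum_j j^{-(p+1)} < \infty$, consistent with $\bm{\theta}^* \in \ell^2$.

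There is no genuine obstacle here; the only points requiring care are keeping the two different $\asymp$-constants straight (which is why $N_1 \ne N_2$, though the resulting rates coincide), restricting the $\Phi$ lower bound to $\delta$ small enough that the floor is nonzero, and replacing the tail sums by the integral comparison. This argument is exactly the one alluded to after \eqref{eq:PhiPsi} for the special case \eqref{eq:GappedDecay}, now stated without assuming any growth rate on $l(j)$.
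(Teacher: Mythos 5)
Your proof is correct and follows essentially the same approach as the paper's: bound $\Phi$ by counting indices $j$ with $c\,j^{-(p+1)/2}\ge\delta$ (resp.\ $C\,j^{-(p+1)/2}\ge\delta$), then bound $\Psi$ as a tail sum $\sum_{j>N}j^{-(p+1)}\asymp N^{-p}$. The paper first makes the (equivalent) observation that $\Phi,\Psi$ depend only on the multiset of values so one may take $l(j)=j$, whereas you keep $l$ explicit and track the two thresholds $N_1,N_2$ separately; your handling of the sandwich inclusions is slightly more careful than the paper's terse ``$\Psi(\delta)=\sum_{j>\Phi(\delta)}\theta_j^2$'' step, but the argument is identical in substance.
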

\begin{proof}
  First, from the definition of $\Phi(\delta)$ and $\Psi(\delta)$, we see that they do not depend on ordering of the indices
  and zero values of $\theta_j$.
  Therefore, we can assume that $l(j) = j$.
  Then, assuming that $c_1 j^{-(p+1)/2}\leq \abs{\theta_{j}} \leq C_1 j^{-(p+1)/2}$,
  we have
  \begin{align*}
    \Phi(\delta) = \abs{\dk{j : \abs{\theta_{j}} \geq \delta}}
    \leq  \abs{\dk{j : C_1 j^{-(p+1)/2} \geq \delta}} \leq (\delta/C_1)^{-\frac{2}{p+1}}.
  \end{align*}
  Moreover,
  \begin{align*}
    \Psi(\delta) &= \sum_{j=1}^{\infty} \abs{\theta_j}^2 \bm{1}\left\{ \abs{\theta_j} < \delta \right\} \\
    &= \sum_{j > \Phi(\delta)} \abs{\theta_j}^2 \leq C_1^2 \sum_{j > \Phi(\delta)} j^{-(p+1)} \\
    & \leq C_1^2 C \Phi(\delta)^{-p} \leq C' \delta^{\frac{2p}{p+1}}
  \end{align*}
  for some constant $C' > 0$.
  The lower bound of them can be obtained similarly.
\end{proof}

  \clearpage
  \section*{NeurIPS Paper Checklist}
\addcontentsline{toc}{section}{NeurIPS Paper Checklist}

\begin{enumerate}

  \item {\bf Claims}
  \item[] Question: Do the main claims made in the abstract and introduction accurately reflect the paper's contributions and scope?
  \item[] Answer: \answerYes{} 
  \item[] Justification: The main claims are supported by our main theorems as well as the numerical experiments.
  \item[] Guidelines:
  \begin{itemize}
    \item The answer NA means that the abstract and introduction do not include the claims made in the paper.
    \item The abstract and/or introduction should clearly state the claims made, including the contributions made in the paper and important assumptions and limitations. A No or NA answer to this question will not be perceived well by the reviewers.
    \item The claims made should match theoretical and experimental results, and reflect how much the results can be expected to generalize to other settings.
    \item It is fine to include aspirational goals as motivation as long as it is clear that these goals are not attained by the paper.
  \end{itemize}

  \item {\bf Limitations}
  \item[] Question: Does the paper discuss the limitations of the work performed by the authors?
  \item[] Answer: \answerYes{} 
  \item[] Justification: We have discussed the limitations in the last section. The assumptions are also explained and justified.
  \item[] Guidelines:
  \begin{itemize}
    \item The answer NA means that the paper has no limitation while the answer No means that the paper has limitations, but those are not discussed in the paper.
    \item The authors are encouraged to create a separate "Limitations" section in their paper.
    \item The paper should point out any strong assumptions and how robust the results are to violations of these assumptions (e.g., independence assumptions, noiseless settings, model well-specification, asymptotic approximations only holding locally). The authors should reflect on how these assumptions might be violated in practice and what the implications would be.
    \item The authors should reflect on the scope of the claims made, e.g., if the approach was only tested on a few datasets or with a few runs. In general, empirical results often depend on implicit assumptions, which should be articulated.
    \item The authors should reflect on the factors that influence the performance of the approach. For example, a facial recognition algorithm may perform poorly when image resolution is low or images are taken in low lighting. Or a speech-to-text system might not be used reliably to provide closed captions for online lectures because it fails to handle technical jargon.
    \item The authors should discuss the computational efficiency of the proposed algorithms and how they scale with dataset size.
    \item If applicable, the authors should discuss possible limitations of their approach to address problems of privacy and fairness.
    \item While the authors might fear that complete honesty about limitations might be used by reviewers as grounds for rejection, a worse outcome might be that reviewers discover limitations that aren't acknowledged in the paper. The authors should use their best judgment and recognize that individual actions in favor of transparency play an important role in developing norms that preserve the integrity of the community. Reviewers will be specifically instructed to not penalize honesty concerning limitations.
  \end{itemize}

  \item {\bf Theory Assumptions and Proofs}
  \item[] Question: For each theoretical result, does the paper provide the full set of assumptions and a complete (and correct) proof?
  \item[] Answer: \answerYes{} 
  \item[] Justification: The proofs are provided in the appendix.
  \item[] Guidelines:
  \begin{itemize}
    \item The answer NA means that the paper does not include theoretical results.
    \item All the theorems, formulas, and proofs in the paper should be numbered and cross-referenced.
    \item All assumptions should be clearly stated or referenced in the statement of any theorems.
    \item The proofs can either appear in the main paper or the supplemental material, but if they appear in the supplemental material, the authors are encouraged to provide a short proof sketch to provide intuition.
    \item Inversely, any informal proof provided in the core of the paper should be complemented by formal proofs provided in appendix or supplemental material.
    \item Theorems and Lemmas that the proof relies upon should be properly referenced.
  \end{itemize}

  \item {\bf Experimental Result Reproducibility}
  \item[] Question: Does the paper fully disclose all the information needed to reproduce the main experimental results of the paper to the extent that it affects the main claims and/or conclusions of the paper (regardless of whether the code and data are provided or not)?
  \item[] Answer: \answerYes{} 
  \item[] Justification: The related codes are provided in the supplementary material.
  \item[] Guidelines:
  \begin{itemize}
    \item The answer NA means that the paper does not include experiments.
    \item If the paper includes experiments, a No answer to this question will not be perceived well by the reviewers: Making the paper reproducible is important, regardless of whether the code and data are provided or not.
    \item If the contribution is a dataset and/or model, the authors should describe the steps taken to make their results reproducible or verifiable.
    \item Depending on the contribution, reproducibility can be accomplished in various ways. For example, if the contribution is a novel architecture, describing the architecture fully might suffice, or if the contribution is a specific model and empirical evaluation, it may be necessary to either make it possible for others to replicate the model with the same dataset, or provide access to the model. In general. releasing code and data is often one good way to accomplish this, but reproducibility can also be provided via detailed instructions for how to replicate the results, access to a hosted model (e.g., in the case of a large language model), releasing of a model checkpoint, or other means that are appropriate to the research performed.
    \item While NeurIPS does not require releasing code, the conference does require all submissions to provide some reasonable avenue for reproducibility, which may depend on the nature of the contribution. For example
    \begin{enumerate}
      \item If the contribution is primarily a new algorithm, the paper should make it clear how to reproduce that algorithm.
      \item If the contribution is primarily a new model architecture, the paper should describe the architecture clearly and fully.
      \item If the contribution is a new model (e.g., a large language model), then there should either be a way to access this model for reproducing the results or a way to reproduce the model (e.g., with an open-source dataset or instructions for how to construct the dataset).
      \item We recognize that reproducibility may be tricky in some cases, in which case authors are welcome to describe the particular way they provide for reproducibility. In the case of closed-source models, it may be that access to the model is limited in some way (e.g., to registered users), but it should be possible for other researchers to have some path to reproducing or verifying the results.
    \end{enumerate}
  \end{itemize}

  \item {\bf Open access to data and code}
  \item[] Question: Does the paper provide open access to the data and code, with sufficient instructions to faithfully reproduce the main experimental results, as described in supplemental material?
  \item[] Answer: \answerYes{} 
  \item[] Justification: The related codes are provided in the supplementary material.
  \item[] Guidelines:
  \begin{itemize}
    \item The answer NA means that paper does not include experiments requiring code.
    \item Please see the NeurIPS code and data submission guidelines (\url{https://nips.cc/public/guides/CodeSubmissionPolicy}) for more details.
    \item While we encourage the release of code and data, we understand that this might not be possible, so “No” is an acceptable answer. Papers cannot be rejected simply for not including code, unless this is central to the contribution (e.g., for a new open-source benchmark).
    \item The instructions should contain the exact command and environment needed to run to reproduce the results. See the NeurIPS code and data submission guidelines (\url{https://nips.cc/public/guides/CodeSubmissionPolicy}) for more details.
    \item The authors should provide instructions on data access and preparation, including how to access the raw data, preprocessed data, intermediate data, and generated data, etc.
    \item The authors should provide scripts to reproduce all experimental results for the new proposed method and baselines. If only a subset of experiments are reproducible, they should state which ones are omitted from the script and why.
    \item At submission time, to preserve anonymity, the authors should release anonymized versions (if applicable).
    \item Providing as much information as possible in supplemental material (appended to the paper) is recommended, but including URLs to data and code is permitted.
  \end{itemize}

  \item {\bf Experimental Setting/Details}
  \item[] Question: Does the paper specify all the training and test details (e.g., data splits, hyperparameters, how they were chosen, type of optimizer, etc.) necessary to understand the results?
  \item[] Answer: \answerYes{} 
  \item[] Justification: The related codes are provided in the supplementary material.
  \item[] Guidelines:
  \begin{itemize}
    \item The answer NA means that the paper does not include experiments.
    \item The experimental setting should be presented in the core of the paper to a level of detail that is necessary to appreciate the results and make sense of them.
    \item The full details can be provided either with the code, in appendix, or as supplemental material.
  \end{itemize}

  \item {\bf Experiment Statistical Significance}
  \item[] Question: Does the paper report error bars suitably and correctly defined or other appropriate information about the statistical significance of the experiments?
  \item[] Answer: \answerYes{} 
  \item[] Justification: Error bars are reported in \cref{fig:Comparison}.
  \item[] Guidelines:
  \begin{itemize}
    \item The answer NA means that the paper does not include experiments.
    \item The authors should answer "Yes" if the results are accompanied by error bars, confidence intervals, or statistical significance tests, at least for the experiments that support the main claims of the paper.
    \item The factors of variability that the error bars are capturing should be clearly stated (for example, train/test split, initialization, random drawing of some parameter, or overall run with given experimental conditions).
    \item The method for calculating the error bars should be explained (closed form formula, call to a library function, bootstrap, etc.)
    \item The assumptions made should be given (e.g., Normally distributed errors).
    \item It should be clear whether the error bar is the standard deviation or the standard error of the mean.
    \item It is OK to report 1-sigma error bars, but one should state it. The authors should preferably report a 2-sigma error bar than state that they have a 96\% CI, if the hypothesis of Normality of errors is not verified.
    \item For asymmetric distributions, the authors should be careful not to show in tables or figures symmetric error bars that would yield results that are out of range (e.g. negative error rates).
    \item If error bars are reported in tables or plots, The authors should explain in the text how they were calculated and reference the corresponding figures or tables in the text.
  \end{itemize}

  \item {\bf Experiments Compute Resources}
  \item[] Question: For each experiment, does the paper provide sufficient information on the computer resources (type of compute workers, memory, time of execution) needed to reproduce the experiments?
  \item[] Answer: \answerYes{} 
  \item[] Justification: The experiments can be done by a 64 CPU core laptop with 32 GB memory in one day.
  \item[] Guidelines:
  \begin{itemize}
    \item The answer NA means that the paper does not include experiments.
    \item The paper should indicate the type of compute workers CPU or GPU, internal cluster, or cloud provider, including relevant memory and storage.
    \item The paper should provide the amount of compute required for each of the individual experimental runs as well as estimate the total compute.
    \item The paper should disclose whether the full research project required more compute than the experiments reported in the paper (e.g., preliminary or failed experiments that didn't make it into the paper).
  \end{itemize}

  \item {\bf Code Of Ethics}
  \item[] Question: Does the research conducted in the paper conform, in every respect, with the NeurIPS Code of Ethics \url{https://neurips.cc/public/EthicsGuidelines}?
  \item[] Answer: \answerYes{} 
  \item[] Justification: We follow the NeurIPS Code of Ethics.
  \item[] Guidelines:
  \begin{itemize}
    \item The answer NA means that the authors have not reviewed the NeurIPS Code of Ethics.
    \item If the authors answer No, they should explain the special circumstances that require a deviation from the Code of Ethics.
    \item The authors should make sure to preserve anonymity (e.g., if there is a special consideration due to laws or regulations in their jurisdiction).
  \end{itemize}

  \item {\bf Broader Impacts}
  \item[] Question: Does the paper discuss both potential positive societal impacts and negative societal impacts of the work performed?
  \item[] Answer: \answerNA{} 
  \item[] Justification: It is mainly a theory paper, so there is no societal impact of the work
  \item[] Guidelines:
  \begin{itemize}
    \item The answer NA means that there is no societal impact of the work performed.
    \item If the authors answer NA or No, they should explain why their work has no societal impact or why the paper does not address societal impact.
    \item Examples of negative societal impacts include potential malicious or unintended uses (e.g., disinformation, generating fake profiles, surveillance), fairness considerations (e.g., deployment of technologies that could make decisions that unfairly impact specific groups), privacy considerations, and security considerations.
    \item The conference expects that many papers will be foundational research and not tied to particular applications, let alone deployments. However, if there is a direct path to any negative applications, the authors should point it out. For example, it is legitimate to point out that an improvement in the quality of generative models could be used to generate deepfakes for disinformation. On the other hand, it is not needed to point out that a generic algorithm for optimizing neural networks could enable people to train models that generate Deepfakes faster.
    \item The authors should consider possible harms that could arise when the technology is being used as intended and functioning correctly, harms that could arise when the technology is being used as intended but gives incorrect results, and harms following from (intentional or unintentional) misuse of the technology.
    \item If there are negative societal impacts, the authors could also discuss possible mitigation strategies (e.g., gated release of models, providing defenses in addition to attacks, mechanisms for monitoring misuse, mechanisms to monitor how a system learns from feedback over time, improving the efficiency and accessibility of ML).
  \end{itemize}

  \item {\bf Safeguards}
  \item[] Question: Does the paper describe safeguards that have been put in place for responsible release of data or models that have a high risk for misuse (e.g., pretrained language models, image generators, or scraped datasets)?
  \item[] Answer: \answerNA{} 
  \item[] Justification: It is mainly a theory paper.
  \item[] Guidelines:
  \begin{itemize}
    \item The answer NA means that the paper poses no such risks.
    \item Released models that have a high risk for misuse or dual-use should be released with necessary safeguards to allow for controlled use of the model, for example by requiring that users adhere to usage guidelines or restrictions to access the model or implementing safety filters.
    \item Datasets that have been scraped from the Internet could pose safety risks. The authors should describe how they avoided releasing unsafe images.
    \item We recognize that providing effective safeguards is challenging, and many papers do not require this, but we encourage authors to take this into account and make a best faith effort.
  \end{itemize}

  \item {\bf Licenses for existing assets}
  \item[] Question: Are the creators or original owners of assets (e.g., code, data, models), used in the paper, properly credited and are the license and terms of use explicitly mentioned and properly respected?
  \item[] Answer: \answerNA{} 
  \item[] Justification: The paper does not use existing assets.
  \item[] Guidelines:
  \begin{itemize}
    \item The answer NA means that the paper does not use existing assets.
    \item The authors should cite the original paper that produced the code package or dataset.
    \item The authors should state which version of the asset is used and, if possible, include a URL.
    \item The name of the license (e.g., CC-BY 4.0) should be included for each asset.
    \item For scraped data from a particular source (e.g., website), the copyright and terms of service of that source should be provided.
    \item If assets are released, the license, copyright information, and terms of use in the package should be provided. For popular datasets, \url{paperswithcode.com/datasets} has curated licenses for some datasets. Their licensing guide can help determine the license of a dataset.
    \item For existing datasets that are re-packaged, both the original license and the license of the derived asset (if it has changed) should be provided.
    \item If this information is not available online, the authors are encouraged to reach out to the asset's creators.
  \end{itemize}

  \item {\bf New Assets}
  \item[] Question: Are new assets introduced in the paper well documented and is the documentation provided alongside the assets?
  \item[] Answer: \answerNA{} 
  \item[] Justification: The paper does not release new assets.
  \item[] Guidelines:
  \begin{itemize}
    \item The answer NA means that the paper does not release new assets.
    \item Researchers should communicate the details of the dataset/code/model as part of their submissions via structured templates. This includes details about training, license, limitations, etc.
    \item The paper should discuss whether and how consent was obtained from people whose asset is used.
    \item At submission time, remember to anonymize your assets (if applicable). You can either create an anonymized URL or include an anonymized zip file.
  \end{itemize}

  \item {\bf Crowdsourcing and Research with Human Subjects}
  \item[] Question: For crowdsourcing experiments and research with human subjects, does the paper include the full text of instructions given to participants and screenshots, if applicable, as well as details about compensation (if any)?
  \item[] Answer: \answerNA{} 
  \item[] Justification: The paper does not involve crowdsourcing nor research with human subjects.
  \item[] Guidelines:
  \begin{itemize}
    \item The answer NA means that the paper does not involve crowdsourcing nor research with human subjects.
    \item Including this information in the supplemental material is fine, but if the main contribution of the paper involves human subjects, then as much detail as possible should be included in the main paper.
    \item According to the NeurIPS Code of Ethics, workers involved in data collection, curation, or other labor should be paid at least the minimum wage in the country of the data collector.
  \end{itemize}

  \item {\bf Institutional Review Board (IRB) Approvals or Equivalent for Research with Human Subjects}
  \item[] Question: Does the paper describe potential risks incurred by study participants, whether such risks were disclosed to the subjects, and whether Institutional Review Board (IRB) approvals (or an equivalent approval/review based on the requirements of your country or institution) were obtained?
  \item[] Answer: \answerNA{} 
  \item[] Justification: The paper does not involve crowdsourcing nor research with human subjects.
  \item[] Guidelines:
  \begin{itemize}
    \item The answer NA means that the paper does not involve crowdsourcing nor research with human subjects.
    \item Depending on the country in which research is conducted, IRB approval (or equivalent) may be required for any human subjects research. If you obtained IRB approval, you should clearly state this in the paper.
    \item We recognize that the procedures for this may vary significantly between institutions and locations, and we expect authors to adhere to the NeurIPS Code of Ethics and the guidelines for their institution.
    \item For initial submissions, do not include any information that would break anonymity (if applicable), such as the institution conducting the review.
  \end{itemize}

\end{enumerate}

\end{document}